\theoremstyle{plain}
\newtheorem{theorem}{Theorem}[section]
\newtheorem{lemma}[theorem]{Lemma}
\theoremstyle{definition}
\newtheorem{definition}[theorem]{Definition}
\theoremstyle{remark}
\newcommand{\M}{WaveGC}
\newtheorem{Definition}{\textbf{Definition}}
\icmltitlerunning{A General Graph Spectral Wavelet Convolution via Chebyshev Order Decomposition}
\begin{document}

\twocolumn[
\icmltitle{A General Graph Spectral Wavelet Convolution via \\Chebyshev Order Decomposition}

% It is OKAY to include author information, even for blind
% submissions: the style file will automatically remove it for you
% unless you've provided the [accepted] option to the icml2025
% package.

% List of affiliations: The first argument should be a (short)
% identifier you will use later to specify author affiliations
% Academic affiliations should list Department, University, City, Region, Country
% Industry affiliations should list Company, City, Region, Country

% You can specify symbols, otherwise they are numbered in order.
% Ideally, you should not use this facility. Affiliations will be numbered
% in order of appearance and this is the preferred way.
\icmlsetsymbol{equal}{*}

\begin{icmlauthorlist}
\icmlauthor{Nian Liu}{yyy}
\icmlauthor{Xiaoxin He}{yyy}
\icmlauthor{Thomas Laurent}{comp}
\icmlauthor{Francesco Di Giovanni}{sch}
\icmlauthor{Michael M. Bronstein}{sch,qqq}
\icmlauthor{Xavier Bresson}{yyy}
%\icmlauthor{}{sch}
%\icmlauthor{}{sch}
\end{icmlauthorlist}

\icmlaffiliation{yyy}{National University of Singapore}
\icmlaffiliation{comp}{Loyola Marymount University}
\icmlaffiliation{sch}{University of Oxford}
\icmlaffiliation{qqq}{AITHYRA, Austria}

\icmlcorrespondingauthor{Nian Liu}{nianliu@comp.nus.edu.sg}

% You may provide any keywords that you
% find helpful for describing your paper; these are used to populate
% the "keywords" metadata in the PDF but will not be shown in the document
\icmlkeywords{Machine Learning, ICML}

\vskip 0.3in
]

% this must go after the closing bracket ] following \twocolumn[ ...

% This command actually creates the footnote in the first column
% listing the affiliations and the copyright notice.
% The command takes one argument, which is text to display at the start of the footnote.
% The \icmlEqualContribution command is standard text for equal contribution.
% Remove it (just {}) if you do not need this facility.

%\printAffiliationsAndNotice{}  % leave blank if no need to mention equal contribution
\printAffiliationsAndNotice{} % otherwise use the standard text.

\begin{abstract}
% XB v2
Spectral graph convolution, an important tool of data filtering on graphs, relies on two essential decisions: selecting spectral bases for signal transformation and parameterizing the kernel for frequency analysis. While recent techniques mainly focus on standard Fourier transform and vector-valued spectral functions, they fall short in flexibility to model signal distributions over large spatial ranges, and capacity of spectral function.
In this paper, we present a novel wavelet-based graph convolution network, namely \M, which integrates multi-resolution spectral bases and a matrix-valued filter kernel. Theoretically, we establish that \M\ can effectively capture and decouple short-range and long-range information, providing superior filtering flexibility, surpassing existing graph wavelet neural networks. 
To instantiate \M, we introduce a novel technique for learning general graph wavelets by separately combining odd and even terms of Chebyshev polynomials. This approach strictly satisfies wavelet  admissibility criteria. Our numerical experiments showcase the consistent improvements in both short-range and long-range tasks. This underscores the effectiveness of the proposed model in handling different scenarios.  Our code is available at \url{https://github.com/liun-online/WaveGC}.
\end{abstract}
\section{Introduction}
%XB
Spectral graph theory (SGT)~\cite{sgt}, which enables analysis and learning on graph data, has firmly established itself as a pivotal methodology in graph machine learning. A significant milestone in SGT is the generalization of the convolution operation to graphs, as convolution for grid-structured data, i.e. sequences and images, has demonstrated remarkable success \cite{lecun1998gradient,hinton2012deep, krizhevsky2012imagenet}.
Significant research interests in graph convolution revolve around two key factors: (1) \textit{designing diverse bases for spectral transform}, and (2) \textit{parameterizing powerful graph kernel}. For (1), the commonly used graph Fourier basis, consisting of the eigenvectors of the graph Laplacian~\cite{shuman2013emerging}, stands as a prevalent choice. However, graph wavelets~\cite{gw} offer enhanced flexibility by constructing adaptable bases. For (2), classic approaches involve diagonalizing the kernel with fully free parameters~\cite{bruna2013spectral} or employing various polynomial approximations such as Chebyshev~\cite{chebnet} and Cayley~\cite{levie2018cayleynets} polynomials. Additionally, convolution with a tensor-valued kernel serves as the spectral function of Transformer~\cite{va_trans} under the shift-invariant condition~\cite{fno,afno}.
%NL
% Spectral graph theory (SGT)~\cite{sgt}, facilitating analysis and learning on graph data, has solidified its status as a pivotal methodology in graph machine learning. 
% One milestone in SGT is to generalize convolution operation to graphs, whose effectiveness has been proved on grid-structured data~\cite{hinton2012deep, krizhevsky2012imagenet}. 
% % Spectral graph convolution typically entails three principal stages: transforming graph signals into spectral domain, filtering signals with graph kernel and inverting the transform. 
% Significant research interest of graph convolution
% % has evolved significantly over time, where significant research interest 
% is on two key factors: (1) \textit{Designing diverse bases for spectral transform.} Generally, graph Fourier basis stands as a predominant choice, defined as the eigenvectors of graph Laplacian~\cite{shuman2013emerging}. One more expressive basis is graph wavelets~\cite{gw}, enabling localization in both spectral and spatial domain through adaptable bases construction. (2) \textit{Parameterizing powerful graph kernel.} One classical way is to diagonalize the kernel with totally free parameters~\cite{bruna2013spectral} or various polynomial approximations, e.g., Chebyshev~\cite{chebnet} and Cayley~\cite{levie2018cayleynets} polynomials. Moreover, convolution with a matrix-valued kernel is the spectral function of Transformer~\cite{va_trans} under shift-invariant condition~\cite{fno,afno}. 

% XB
Despite the existence of techniques in each aspect, the integration of these two lines into a unified framework remains challenging, impeding the full potential of graph convolution. In an effort to unravel this challenge, we introduce a novel operation — \textbf{Wave}let-based \textbf{G}raph \textbf{C}onvolution (\M), which seamlessly incorporates both spectral basis and kernel considerations.
In terms of spectral basis design, \M\ is built upon graph wavelets, allowing it to capture information across the entire graph through a multi-resolution approach from highly adaptive construction of multiple graph wavelet bases. For filter parameterization, we opt for a matrix-valued spectral kernel with weight-sharing. The matrix-valued kernel offers greater flexibility to filter wavelet signals, thanks to its larger parameter space.

% NL
% Despite mature techniques existing in each aspect, integrating delicate designs from these two lines into a unified framework remains underexplored, which severely hampers the potential of graph convolution. To demystify the question, we introduce a novel \textbf{Wave}let based \textbf{G}raph \textbf{C}onvolution operation (\M), which simultaneously takes both basis and kernel into account. 
% Specifically, for designing spectral basis, our implement is based on graph wavelets. In this way, \M\ can capture information over the whole graph via a multi-resolution manner, with the highly adaptive construction of multiple graph wavelet bases. For parameterizing the filter, we employ the matrix-valued spectral kernel with weight-sharing. Beyond adjusting diagonal frequencies, matrix-valued kernel endows spectral filtering with more flexibility due to larger parameter space.

% XB
To comprehensively explore \M, we theoretically analyse and assess its information-capturing capabilities. In contrast to the K-hop basic message-passing framework, \M\ is demonstrated to exhibit both significantly larger and smaller receptive fields concurrently, achieved through the manipulation of scales. Previous graph wavelet theory~\cite{gw} only verifies the localization in small scale limit. Instead, our proof is complete as it covers both extremely small and large scales from the perspective of information mixing~\cite{how}.
Moreover, our proof also implies that \M\ is capable of simultaneously capturing both short-range and long-range information for each node, which facilitate global node interaction. 
% NL
% To comprehensively explore \M, we theoretically analyse its capability to capture information. Contrasted to one-hop message passing framework, a basic graph convolution, the \M\ is demonstrated to possess both much larger and smaller receptive fields simultaneously by manipulating scales. This implies that \M\ is capable of simultaneously capturing both short-range and long-range information for each node, akin to (graph) Transformers, which facilitate global node interaction. 
% Crucially, \M\ can distinguish information across varying distances, thereby enhancing flexibility beyond the scope of conventional Transformers.

% XB
To implement \M, a critical step lies in constructing graph wavelet bases that satisfy two fundamental criteria: (1) meeting the wavelet admissibility criteria~\cite{cw} and (2) showing adaptability to different graphs. Existing designs of graph wavelets face limitations, with some falling short in ensuring the criteria~\cite{gwnn, xu2022graph}, while others having fixed wavelet forms, lacking adaptability~\cite{ufg, cho2023multi}.
To address these limitations, we propose an innovative and general implementation of graph wavelets. Our solution involves \textit{approximating scaling function basis and multiple wavelet bases using odd and even terms of Chebyshev polynomials, respectively}. This approach is inspired by our observation that, after a certain transformation, 
% odd terms of Chebyshev polynomials correspond to low-pass filters, while even terms belong to band-pass filters. 
even terms of Chebyshev polynomials strictly satisfy the  admissibility  criteria, while odd terms supplement direct current signals.
Through the combination of these terms via learnable coefficients, we aim to theoretically approximate scaling function and multiple wavelets with arbitrary complexity and flexibility. Our contributions are:
\begin{itemize}
\setlength{\itemsep}{0pt}
\setlength{\parsep}{0pt}
\setlength{\parskip}{0pt}
    \item 
    We derive a new wavelet-based graph convolution (\M), which integrates multi-resolution bases and matrix-valued kernel, enhancing spectral convolution on large spatial ranges.
    \item We theoretically prove that \M\ can capture and distinguish the information from short and long ranges, surpassing conventional graph wavelet techniques.
    \item We pioneer an implementation of learnable graph wavelets, employing odd terms and even terms of Chebyshev polynomials individually. This implementation strictly satisfies the wavelet admissibility criteria.
    \item Our approach consistently outperforms baseline methods on both short-range and long-range tasks, achieving up to 15.7\% improvement on VOC dataset.    
\end{itemize}

% NL
% Our contributions are summarized as follows:
% \begin{itemize}
%     \item We introduce an innovative graph convolution based on wavelet, i.e., \M. By integrating multi-resolution spectral bases and matrix-valued convolution kernel, \M\ unlocks new potentials of graph convolution.   
    % \item We comprehensively examine the capacity of \M\ on capturing information. Grounded in our theoretical findings, \M\ can capture and distinguish the information from different distances, which surpasses conventional graph convolutions and GTs.
    % \item We are the first to propose a general implementation of learnable graph wavelet, employing odd terms and even terms of Chebyshev polynomials individually. This implementation concurrently satisfy the requirements on band-pass filter and high adaptability.
%     \item We choose three typical GTs as base models, and integrate \M\ into them. Our model consistently gain improvements against base methods, on both short-range and long-range tasks, up to 26.20\% on CoraFull and 9.21\% on VOC for pure Transformer.
% \end{itemize}
\section{Preliminaries}
\label{pre}
An undirected graph can be presented as $\mathcal{G}=(\mathcal{V}, E)$, where $\mathcal{V}$ is the set of $N$ nodes and $E\subseteq \mathcal{V}\times\mathcal{V}$ is the set of edges. The adjacency matrix of this graph is $\bm{A}\in\{0,1\}^{N\times N}$, where $\bm{A}_{ij}\in\{0,1\}$ denotes the relation between nodes $i$ and $j$ in $\mathcal{V}$. The degree matrix is $\bm{D}=\textrm{diag}(d_1,\dots.d_N)\in\mathbb{R}^{N\times N}$, where $d_i=\sum_{j\in \mathcal{V}}\bm{A}_{ij}$ is the degree of node $i \in \mathcal{V}$. The node feature matrix is $\bm{X}=[x_1, x_2,\dots, x_N]\in\mathbb{R}^{N\times d_0}$, where $x_i$ is a $d_0$ dimensional feature vector of node $i \in \mathcal{V}$. Let $\hat{\bm{A}}=\bm{D}^{-\frac{1}{2}}\bm{A}\bm{D}^{-\frac{1}{2}}$ be the symmetric normalized adjacency matrix, then $\hat{\mathcal{\bm{L}}}=\bm{I_N}-\hat{\bm{A}}=\bm{D}^{-\frac{1}{2}}(\bm{D}-\bm{A})\bm{D}^{-\frac{1}{2}}$ is the symmetric normalized graph Laplacian. With eigen-decomposition, $\hat{\mathcal{\bm{L}}}=\bm{U}\bm{\Lambda}\bm{U}^\top$, where $\bm{\Lambda}=\textrm{diag}(\lambda_1,\dots,\lambda_N)\in\mathbb{R}^{N\times N}, \lambda_i\in[0,2]$ and $\bm{U}=[\bm{u_1^\top},\dots,\bm{u_N^\top]}\in\mathbb{R}^{N\times N}$ are the eigenvalues and eigenvectors of $\hat{\mathcal{\bm{L}}}$, respectively. Given a signal  $f\in\mathbb{R}^N$ on  $\mathcal{G}$, the graph Fourier transform~\citep{shuman2013emerging} is defined as $\hat{f}=\bm{U}^\top f\in\mathbb{R}^N$, and its inverse is $f=\bm{U}\hat{f}\in\mathbb{R}^N$.

\textbf{Spectral graph wavelet transform (SGWT)}.~\citet{gw} redefine the wavelet basis~\citep{cw} on vertices in the spectral graph domain. Specifically, the SGWT is composed of three components: (1)~\textit{Unit wavelet basis}, denoted as $\Psi$ such that $\Psi=g(\hat{\mathcal{\bm{L}}})=\bm{U}g(\bm{\Lambda})\bm{U}^\top$, where $g$ acts as a band-pass filter $g: \mathbb{R}^+\rightarrow\mathbb{R}^+$ meeting the following \textit{wavelet admissibility criteria}~\citep{cw}:
\begin{equation}
    \label{condition}
    \mathcal{C}_\Psi = \int_{-\infty}^\infty \frac{|g(\lambda)|^2}{|\lambda|} d\lambda < \infty.
\end{equation}
To meet this requirement, $g(\lambda=0)=0$ and $\lim_{\lambda\rightarrow\infty}g(\lambda)=0$ are two essential prerequisites. (2)~\textit{Spatial scales}, a series of positive real values $\{s_j\}$ where distinct values of $s_j$ with $\Psi_{s_j}=\bm{U}g(s_j\bm{\Lambda})\bm{U}^\top$ can control different size of neighbors. (3)~\textit{Scaling function basis}, denoted as $\Phi$ such that $\Phi=\bm{U}h(\lambda)\bm{U}^\top$. Here, the function of $h: \mathbb{R}^+\rightarrow\mathbb{R}^+$ is to supplement direct current (DC) signals at $\lambda=0$, which is omitted by all wavelets $g(s_j\lambda)$ since $g(0)=0$. Next, given a signal $f\in\mathbb{R}^N$, the formal SGWT~\citep{gw} is:
\begin{equation}
\label{sgwt}
W_f(s_j)=\Psi_{s_j}f=\bm{U}g(s_j\bm{\Lambda})\bm{U}^\top f\in\mathbb{R}^N,
\end{equation}
where $W_f(s_j)$ is the wavelet coefficients of $f$ under scale $s_j$. Similarly, scaling function coefficients are given by $S_f=\Phi f=\bm{U}h(\bm{\Lambda})\bm{U}^\top f\in\mathbb{R}^N$. Let $G(\lambda)=h(\lambda)^2+\sum_jg(s_j\lambda)^2$, then if $G(\lambda)\equiv 1,\ \forall \lambda\in\bm{\Lambda}$, the constructed graph wavelets are known as \textit{tight frames}, which guarantee energy conservation of the given signal between the original and the transformed domains~\citep{tight}. More spectral graph wavelets are introduced in Appendix~\ref{related work}. 
\section{From Graph Convolution to Graph Wavelets}
\label{methoddddd}
Spectral graph convolution is a fundamental operation in the field of graph signal processing~\citep{shuman2013emerging}. Specifically, given a signal matrix (or node features) $\bm{X}\in\mathbb{R}^{N\times d}$ on graph $\mathcal{G}$, the spectral filtering of this signal is defined with a kernel $\bm{\kappa}\in\mathbb{R}^{N\times N}$ by the convolution theorem~\citep{convolution}:
\begin{equation}
\label{convolution definition}
    \bm{\kappa}*_\mathcal{G}\bm{X}=\mathcal{F}^{-1}(\mathcal{F}(\bm{\kappa})\cdot\mathcal{F}(\bm{X})) \in\mathbb{R}^{N\times d},
\end{equation}
where $\cdot$ is the matrix multiplication operator, $\mathcal{F}(\cdot)$ and $\mathcal{F}^{-1}(\cdot)$ are the spectral transform (e.g., graph Fourier transform~\citep{bruna2013spectral}) and corresponding inverse transform, respectively. To implement a spectral convolution, two critical choices must be considered in Eq.~(\ref{convolution definition}): 1) the selection of the transform $\mathcal{F}$ and 2) the parameterization of the kernel $\bm{\kappa}$.

\subsection{General spectral wavelet via Chebyshev decomposition}
For the selection of the spectral transform $\mathcal{F}$ and its inverse $\mathcal{F}^{-1}$, it can be tailored to the specific nature of data. For set data, the Dirac Delta function~\citep{oppenheim1997signals} is employed, while the fast Fourier Transform (FFT) proves efficient for both sequences~\citep{fno} and grids~\citep{afno}. In the context of graphs, the Fourier transform ($\mathcal{F}\rightarrow\bm{U}^\top$) emerges as one classical candidate. However, some inherent flaws limit the capacity of Fourier bases. (1) Standard graph Fourier bases, represented by one fixed matrix $\bm{U}^\top$, maintain a constant resolution and fixed frequency modes. (2) Fourier bases lack the adaptability to be further optimized according to different datasets and tasks. Therefore, \textit{multiple resolution} and \textit{adaptability} are two prerequisites for the design of an advanced base.

Notably, wavelet base is able to conform the above two demands, and hence offers enhanced filtering compared to Fourier base. For the resolution, the use of different scales $s_j$ allows wavelet to analyze detailed components of a signal at different granularities. More importantly, due to its strong spatial localization~\citep{gw}, each wavelet corresponds to a signal diffused away from a central node~\citep{gwnn}. Therefore, these scales also control varying receptive fields in spatial space, which enables the simultaneous fusion of short- and long-range information. For the adaptability, graph wavelets offer the flexibility to adjust the shapes of wavelets and scaling function. These components can be collaboratively optimized for the alignment of basis characteristics with different datasets, potentially enhancing generalization performance.

Next, we need to determine the form of the scaling function basis $\Phi=\bm{U}h(\bm{\Lambda})\bm{U}^\top$, the unit wavelet basis $\Psi=\bm{U}g(\bm{\Lambda})\bm{U}^\top$, and the scales ${s_j}$. The forms of $h$ and $g$ are expected to be powerful enough and easily available. Concurrently, $g$ should strictly satisfy the wavelet admissibility criteria, i.e., Eq.~(\ref{condition}), and $h$ should complementally provide DC signals. To achieve this target, we \textit{separately introduce odd terms and even terms from Chebyshev polynomials~\citep{gw}} into the approximation of $h$ and $g$. Please recall that the Chebyshev polynomial $T_k(y)$ of order $k$ may be computed by the stable recurrence relation $T_k(y)=2yT_{k-1}(y)-T_{k-2}(y)$ with $T_0=1$ and $T_1=y$. After the following transform, we surprisingly observe that these transformed terms match all above expectations:
\begin{equation}
    \label{transform}
    T_k(y) \rightarrow 1/2\cdot(-T_k(y-1)+1).
\end{equation}

To give a more intuitive illustration, we present the spectra of first six Chebyshev polynomials before and after the transform in Fig.~\ref{model} (b), where the set of odd and even terms after the transform are denoted as $\{T^o_i\}$ and $\{T^e_i\}$, respectively. From the figure, $g(\lambda=0)\equiv0$ for all $\{T^e_i\}$, and $h(\lambda=0)\equiv1$ for all $\{T^o_i\}$. Consequentially, $\{T^e_i\}$ and $\{T^o_i\}$ strictly meet the criteria and naturally serve as the basis of unit wavelet and scaling function. Moreover, not only can we easily get each Chebyshev term via iteration, but the constructed wavelet owns arbitrarily complex waveform because of the combination of as many terms as needed. Given $\{T^e_i\}$ and $\{T^o_i\}$, all we need to do is just to learn the coefficients to form the corresponding $g(\lambda)$ and $h(\lambda)$:
\begin{equation}
\label{asfasfas}
\begin{aligned}
    g(\bm{\Lambda})=\sum\limits_i^\rho a_iT^e_i(\bm{\Lambda}) \in\mathbb{R}^{N\times N},\\
    h(\bm{\Lambda})=\sum\limits_i^\rho b_iT^o_i(\bm{\Lambda}) \in\mathbb{R}^{N\times N},
\end{aligned}
\end{equation}
where $\rho=K/2$ ($K$ is the total number of truncated Chebyshev terms), $\tilde{\bm{a}}=(a_1, a_2,\dots, a_\rho)\in\mathbb{R}^{1\times \rho}$ and $\tilde{\bm{b}}=(b_1, b_2,\dots, b_\rho)\in\mathbb{R}^{1\times \rho}$ represent two learnable coefficient vectors as follows:
\begin{equation}
\label{gagagag}
    \tilde{\bm{a}} = \textrm{Mean}(\bm{W_a}\hat{\bm{Z}}+\bm{b_a}),\quad \tilde{\bm{b}} = \textrm{Mean}(\bm{W_b}\hat{\bm{Z}}+\bm{b_b}),
\end{equation}
where $\{\bm{W_a}, \bm{W_b}\}\in\mathbb{R}^{d\times \rho}$ and $\{\bm{b_a}, \bm{b_b}\}\in\mathbb{R}^{1\times \rho}$ are learnable parameters, and $\hat{\bm{Z}}$ is the eigenvalue embedding composed by the module in~\citep{specformer}. Further details can be found in Appendix~\ref{ee_}. 
Also, we can learn the scales $\tilde{\bm{s}}=(s_1, s_2,\dots, s_J)$ in the same way:
\begin{equation}
\label{zxvzxv}
    \tilde{\bm{s}} = \sigma(\textrm{Mean}(\bm{W_s}\hat{\bm{Z}}+\bm{b_s}))\cdot\overline{\bm{s}}\in\mathbb{R}^{1\times J},
\end{equation}
where $\sigma$ is sigmoid function, $\bm{W_s}\in\mathbb{R}^{d\times J}$ and $\bm{b_s}\in\mathbb{R}^{1\times J}$ are learnable parameters, and $\overline{\bm{s}}=(\overline{s_1}, \overline{s_2},\dots, \overline{s_J})$ is a pre-defined vector to control the size of $\tilde{\bm{s}}$. 

Based on our construction, $g(\lambda)$ is a strict band-pass filter in [0, 2], while $s$ can scale its shape in $g(s\lambda)$. Specifically, $s<1$ "stretches" the shape of $g(\lambda)$, and $s>1$ "squeezes" its shape (Please refer to Fig.~\ref{scalesssss}). To maintain the same spectral interval [0, 2], we truncate $g(s\lambda)$ within the intersection of $\lambda\in$ [0, 2] and $\lambda\in$ [0, 2/s].

\begin{figure*}[t]
  \centering
  \includegraphics[scale=0.17]{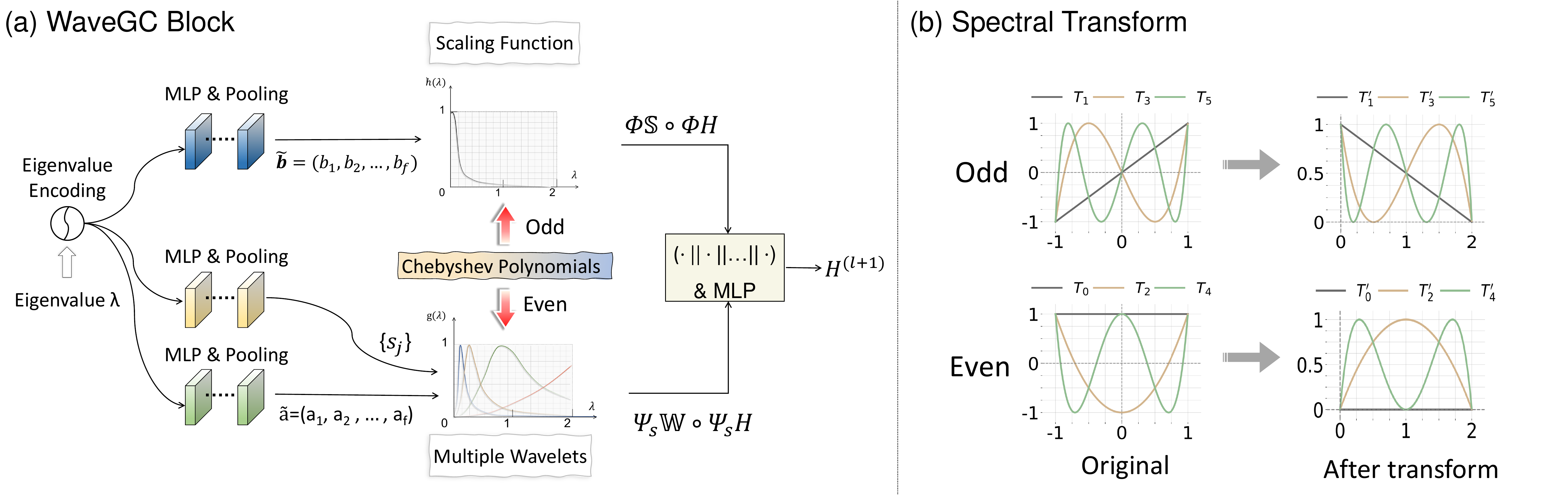}
    \vspace{-0.25cm}
  \caption{(a) Overview of our proposed \M\ technique. (b) Illustration of Chebyshev polynomials before and after the given transform, from [-1, 1] to [0, 2]. In this representation, we distinguish odd and even terms, presenting only the first three terms for each.}
  \label{model}
  \vskip -0.15in
\end{figure*}

\subsection{Matrix-valued kernel via weight sharing}
Next, we consider the parametrization of the convolutional kernel $\mathcal{F}(\bm{\kappa})$. In the spectral domain, each Fourier mode typically corresponds to a global frequency pattern, either low- or high-frequency. Consequently, in Fourier-based approaches, it is common to apply a vector-valued kernel over the diagonalized graph Laplacian spectrum, denoted as $\text{diag}(\theta_\lambda)$~\citep{bruna2013spectral, chebnet, levie2018cayleynets}, which effectively scales these global frequency components. However, this strategy becomes unsuitable after applying a wavelet transform. Unlike Fourier bases, wavelet coefficients encode localized, node-specific patterns that may capture short- or long-range interactions, but not global frequency modes. As a result, a different parametrization scheme, tailored to the localized nature of wavelet representations, is required.

Along another line of research, Fourier Neural Operator (FNO)~\citep{fno} models the convolution kernel as a fully learnable tensor $\mathbb{M}\in\mathbb{R}^{N\times d\times d}$, where $N$ is the number of frequency modes, and $d$ is the feature dimension. This tensor-valued kernel offers two notable advantages. First, although FNO was originally introduced in the context of the Fourier transform, the kernel $\mathbb{M}$ is inherently independent of graph spectrum, and is thus amenable to generalization across other transforms~\citep{tripura2023wavelet}. Second, in contrast to vector-valued kernels, the matrix-valued formulation provides a significantly larger number of learnable parameters, thereby increasing its expressivity and capacity to adapt to complex patterns. Experimental results presented in Section~\ref{matrix-value} empirically demonstrate that the matrix-valued kernel outperforms its vector-valued counterpart in the context of filtering wavelet-transformed signals.

In this paper, we adopt the tensor $\mathbb{M}$ for the convolution kernel. The standard parameter count for $\mathbb{M}$ is $N\times d\times d$. This can lead to a substantial number of parameters, especially for large-scale graphs with high $N$, increasing the risk of overfitting. To mitigate this while preserving model expressivity, we introduce a parameter-sharing strategy across all frequency modes by employing a single MLP. This approach reduces the number of learnable parameters from $N\times d\times d$ (tensor) to $d\times d$ (matrix). Accordingly, the convolution operation in Eq.~\eqref{convolution definition} simplifies to $\mathbb{M}*_\mathcal{G}\bm{X}=\mathcal{F}^{-1}\mathbb{M}\circ\mathcal{F}(\bm{X})=\mathcal{F}^{-1}(\textrm{MLP}(\mathcal{F}(\bm{X})))$, where $\circ$ is the composition between two functions. An alternative method is presented in AFNO~\citep{afno}, introducing a similar technique that offers improved efficiency but with a more intricate design.

\subsection{WAVELET-BASED GRAPH CONVOLUTION}
Until now, we have elaborated the proposed advancements on kernel and bases, and now discuss how to integrate these two aspects. Provided that we have $J$ wavelet $\{\Psi_{s_j}\}_{j=1}^J$ and one scaling function $\Phi$ constructed via the above Chebyshev decomposition, $\mathcal{F}: \mathbb{R}^{N\times d}\rightarrow \mathbb{R}^{N(J+1)\times d}$ in Eq.~(\ref{convolution definition}) is the stack of transforms from each component:
\begin{equation}
\begin{aligned}
\label{f}
    \mathcal{F}(\bm{H}^{(l)})&=\bm{T}\bm{H}^{(l)}=((\Phi\bm{H}^{(l)})^\top||\\ &(\Psi_{s_1}\bm{H}^{(l)})^\top||...||(\Psi_{s_J}\bm{H}^{(l)})^\top)^\top\in\mathbb{R}^{N(J+1)\times d},
\end{aligned}
\end{equation}
where $\bm{T}=(\Phi^\top|| \Psi_{s_1}^\top||...||\Psi_{s_J}^\top)^\top$ is the overall transform and $||$ means concatenation, $\bm{H}^{(l)}$ is the node embedding matrix at layer $l$. Next, we check if the inverse $\mathcal{F}^{-1}$ exists. Considering $\bm{T}$ is not a square matrix, $\mathcal{F}^{-1}$ should be its pseudo-inverse as $(\bm{T}^\top\bm{T})^{-1}\bm{T}^\top$, where $\bm{T}^\top\bm{T}=\Phi\Phi^\top+\sum_{j=1}^J \Psi_{s_j}\Psi_{s_j}^\top=\bm{U}[h(\lambda)^2+\sum_{j=1}^Jg(s_j\lambda)^2]\bm{U}^\top$. Ideally, if $\bm{T}$ is imposed as \textbf{tight frames}, then $h(\lambda)^2+\sum_{j=1}^Jg(s_j\lambda)^2=\bm{I}$~\citep{leonardi2013tight}, and $\bm{T}^\top\bm{T}=\bm{U}\bm{I}\bm{U}^\top=\bm{I}$. In this case, $\mathcal{F}^{-1}=(\bm{T}^\top\bm{T})^{-1}\bm{T}^\top=\bm{T}^\top$, and Eq.~(\ref{convolution definition}) becomes:
\begin{equation}
\begin{aligned}
\label{simplification}
\bm{H}^{(l+1)}&=\bm{T}^\top\mathbb{M}\circ\bm{T}\bm{H}^{(l)}\\
&=\Phi\mathbb{S}\circ\Phi\bm{H}^{(l)}+\sum_{j=1}^J\Psi_{s_j}\mathbb{W}_j\circ\Psi_{s_j}\bm{H}^{(l)}\in\mathbb{R}^{N\times d},
\end{aligned}
\end{equation}
where we separate $\mathbb{M}$ into $\mathbb{S}$ and $\{\mathbb{W}\}_{j=0}^J$ as scaling kernel and different wavelet kernels.

\begin{table*}[t]
  \caption{Comparison between spectral graph convolution and \M.}
  \label{comparison}
  \vskip 0.05in
  \centering
  \small
  % \resizebox{0.8\textwidth}{!}{
  \setlength{\tabcolsep}{20pt}{
  \renewcommand\arraystretch{1.3}
  \begin{tabular}{ccc}
    \toprule[1.2pt]
    & Spectral Graph Convolution &\M \\
    \hline
    Kernel & $\text{diag}(\theta_\lambda)$: Diagonal matrix & $\mathbb{S}$ / $\mathbb{W}$: Full matrix\\
    \hline
    Bases & $\textbf{U}^\top$: Fourier basis & $\Phi$ / $\Psi_s$: Scaling / Wavelet basis\\
    \hline
    Convolution & $\textbf{U}\text{diag}(\theta_\lambda)\textbf{U}^\top \textbf{X}$ & $\Phi \mathbb{S}\circ\Phi \textbf{X}\ /\ \Psi_s\mathbb{W}\circ\Psi_s \textbf{X}$\\
  \toprule[1.2pt]
  \end{tabular}
  }
  \vskip -0.1in
\end{table*}

\textbf{How to guarantee tight frames?} From above derivations, \textit{tight frames} is a key for the simplification of inverse $\mathcal{F}^{-1}$ in Eq.~(\ref{simplification}). This can be guaranteed by $l_2$ norm on the above constructed wavelets and scaling function. 
For each eigenvalue $\lambda_i\in\bm{\Lambda}$, we have $v_i^2=h(\lambda_i)^2+\sum_{j=1}^Jg(s_j\lambda_i)^2$, $\tilde{h}(\lambda_i)=h(\lambda_i)/v$, $\tilde{g_i}(s_j\lambda_i)=g(s_j\lambda_i)/v$. Then, $G(\bm{\Lambda})=\tilde{h}(\bm{\Lambda})^2+\sum_j\tilde{g}(s_j\bm{\Lambda})^2=\bm{I}$ forms tight frames (Section~\ref{pre}). Thus, while the pseudo-inverse must theoretically exist, we can circumvent the necessity of explicitly calculating the pseudo-inverse. 

Resembling the multi-head attention~\citep{va_trans}, we treat each wavelet transform as a “wavelet head”, and concatenate them rather than sum them to get $\bm{H}^{(l+1)}\in\mathbb{R}^{N\times d}$:
\begin{equation}
\begin{aligned}
    \label{final_1}
    \bm{H}^{(l+1)}&=\sigma\left(\left[\Phi\mathbb{S}\circ\Phi\bm{H}^{(l)}||\Psi_{s_1}\mathbb{W}_1\circ\Psi_{s_1}\bm{H}^{(l)}||\right.\right.\\
    &\left.\left.\dots||\Psi_{s_J}\mathbb{W}_J\circ\Psi_{s_J}\bm{H}^{(l)}\right]\cdot\bm{W}\right),
\end{aligned}
\end{equation}
where an outermost $\textrm{MLP}$ increases the flexibility. 
Fig.~\ref{model} (a) presents the whole framework of our wavelet-based graph convolution, or WaveGC. For a better understanding, we compare spectral graph convolution and \M\ in Table.~\ref{comparison}, where \M\ contains only one wavelet for simplicity. Based on the differences shown in the table, \M\ endows spectral graph convolution with the beneficial inductive bias of long-range dependency.

% \begin{figure}
% \begin{minipage}[b]{0.5\textwidth}

%     \centering
%     \caption{The comparison between classical graph convolution (GC) and \M.}
%     \renewcommand\arraystretch{1.2}
%     \begin{tabular}{ccc}
%     \toprule[2pt]
%     & GC &\M \\
%     \hline
%     Formula & $\sigma(\bm{U}g(\bm{\Lambda})\bm{U}^\top\bm{H}\bm{W})$ & $\textrm{MLP}\left(\Phi\sigma(\Phi\bm{H}^{(l)}\bm{W}_0^{(l)})\left|\right|\Psi_{s_1}\sigma(\Psi_{s_1}\bm{H}^{(l)}\bm{W}_1^{(l)})\right)$\\
%     \toprule[2pt]
%     \end{tabular}
% \end{minipage}
% \begin{minipage}[b]{0.5\textwidth}
%     \centering
%     \includegraphics[scale=0.2]{}
%     \caption{The illustration of \M.}
% \end{minipage}
% \end{figure}

% By its design, \M\ can handle various node interaction ranges, leveraging the multi-resolution feature of graph wavelets, which is similar to traditional Transformers interacting over the whole graph. However, Eq.~\ref{final} explicitly disentangle the information from different interaction ranges so that the model can adaptively learn which range must be emphasized, whereas traditional Transformers mix short- and long-range information together without distinction. Thus, \M\ surpasses Transformers in controlling distance based information.
\section{Theoretical Properties of \M}
Traditionally, wavelet is notable for its diverse receptive fields because of varying scales~\citep{cw}. For graph wavelet,~\citet{gw} were the first to prove the localization when scale $s\rightarrow0$, but did not discuss the long-range case when $s\rightarrow\infty$. We further augment this discussion and demonstrate the effectiveness of the proposed \M\ in capturing both short- and long-range information. Intuitively, a model's ability to integrate global information enables the reception and mixing of messages from distant nodes. Conversely, a model with a limited receptive field can only effectively mix local messages. Hence, assessing the degree of information `mixing' becomes a key property. For this reason, we focus on the concept of \textit{maximal mixing}:
\begin{definition}
\textbf{(Maximal mixing)}~\citep{how}.
    \textit{For a twice differentiable graph-function $y_G$ of node features $\bm{x}_i$, the maximal mixing induced by $y_G$ among the features $\bm{x}_a$ and $\bm{x}_b$ with nodes $a, b$ is}
    \begin{equation}
    \label{definition}
        \text{mix}_{y_G}(a, b)=\max\limits_{\bm{x}_i}\max\limits_{1\leq\alpha, \beta\leq d}\left|\frac{\partial^2y_G(\bm{X})}{\partial x_a^\alpha\partial x_b^\beta}\right|.
    \end{equation}
\end{definition}
This definition is established in the context of graph-level task, and $y_G$ is the final output of an end-to-end framework, comprising the primary model and a readout function (e.g., \textrm{mean}, \textrm{max}) applied over the last layer. $\alpha$ and $\beta$ represent two entries of the $d$-dimensional features $\bm{x}_a$ and $\bm{x}_b$.

Next, we employ the concept of `maximal mixing' on the \M. For simplicity, we only take one wavelet basis $\Psi_{s}$ for analysis. 
The capacity of $\Psi_{s}$ on mixing information depends on two factors, i.e. $K$-order Chebyshev term and scale $s$. For a fair discussion on the effect of $s$ on message passing, we compare $\sigma(\Psi_{s}HW)$ and K-order message passing with the form of $\sigma(\sum_{j=0}^K\tau_jA^jHW)$, $\tau_j \in[0, 1]$:
% Let $\Psi_s=\bm{U}g(s\Lambda)\bm{U}^T$, and further assume $g$ is smooth enough at $\lambda=0$:
% \begin{assumption}
% \label{assumption}
%      Given a large even number $K>0$, suppose $g$ is $K+1$ times continuously differentiable on [0, 2] and satisfies $g(0)=0, g^{(r)}(0)=0$ for all $r<K$, and $g^{(K)}(0)=C>0$.
% \end{assumption}

% With this assumption, we have the following theorem to compare $\sigma(\Psi_{s}\bm{H}^{(l)}\bm{W}_s^{(l)})$ and \textcolor{blue}{K-order message passing with the form of $\sigma(\sum_{j=0}^K\tau_jA^jHW)$ and $\tau_j \in[0, 1]$}:

\begin{theorem}[\textbf{Short-range and long-range receptive fields}]
\label{cc}
    Given a large even number $K>0$ and two random nodes $a$ and $b$, if the depths $m_\Psi$ and $m_A$ are necessary for $\sigma(\Psi_{s}HW)$ and $\sigma(\sum_{j=0}^K\tau_jA^jHW)$ to induce the same amount of mixing $\text{mix}_{y_G}(b, a)$, then the lower bounds of $m_\Psi$ and $m_A$, i.e. $L_{m_\Psi}$ and $L_{m_A}$, approximately satisfy the following relation when scale $s\rightarrow0$:
    \begin{equation}
    \label{zero}
        L_{m_\Psi} \approx \frac{P}{K}L_{m_A}+\frac{2|E|}{K\sqrt{d_ad_b}}\frac{\text{mix}_{y_G}(b, a)}{\gamma}\cdot\frac{1}{(\alpha^2s^{2K})^{m_\Psi}}.
    \end{equation}
    Or, if $s\rightarrow\infty$, the relation becomes:
    \begin{equation}
    \label{infty}
        L_{m_\Psi} \approx \frac{P}{K}L_{m_A}-\frac{2|E|}{K(K+1)^{2m_A} {\tau_P}^{2m_A}\sqrt{d_ad_b}}\frac{\text{mix}_{y_G}(b, a)}{\gamma},
    \end{equation}
    where $P<K$ and $(\tau_PA^P)_{ba}=\max\{(\tau_mA^m)_{ba}\}_{m=0}^K$. $d_a$ and $d_b$ are degrees of two nodes, and $\alpha=\frac{C\cdot2^K(K+1)}{K!}$. $\gamma=\sqrt{\frac{d_{max}}{d_{min}}}$, where $d_{max}/d_{min}$ is the maximum / minimum degree in the graph.
\end{theorem}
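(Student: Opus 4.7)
The plan is to derive the two asymptotic relations by bounding the maximal mixing of each network as a function of its depth via a Hessian chain-rule computation, then to isolate how $s$ modulates the entries of $\Psi_{s}$ versus those of $\sum_{j=0}^K\tau_j\hat{\bm{A}}^j$ at the position $(b,a)$, and finally to equate the two to solve for $L_{m_\Psi}$ in terms of $L_{m_A}$.

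First, I would show that for any $m$-layer network of the form $\bm{H}^{(l+1)}=\sigma(\bm{B}\bm{H}^{(l)}\bm{W})$ composed with a permutation-invariant readout $y_G$, a Hessian chain-rule argument in the template of~\citet{how} produces an upper bound of the shape
\[
\text{mix}_{y_G}(a,b) \;\leq\; c_\sigma^{2m}\,w^{2m}\,(\bm{B}^{m})_{ca}\,(\bm{B}^{m})_{cb},
\]
where $c_\sigma$ bounds $|\sigma'|,|\sigma''|$, $w$ bounds the operator norm of $\bm{W}$, and $c$ is determined by the readout. Instantiating $\bm{B}=\sum_j\tau_j\hat{\bm{A}}^j$ and $\bm{B}=\Psi_{s}$ reduces the whole theorem to a sharp estimate of the $(b,a)$-entry of the $m$-th power of each operator, together with a uniform degree correction that is absorbed into $\gamma=\sqrt{d_{\max}/d_{\min}}$.

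For the polynomial MP side, the hypothesis $(\tau_P\hat{\bm{A}}^P)_{ba}=\max_m(\tau_m\hat{\bm{A}}^m)_{ba}$ lets me collapse the sum $(\sum_j\tau_j\hat{\bm{A}}^j)^{m_A}_{ba}$ to a quantity comparable to $((K{+}1)\tau_P\hat{\bm{A}}^P)^{m_A}_{ba}$, and the normalization $\hat{\bm{A}}=\bm{D}^{-1/2}\bm{A}\bm{D}^{-1/2}$ together with standard walk-counting on $\hat{\bm{A}}^{Pm_A}$ extracts the $1/\sqrt{d_ad_b}$ factor and the $2|E|$ multiplier that appear in the theorem. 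For the wavelet side, I would substitute $\Psi_{s}=\sum_{i=1}^{\rho} a_iT^e_i(s\hat{\mathcal{\bm{L}}})$ and expand each transformed even Chebyshev term in $s\hat{\mathcal{\bm{L}}}$ via the recurrence $T_k(y)=2yT_{k-1}(y)-T_{k-2}(y)$ and the transform in Eq.~\eqref{transform}. As $s\to 0$, the admissibility constraint $g(0)=0$ forces the leading-order behaviour $g(s\lambda)\sim\alpha(s\lambda)^K$ with $\alpha=C\cdot 2^K(K{+}1)/K!$ (which is exactly the combinatorial constant appearing in the theorem); squaring because of the second derivative in Eq.~\eqref{definition} and propagating through $m_\Psi$ layers yields the $(\alpha^2 s^{2K})^{m_\Psi}$ denominator. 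As $s\to\infty$, the truncation of $g(s\lambda)$ to the window $[0,2/s]$ together with $\lim_{\lambda\to\infty}g(\lambda)=0$ makes the wavelet entry asymptotically negligible, producing the subtractive term anchored at the polynomial MP's dominant walk count $(K{+}1)^{2m_A}\tau_P^{2m_A}$.

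Equating the two mixing bounds and solving for $m_\Psi$, then linearizing around the small parameters ($\alpha^2 s^{2K}$ in the first regime, $(K{+}1)^{-2m_A}\tau_P^{-2m_A}$ in the second), delivers exactly Eq.~\eqref{zero} and Eq.~\eqref{infty}; the coefficient $P/K$ of the leading term encodes the fact that the polynomial MP propagates information by the dominant power $P$ per layer while the wavelet propagates by its full Chebyshev order $K$, so the effective depth ratio equals $P/K$. The main obstacle I expect is the sharp, uniform propagation of the Taylor approximation $g(s\lambda)\approx\alpha(s\lambda)^K$ across $m_\Psi$ consecutive applications of $g(s\hat{\mathcal{\bm{L}}})$: one needs to show that the remainder terms are genuinely subleading on the whole spectrum $[0,2]$ (not just near $\lambda=0$) and to bookkeep the binomial and factorial constants so that they compress cleanly into $\alpha$. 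A secondary subtlety is verifying that the degree mismatch between $a$ and $b$ under the normalized Laplacian leaves exactly one factor of $\gamma$ after the symmetric $1/\sqrt{d_ad_b}$ factorization, which I would handle with a worst-case path bound over the two endpoints.
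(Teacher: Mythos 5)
Your overall architecture matches the paper's: a Hessian/second-derivative bound on $\text{mix}_{y_G}$ in the style of the cited over-squashing analysis, an entrywise comparison of $\Psi_s$ against the collapsed operator $(K+1)\tau_P A^P$, extraction of the $2|E|/\sqrt{d_ad_b}$ and $\gamma$ factors, and a final passage to the limits $s\rightarrow0$ and $s\rightarrow\infty$. However, two of your steps, as described, would not go through.

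First, the source of the $s^K$ factor. You claim that admissibility ``forces'' the leading-order behaviour $g(s\lambda)\sim\alpha(s\lambda)^K$ as $s\rightarrow0$. It does not: admissibility only gives $g(0)=0$, so generically $g(s\lambda)=C_1 s\lambda+O(s^2)$ and the true leading order in $s$ is $s^1$, not $s^K$. The paper's Lemma~\ref{aa} takes a different route: it keeps the full degree-$K$ Taylor polynomial $\sum_{k\leq K}C_k\lambda^k/k!$, expands each $\hat{\mathcal{L}}^k=(I-\hat{A})^k$ binomially, and argues that because $(\hat{A})^p$ stabilizes for moderate $p$ while the weight $1/((K-p)!\,p!)$ peaks at $p=K/2$, the whole entry is dominated by $C(K+1)\hat{A}^{K/2}/((K/2)!)^2\leq\alpha\hat{A}^{K/2}$ with $\alpha=C\cdot2^K(K+1)/K!$; the factor $s^K$ is then attached to this dominant term, and the power $\hat{A}^{K/2}$ (not $\hat{A}^K$ or a walk of length $1$) is what feeds the rest of the argument. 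Tracking only the genuine leading order in $s$, as you propose, yields a different exponent and a different constant from the $(\alpha^2 s^{2K})^{m_\Psi}$ appearing in the statement.

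Second, and more structurally, the coefficient $P/K$ does not come from ``equating the two mixing bounds and solving'': it comes from the commute time $\tau(a,b)$, which your plan never introduces. The paper (Theorem~\ref{bb}) converts each mixing upper bound into a depth lower bound of the form $L_m=\frac{\tau(a,b)}{2\cdot(\text{order})}+\frac{2|E|}{(\text{order})\sqrt{d_ad_b}}[\cdots]$, using the spectral identity $\tau(a,b)=2|E|\sum_n\lambda_n^{-1}\bigl(u_n(a)/\sqrt{d_a}-u_n(b)/\sqrt{d_b}\bigr)^2$ together with $u_0(a)=\sqrt{d_a/2|E|}$; the ``order'' is $K$ for the wavelet and $P$ for the collapsed polynomial operator. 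The leading term $\frac{P}{K}L_{m_A}$ in Eq.~\eqref{zero} and Eq.~\eqref{infty} is precisely the ratio of the two commute-time contributions, and the two displayed correction terms are what survives after subtracting $\frac{P}{K}L_{m_A}$ and discarding the $|1-\lambda^*|^{Km_\Psi+1}$ and $|1-\lambda^*|^{2Pm_A+1}$ remainders. Without a common quantity that enters both lower bounds with the $1/K$ versus $1/P$ scaling, your heuristic ``effective depth ratio'' gives no derivation of the coefficient $P/K$, and the subtraction that isolates the two correction terms cannot be carried out.
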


The proof is provided in Appendix~\ref{proof_cc}. In Eq.~(\ref{zero}), since the second term on the right-hand side is large ($s\rightarrow0$), it required $\Psi_{s}$ to propagate more layers to mix the nodes. Conversely, if $s\rightarrow\infty$ (Eq.~(\ref{infty})), $\Psi_{s}$ will achieve the same degree of node mixing as $K$-hop message passing but with less propagation. Moreover, the greater the "mixing" $\text{mix}_{y_G}(b, a)$ is required between nodes, the fewer number of layers $L_{m_\Psi}$ is needed compared to $L_{m_A}$. To conclude, $\Psi_{s}$ presents the short- and long-range characteristics of \M\ on message passing, while these characteristics do not derive from the order $K$ of Chebyshev polynomials but from the scale $s$ exclusively.

\begin{table*}[t]
\scriptsize
% \footnotesize
  \caption{Qualified results on short-range tasks compared to baselines. \textbf{Bold}: Best, \underline{Underline}: Runner-up, OOM: Out-of-memory. All results are reproduced based on source codes.}
  \label{node_com}
  \vskip 0.05in
  \centering
  \setlength{\tabcolsep}{6mm}{
  % \resizebox{0.8\textwidth}{!}{
  \renewcommand\arraystretch{1.1}
  \begin{tabular}{cccccc}
    \toprule[1.2pt]
    \multirow{2}*{Model} & CS & Photo & Computer& CoraFull &ogbn-arxiv  \\
    \cmidrule(lr){2-2}\cmidrule(lr){3-3}\cmidrule(lr){4-4}\cmidrule(lr){5-5}\cmidrule(lr){6-6}
    &Accuracy $\uparrow$&Accuracy $\uparrow$&Accuracy $\uparrow$&Accuracy $\uparrow$&Accuracy $\uparrow$\\
    \hline
    GCN&	92.92±0.12&	92.70±0.20&	89.65±0.52&	61.76±0.14&71.74±0.29\\
    GAT&	93.61±0.14&	93.87±0.11&	90.78±0.13&	64.47±0.18&71.82±0.23\\
    APPNP&	94.49±0.07&	94.32±0.14&	90.18±0.17&	65.16±0.28&71.90±0.25\\
    \hline
    Scattering&94.77±0.33&	92.10±0.61&	85.68±0.71&	57.65±0.84&	66.23±0.19\\
    Scattering GCN&95.18±0.30&	93.07±0.42&	88.83±0.44	&61.14±1.13&	71.18±0.76\\
    \hline
    SGWT& 94.81±0.23&	92.45±0.62&	85.19±0.59&	55.04±1.12&	69.08±0.30\\
    GWNN& 90.75±0.59&	\underline{94.45±0.45}&	90.75±0.59&	64.19±0.79&	71.13±0.47\\
    UFGConvS&95.33±0.27&	93.98±0.59&	88.68±0.39&	61.25±0.93&	70.04±0.22\\
    UFGConvR&\underline{95.46±0.33}&	94.34±0.34&	89.29±0.46&	62.43±0.80&	71.97±0.12\\
    WaveShrink-ChebNet& 94.90±0.30&	93.54±0.90&	88.20±0.65&	58.98±0.69&	OOM\\
    DEFT&95.04±0.32&	94.35±0.44&	91.63±0.52&	\underline{68.01±0.86}&	72.01±0.20\\
    WaveNet&94.91±0.29&	94.09±0.63&	\underline{92.06±0.33}	&57.65±1.05	&71.37±0.14\\
    \hline
    SEA-GWNN&95.11±0.37&	94.35±0.50&	89.88±0.64	&66.74±0.79	&\underline{72.64±0.21}\\
    \hline
    \textbf{\M} (ours)&\textbf{95.89±0.34}&	\textbf{95.37±0.44}&	\textbf{92.26±0.18}&	\textbf{69.14±0.78}&\textbf{73.01±0.18}\\
  \toprule[1.2pt]
  \end{tabular}}
  \vskip -0.1in
\end{table*}
\section{Why do we need decomposition?}
\label{why do}
As shown in Fig.~\ref{model} (b), odd and even terms of Chebyshev polynomials meet the requirements on constructing wavelet after decomposition and transform. Additionally, each term is apt to be obtained according to the iteration formula, while infinite number of terms guarantee the expressiveness of the final composed wavelet. Next, we compare our decomposition solution with other related techniques:

$\bullet$ \textit{Constructing wavelet via Chebyshev polynomials.} Previous wavelet-based GNNs leverage Chebyshev polynomials with two purposes. (1) Approximate wavelets of pre-defined forms. SGWT~\citep{gw}, GWNN~\citep{gwnn} and UFGConvS/R~\cite{ufg} follow this line. They firstly fix the shape of wavelets as cubic spline, exponential or high-pass/low-pass filters, followed by the approximation via Chebyshev polynomials. In this pipeline, wavelet fails to learn further and suit the dataset and task at hand. (2) Compose a new wavelet. DEFT~\citep{deft} employs an MLP or GNN network to freely learn the coefficients before each Chebyshev basis. These coefficients are optimized according to the training loss, but loose the constraint on wavelet admissibility criteria.

$\bullet$ \textit{No decomposition.} If we uniformly learn the coefficients for all Chebyshev terms without decomposition, \M\ degrades to a variant similar to ChebNet~\citep{chebnet}. However, mixture rather than decomposition blends the signals from different ranges, and the final spatial ranges cannot be precisely predicted and controlled. 

We provide numerical comparison and spectral visualization in section~\ref{The effectiveness of general wavelet bases} for \M\ against these related studies.
\section{Numerical Experiments}
\label{Numerical Experiments}
In this section, we evaluate the performance of \M\ on both short-range and long-range benchmarks using the following datasets: (1) \textit{Datasets for short-range tasks:} \texttt{CS}, \texttt{Photo}, \texttt{Computer} and \texttt{CoraFull} from the PyTorch Geometric (PyG)~\citep{pyg}, and one large-size graph, i.e. \texttt{ogbn-arxiv} from Open Graph Benchmark (OGB)~\citep{ogb} (2) \textit{Datasets for long-range tasks:} \texttt{PascalVOC-SP (VOC)}, \texttt{PCQM-Contact (PCQM)}, \texttt{COCO-SP (COCO)}, \texttt{Peptides-func (Pf)} and \texttt{Peptides-struct (Ps)} from LRGB~\citep{lrgb}. Please refer to Appendix~\ref{Implementation details} for implementation details and Appendix~\ref{Datasets and baselines} for details of datasets. 

% Our aim is to mainly compare \M\ and graph Transformers on capturing both short- and long-range information. Consequently, we replace the Transformer component in base models with \M, while keeping the remaining components unchanged. The base models are Transformer~\citep{va_trans}, SAN~\citep{san}, and GraphGPS~\citep{gps} because of the presence of the vanilla Transformer architecture in these three methods. Detailed descriptions of Transformer, SAN, and GraphGPS can be found in Appendix~\ref{Detailed descriptions of base models}. Additionally, we compare our method with other state-of-the-art models tailored to specific scenarios. Please refer to Appendix~\ref{Implementation details} for implementation details.

\subsection{Benchmarking \M}
\label{saeaseasesaeasease}
\vspace{-7pt}
\begin{table*}[ht]
\scriptsize
  \caption{Qualified results on long-range tasks compared to baselines. \textbf{Bold}: Best, \underline{Underline}: Runner-up, OOM: Out-of-memory, All results are reproduced based on source codes.}
  \label{graph_com}
  \vskip 0.05in
  \centering
  \setlength{\tabcolsep}{6mm}{
  \renewcommand\arraystretch{1.1}
  \begin{tabular}{cccccc}
    \toprule[1.2pt]
    \multirow{2}*{Model} & VOC  & PCQM & COCO  & Pf & Ps \\
    \cmidrule(lr){2-2}\cmidrule(lr){3-3}\cmidrule(lr){4-4}\cmidrule(lr){5-5}\cmidrule(lr){6-6}
    &F1 score $\uparrow$&MRR $\uparrow$&F1 score $\uparrow$&AP $\uparrow$&MAE $\downarrow$\\
    \hline
    GCN&12.68±0.60&32.34±0.06&08.41±0.10&59.30±0.23&34.96±0.13\\
    GINE&12.65±0.76&31.80±0.27&13.39±0.44&54.98±0.79&35.47±0.45\\
    GatedGCN&28.73±2.19&32.18±0.11&26.41±0.45&58.64±0.77&34.20±0.13\\
    \hline
    Scattering&16.58±0.49&	33.90±0.27&	16.44±0.79&	56.80±0.38&	26.77±0.11\\
    Scattering GCN&30.45±0.36&	33.73±0.45&	30.27±0.60&	62.87±0.64&	26.43±0.20\\
    \hline
    SGWT&31.22±0.56&	34.04±0.05&	\underline{32.97±0.53}&	60.23±0.27&	25.39±0.21\\
    GWNN&25.60±0.56&	32.72±0.08&	13.39±0.44&	65.47±0.48&	27.34±0.04\\
    UFGConvS&31.27±0.39&	33.94±0.24&	23.15±0.55&	65.83±0.75&	27.08±0.58\\
    UFGConvR&31.08±0.33&	34.08±0.20&	26.02±0.48&	65.29±0.82&	27.50±0.21\\
    WaveShrink-ChebNet&18.80±0.85&	32.56±0.11&	11.12±0.46&	61.12±0.53&	27.45±0.06\\
    DEFT&\underline{35.98±0.20}&	\underline{34.25±0.06}&	30.14±0.49&	66.95±0.63&	\underline{25.06±0.13}\\
    WaveNet&28.60±0.15&	33.19±0.20&	23.06±0.18&	64.63±0.27&	25.88±0.01\\
    \hline
    SEA-GWNN&31.97±0.55&	29.89±0.26&	24.33±0.23&	\underline{68.75±0.20}&	25.64±0.31\\
    \hline
    \textbf{\M} (ours)&\textbf{41.63±0.19}&\textbf{34.50±0.02}&\textbf{35.96±0.22}&\textbf{69.73±0.43}&\textbf{24.83±0.11}\\
  \toprule[1.2pt]
  \end{tabular}}
  \vskip -0.1in
\end{table*}

For short-range (S) datasets, we follow the settings from~\citep{nagphormer}.
% , where we adopt 60\%/20\%/20\% train/val/test random splits. 
For \texttt{ogbn-arxiv}, we use the public splits in OGB~\citep{ogb}. 
% For each experiment, we conduct 10 trials with random seeds and report the mean accuracy and standard deviation. 
For long-range datasets, we adhere to the experimental configurations outlined in~\citep{lrgb}. The selected baselines belong to four categories, i.e., classical GNNs \{GCN~\cite{gcn}, GAT~\cite{gat}, APPNP~\cite{appnp}, GINE~\cite{gine1} and GatedGCN~\cite{gated}\}, graph scattering network \{Scattering~\cite{scattering} and Scattering GCN~\cite{Scattering_gcn}\}, spectral graph wavelet network \{SGWT~\cite{gw}, GWNN~\cite{gwnn}, UFGConvS~\cite{ufg}, UFGConvR~\cite{ufg}, WaveShrink~\cite{shrink}, DEFT~\cite{deft} and WaveNet~\cite{yang2024wavenet}\} and wavelet lifting transform \{SEA-GWNN~\cite{deb2024sea}\}~\footnote{For SGWT and Scattering, we concatenate the filtered signals from all bases and apply an MLP to get the final embeddings.}. The results of the comparison with SOTA models are shown in Table~\ref{node_com} and~\ref{graph_com}, where our \M\ demonstrates the best results on all datasets. Remarkably, the improvement on VOC achieves up to 11.83\%, implying the superior long-range information perception.

In the experiments conducted on the five short-range datasets, the model is required to prioritize local information, while the five long-range datasets necessitate the handling of distant interactions. The results clearly demonstrate that the proposed \M\ consistently outperforms traditional graph convolutions and graph wavelets in effectively aggregating both local and long-range information.

\subsection{Effectiveness of matrix-valued kernel}
\label{matrix-value}
The proposed matrix-valued kernel and weight-sharing strategy mark an advancement over conventional graph convolution, particularly in the context of processing wavelet-based signals. In this section, we conduct a comprehensive analysis of the effectiveness of these two architectural innovations.

As shown in Table~\ref{vectorvalue}, the matrix-valued kernel consistently outperforms its vector-valued counterpart. This improvement suggests that increasing the expressiveness of the kernel—through a higher parameter capacity—enhances the model’s ability on feature learning.
\begin{table}[h]
    \centering
    \small
\vskip -0.15in
    \caption{Compare \textit{Matrix-valued} and \textit{Vector-valued} kernels.}
    \vskip 0.05in
    \begin{tabular}{ccc}
         \toprule[1pt]
    Kernel & Computer (Accuracy $\uparrow$) & Ps (MAE $\downarrow$) \\
    \hline
    \textit{Vector-valued} & 89.96 & 25.30\\
    \textbf{\textit{Matrix-valued}} & \textbf{92.26} &\textbf{24.83}\\
    \toprule[1pt]
    \end{tabular}
    \label{vectorvalue}
\vskip -0.15in
\end{table}

In addition, Table~\ref{nonsharing} examines the impact of weight sharing across spectral frequencies. Assigning distinct kernels to individual frequencies does not improve performance and, even results in degradation. This decline is likely due to overfitting caused by the large number of parameters introduced in the non-sharing setup. Specifically, non-sharing kernels require a mapping from each eigenvalue embedding to a unique transformation matrix, defined as $f: \mathbb{R}^d\rightarrow\mathbb{R}^{d\times d}$, which is implemented using a multi-layer perceptron (MLP) with a weight dimension of $\mathbb{R}^{d\times d\times d}$, $d$ is the embedding dimension. For instance, when d=96 in \texttt{Ps}, this results in an approximate increase about 876K parameters.
\begin{table}[h]
    \footnotesize
\vskip -0.15in
    \caption{Compare $\textit{sharing}$ and $\textit{non-sharing}$ kernel weights.}
    \vskip 0.05in
    \centering
    \setlength{\tabcolsep}{1mm}{
    \renewcommand\arraystretch{1.1}
    \begin{tabular}{ccc}
         \toprule[1pt]
    Result (Parameters) & Computer (Accuracy $\uparrow$) & Ps (MAE $\downarrow$) \\
    \hline
    \textit{Non-sharing} & 90.51 (535k) & 26.22 (1,410k))\\
    \textbf{\textit{Sharing}} & \textbf{92.26 (167k)} & \textbf{24.83 (534k)}\\
    \toprule[1pt]
    \end{tabular}}
    \label{nonsharing}
\vskip -0.25in
\end{table}
 
\subsection{Effectiveness of learnable wavelet bases}
\label{The effectiveness of general wavelet bases}
In this section, we compare the learnt wavelet bases from \M\ with other baselines, including five graph wavelets (i.e. SGWT~\citep{gw}, UFGConvS/R~\cite{ufg}, DEFT~\citep{deft}, GWNN~\citep{gwnn} and WaveNet~\cite{yang2024wavenet}). We additionally evaluate ChebNet*, a variant of our \M\, where the only change is to combine odd and even terms without decomposition. Therefore, the improvement of \M\ over ChebNet* reflects the effectiveness of decoupling operation. The numerical comparison on \texttt{Computer} and \texttt{PascalVOC-SP} has been shown in Table.~\ref{node_com} and~\ref{graph_com}, which demonstrates obvious gains from \M\, especially on long-range \texttt{PascalVOC-SP}. The ChebNet* gets 89.85 and 36.45 separately on \texttt{Computer} and \texttt{PascalVOC-SP}, still inferior to \M.~\footnote{We explore more differences between ChebNet~\cite{chebnet} and our \M\ in Appendix~\ref{more comparisons}.}

\begin{figure*}[t]
  \centering
  \includegraphics[scale=0.16]{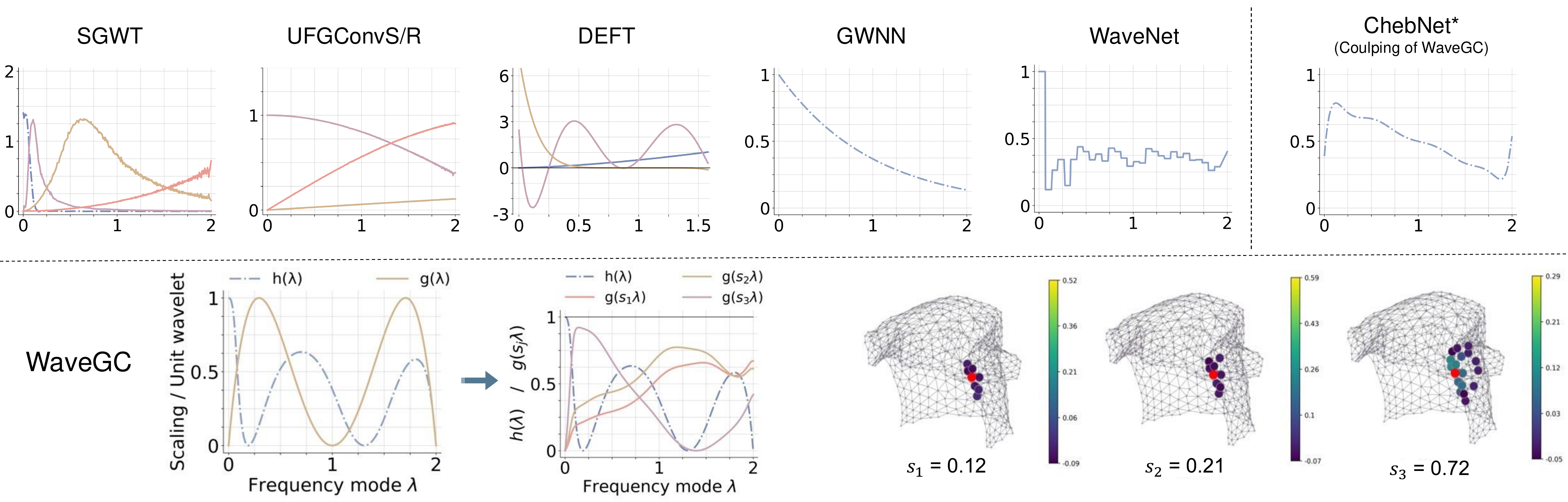}
    \vspace{-0.25cm}
  \caption{The spectral and spatial visualization of different bases on \texttt{PascalVOC-SP}.}
  \label{visul}
  \vskip -0.15in
\end{figure*}

To address the performance gap observed on the VOC dataset, we provide insights through the spectral visualization of various bases in Fig.~\ref{visul}. 
Upon examination of various wavelets, those from SGWT and UFGConvS/R meet admissibility criteria with multiple resolutions, but these lines are not adaptive. DEFT outputs multiple bases with unpredictable shapes, so it is hard to strictly restrain these outputs as wavelets. GWNN adopts one exponential wavelet base, omitting information from different ranges as well as not meeting criteria. WaveNet and ChebNet* blend local and distant information in spatial space, hampering the decision on the best range. For our \M, Fig.~\ref{visul} intuitively demonstrates that the unit wavelet got by our decoupling of Chebyshev polynomials strictly meets the admissibility criteria, as Eq.~\eqref{condition}, while the corresponding base scaling function supplements the direct current signals at $\lambda=0$. After integration of learnable scales, the final wavelets also meet criteria and adapt to the demand on multiresolution. The plot of $G(\lambda)=h(\lambda)^2+\sum_{j=1}^3g(s_j\lambda)^2$ as a black dashed line (located at 1) confirms the construction of tight frames via normalization technique. Fig.\ref{visul} also depicts the signal distribution over the topology centered on the target node (the red-filled circle). As the scale $s_j$ increases, the receptive field of the central node expands. Once again, this visualization intuitively confirms the capability of \M\ to aggregate both short- and long-range information simultaneously but distinguishingly. More analyses are given in Appendix~\ref{visualization on}.

\subsection{Ablation study}
\begin{table}[h]\vskip -0.15in
  \caption{Results of the ablation study. \textbf{Bold}: Best.}
  \label{abla}
  \small
  \vskip 0.05in
  \centering
  \setlength{\tabcolsep}{5mm}{
  \renewcommand\arraystretch{1.1}
  \begin{tabular}{ccc}
    \toprule[1.2pt]
    \multirow{2}*{Variants} & Computer & Ps\\
    \cmidrule(lr){2-2}\cmidrule(lr){3-3}&Accuracy $\uparrow$&MAE $\downarrow$\\
    \hline
    \M&  \textbf{92.26}&\textbf{24.83}\\
    \hline
    w/o wavelet& 89.65& 34.20\\
    \hline
    w/o MPNN&	90.89& 25.04\\
    w/o h($\lambda$)&	90.57&	25.12\\
    w/o g($s\lambda$)&	90.87&	25.09\\
    \toprule[1.2pt]
  \end{tabular}}\vskip -0.05in
\end{table}

In this section, we conduct an ablation study of our \M\ to assess the effectiveness of each component, and the corresponding results are presented in Table~\ref{abla}. The evaluation is conducted on \texttt{Peptides-struct} (long-range) and \texttt{Computer} (short-range).

Given the hybrid network (Fig.~\ref{mpgnn_wave}), we firstly remove the MPNN part (i.e., `w/o MPNN') and wavelet part (i.e., `w/o wavelet'), respectively. Both ablations degrade model performances, where `w/o wavelet' decline more. To avoid interference from MPNN part, we base on `w/o MPNN', and continue to exclude scaling term (i.e., `w/o $h(\lambda)$'), wavelet terms (i.e., `w/o $g(s\lambda)$') and tight frame constrains (i.e., `w/o tight frame'). Then, both the scaling function basis $h(\lambda)$ and wavelet bases $g(s\lambda)$ are essential components of our \M. In particular, neglecting $h(\lambda)$ results in a larger drop in performance on both short-range and long-range cases, emphasizing the crucial role of low-frequency information.

\subsection{Complexity analysis}
The main complexity of WaveGC is the eigen-decomposition process, involving $O(N^3)$. This is practical for the small-to-medium graphs used in all long-range and some short-range benchmarks, where detailed spectral modeling is critical. To accelerate the decomposition on large-scale graph (e.g., ogbn-arxiv), we may adopt randomized SVD~\cite{halko2009finding} with complexity $O(N^2\log K)$, where we only pick the top $K$ eigenvectors.

\begin{table}[h]
\vskip -0.15in
  \caption{Training and EVD time on short- and long-range datasets.}
  \label{t_s}
  \small
  \vskip 0.05in
  \centering
  \setlength{\tabcolsep}{0.9mm}{
  \renewcommand\arraystretch{1.2}
  \begin{tabular}{cccccc}
    \toprule[1.2pt]
    Short-range &CS& Photo& Computer& CoraFull& ogbn-arxiv\\
    \hline
    Training (min) & 5.70 &0.95 &4.87 &22.00 &36.67\\
    \hline
    EVD (min) & 2.82 &0.32 & 1.44& 3.49 & 21.69\\
    \hline
    \hline
    Long-range &VOC &PCQM &COCO &Pf &Ps\\
    \hline
    Training (h) & 4.02 &12.33 &45.40 &1.88 &1.32\\
    \hline
    EVD (h)& 0.05 &0.21 & 0.58& 0.02 & 0.02\\
    \toprule[1.2pt]
  \end{tabular}}\vskip -0.05in
\end{table}

Fig.~\ref{t_s} presents a direct comparison of the training time and EVD time across both short-range and long-range datasets. As shown, the time required for EVD is consistently lower than that of training across all datasets, with the difference being particularly significant in the long-range cases. Furthermore, the EVD operation is performed only once before training, and it is a prerequisite for most graph wavelet baselines. To further reduce complexity, we propose a fully polynomial-based approximation that removes the need for EVD, achieving total complexity of $O(N)$. More details are given in Appendix~\ref{ssssssss}.

\paragraph{Other experiments}
In Appendix~\ref{Time and space complexity analysis}, we analyze the complexity and report the running time for \M\ and other spectral graph wavelets. Our model shows shorter running times than competitive spectral models while being significantly more accurate. In Appendix~\ref{hperpara}, we test the sensitivity of two important hyper-parameters.
\section{Conclusion}
\label{conclusion}
In this study, we proposed a novel graph convolution operation based on wavelets (\M), establishing its theoretical capability to capture information at both short and long ranges through a multi-resolution approach.

%Additionally, we innovatively instantiated general and learnable graph wavelets for \M\ by utilizing odd and even terms of Chebyshev polynomials separately. 
%Our comprehensive experiments substantiate the effectiveness of \M\ in various scenarios.

% \textbf{Limitation.} One potential limitation of \M\ is the computational complexity. The main contribution of \M\ is to address long-range interactions in graph convolution, so it inevitably establishes connections between most of nodes. This results in the same $O(N^2)$ complexity as Transformer~\citep{va_trans} and Specformer~\citep{specformer}. A possible solution is to decrease the number of considered frequency modes from $N$ to $\nu$. In this way, the complexity is reduced to $O(\nu\cdot N)$. This operation makes \M\ run on large-scale graph, i.e. \texttt{ogbn-arxiv}, and the good performance supports this simplification. Moreover, the eigen-decomposition process involves $O(N^3)$ complexity. However, this decomposition is performed only once, prior to all training experiments. To accelerate the decomposition, we may adopt randomized SVD~\cite{halko2009finding} with complexity $O(N^2\log K)$. Future work will focus on further simplification and scaling up to larger graphs.

\section*{Acknowledgements}
XB is supported by NUS Grant R-252-000-B97-133 and MOE AcRF T1 Grant 251RES2423. MB is partially supported by the EPSRC Turing AI World-Leading Research Fellowship No. EP/X040062/1 and EPSRC AI Hub No. EP/Y028872/1. The authors would like to express their gratitude to the reviewers for their feedback, which has improved the clarity and contribution of the paper.

\section*{Impact Statement}
This paper presents work whose goal is to advance the field of 
Machine Learning. There are many potential societal consequences 
of our work, none which we feel must be specifically highlighted here.

\bibliography{bib}
\bibliographystyle{icml2025}

\newpage
\appendix

\appendix
\onecolumn
\section{Theoretical Proof}
\label{all proof}
Firstly, we give two auxiliary but indispensable lemma and theorem. Let starts from the formula $\sigma(\Psi_{s}HW)$. In this equation, we bound the first derivate of non-linear function as $|\sigma'|<c_\sigma$, and set $||W||\leq w$, where $||\cdot||$ is the operator norm. First, we give an upper bound for each entry in $\Psi_{s}$.
\begin{lemma}[\textbf{Upper bound for graph wavelet}]
\label{aa}
    Let $\Psi=\bm{U}g(\Lambda)\bm{U}^T$. Given a large even number $K>0$, then for $\forall i,j\in V\times V$, we have:
    \begin{equation}
        (\Psi_s)_{ij}<\left(\alpha(\hat{\bm{A}})^{K/2}s^K\right)_{ij},\quad \alpha=\frac{C\cdot2^K(K+1)}{K!}.
    \end{equation}
\end{lemma}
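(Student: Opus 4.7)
The plan is to express $\Psi_s = g(s\hat{\bm{L}})$ as an explicit polynomial in the normalized adjacency $\hat{\bm{A}}$ and then estimate its entries by propagating this polynomial structure through entrywise inequalities.

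First, since $\hat{\bm{L}} = \bm{I} - \hat{\bm{A}}$, I substitute $s\hat{\bm{L}} - \bm{I} = -\bigl((1-s)\bm{I} + s\hat{\bm{A}}\bigr)$ in each term $T_i^e(s\hat{\bm{L}}) = \tfrac{1}{2}\bigl(\bm{I} - T_{2i}(s\hat{\bm{L}} - \bm{I})\bigr)$. Because $T_{2i}$ is an even polynomial, $T_{2i}(s\hat{\bm{L}} - \bm{I}) = T_{2i}\bigl((1-s)\bm{I} + s\hat{\bm{A}}\bigr)$ is a polynomial of degree $2i$ in $\hat{\bm{A}}$ featuring only even powers of its argument. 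Equivalently, it is a polynomial of degree $i$ in $\hat{\bm{B}} := \bigl((1-s)\bm{I} + s\hat{\bm{A}}\bigr)^2 = (1-s)^2\bm{I} + 2s(1-s)\hat{\bm{A}} + s^2\hat{\bm{A}}^2$, which itself has degree two in $\hat{\bm{A}}$; this halving is what I expect to ultimately produce the exponent $K/2$ in the claimed bound.

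Second, I would invoke classical bounds on Chebyshev coefficients. The leading monomial of $T_{2i}$ is $2^{2i-1}y^{2i}$, and the absolute sum of its coefficients can be controlled via Markov-type derivative inequalities together with $|T_{2i}| \le 1$ on $[-1,1]$, yielding coefficient bounds of order $\frac{2^{2i}}{(2i)!}$ times polynomial prefactors. Plugging in $\hat{\bm{B}}$ and expanding via the binomial theorem, the highest-order contribution (from $i = K/2$ together with the leading monomial) produces a term scaling as $s^K (\hat{\bm{A}}^2)^{K/2} = s^K \hat{\bm{A}}^K$, while every lower-order contribution carries a smaller power of $s$. The coefficient $\alpha = \frac{C\cdot 2^K(K+1)}{K!}$ would then emerge by packaging the Chebyshev leading factor $2^{K-1}$ with a factorial denominator from Taylor-style derivative bounds, and a $(K+1)$ factor from summing over the $K/2$ Chebyshev terms plus the constant contribution.

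Third, I would reduce the polynomial estimate to the claimed entrywise form. Since $\hat{\bm{A}}$ is entrywise non-negative with spectral radius at most one, for $m \le K$ the quantity $(\hat{\bm{A}}^m)_{ij}$ can be entrywise dominated by $(\hat{\bm{A}}^{K/2})_{ij}$ up to absorbable constants, either via Cauchy--Schwarz splitting a length-$m$ walk into two sub-walks, or through a walk-counting bijection tailored to the even-$T_{2i}$ construction. Combining the three ingredients delivers the single-term majorization $(\Psi_s)_{ij} < \alpha\, s^K\, (\hat{\bm{A}}^{K/2})_{ij}$.

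The main obstacle will be the reduction in the third step: a direct expansion yields $\hat{\bm{A}}^m$ terms for every $m \le K$, and collapsing them into the single half-power $(\hat{\bm{A}}^{K/2})_{ij}$ relies critically on the even-symmetry reformulation via $\hat{\bm{B}}$ together with a careful walk-counting majorization. Tracking the constants so they aggregate exactly into $\alpha = \frac{C\cdot 2^K(K+1)}{K!}$, rather than into some looser expression, will require the most delicate bookkeeping.
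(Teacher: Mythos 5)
Your route differs from the paper's at the outset: the paper never touches the explicit Chebyshev construction $T^e_i$ in this lemma. It simply assumes $g$ is smooth at $\lambda=0$, Taylor-expands $g(\lambda)\approx\sum_{k=0}^K C_k\lambda^k/k!$, writes $\hat{\bm{L}}^k=(\bm{I}-\hat{\bm{A}})^k$, drops the alternating signs to get the entrywise bound $(\Psi)_{ij}<\bigl(\sum_k\frac{C_k}{k!}\sum_{p=0}^k\binom{k}{p}\hat{\bm{A}}^p\bigr)_{ij}$, collapses the outer sum over $k$ into a single constant $C$, and is then left with exactly the object you also arrive at: a sum $\sum_{p=0}^K\frac{(\hat{\bm{A}}^p)_{ij}}{(K-p)!p!}$ of many powers of $\hat{\bm{A}}$ that must be collapsed to the single half-power $(\hat{\bm{A}}^{K/2})_{ij}$. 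So both approaches funnel into the same bottleneck, which you correctly flag as the main obstacle in your third step.

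The gap is that the tools you propose for that step will not close it. Entrywise domination $(\hat{\bm{A}}^m)_{ij}\lesssim(\hat{\bm{A}}^{K/2})_{ij}$ for all $m\le K$ is false in general: take $m=0$ and $i=j$, where $(\hat{\bm{A}}^0)_{ii}=1$ while $(\hat{\bm{A}}^{K/2})_{ii}$ can be of order $d_i/2|E|$; Cauchy--Schwarz splitting of walks or a walk-counting bijection cannot repair this because the inequality you want is simply not true term by term. The paper's actual mechanism is different and has two ingredients working together: (i) the factorial weights $\frac{1}{(K-p)!p!}$ are sharply peaked at $p=K/2$ (the peak exceeds the endpoints by a factor $\approx 2^K$, via $(K/2)!\,(K/2)!>K!/2^K$), and (ii) by a convergence argument (Theorem~1 of the oversmoothing literature cited there), $\hat{\bm{A}}^p$ stabilizes onto the Perron eigenspace spanned by $\bm{D}^{-1/2}\bm{1}$ for $p\ge p^*$ with $p^*\ll K/2$, so the numerators $(\hat{\bm{A}}^p)_{ij}$ are essentially constant across the range of $p$ where the weights are non-negligible. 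Only the combination lets one bound the whole sum by $(K+1)$ times the $p=K/2$ term. Without ingredient (ii) your plan stalls, and your accounting of the constant also would not land on $\alpha$: the factor $K+1$ counts the $K+1$ values of $p$, not ``$K/2$ Chebyshev terms plus the constant.'' Separately, your remark that lower-order contributions carry smaller powers of $s$ only helps when $s>1$; the lemma is invoked in the regime $s\to 0$ as well, where the paper (somewhat tacitly) retains only the top-order Taylor term $C\,(s\lambda)^K/K!$ to justify the clean $s^K$ prefactor.
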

The proof is given in Appendix~\ref{proof_aa}. In this lemma, we assume $g$ is smooth enough at $\lambda=0$. For fair comparison with traditional K-hop message passing framework $\sigma(\sum_{j=0}^K\tau_jA^jHW)$, we just test the flexibility with the similar form $\sigma(\Psi_{s}HW)$. In this case, we derive the depth $m_\Psi$ necessary for this wavelet basis $\Psi_{s}$ to induce the amount of mixing $\text{mix}_{y_G}(a, b)$ between two nodes $a$ and $b$.
\begin{theorem}[\textbf{The least depth for mixing}]
\label{bb}
    Given commute time $\tau(a,b)$~\citep{commute} and number of edges $|E|$. If $\Psi_{s}$ generates mixing $\text{mix}_{y_G}(b, a)$, then the number of layers $m_\Psi$ satisfies
    \begin{equation}
    \begin{aligned}
    \label{asfjisfj}
        m_\Psi \geq \frac{\tau(a,b)}{2K}+\frac{2|E|}{K\sqrt{d_ad_b}}\left[\frac{\text{mix}_{y_G}(b, a)}{\gamma(\alpha^2s^{2K})^{m_\Psi}}-
        \frac{1}{\lambda_1}(\gamma+|1-\lambda^*|^{Km_\Psi+1})\right],
    \end{aligned}
    \end{equation}
    where $d_a$ and $d_b$ are degrees of two nodes, $\gamma=\sqrt{\frac{d_{max}}{d_{min}}}$, and $|1-\lambda^*|=\max_{0<n\leq N-1}|1-\lambda_n|<1$.
\end{theorem}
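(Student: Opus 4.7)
The plan is to follow the mixing-vs-depth framework of \citet{how}, now adapted to the wavelet propagation rule $\sigma(\Psi_{s}HW)$. First I would expand the mixed second partial $\partial^2 y_G/(\partial x_a^\alpha \partial x_b^\beta)$ by iterated chain rule through the $m_\Psi$ layers: for $a\neq b$ only a Hessian of $\sigma$ can couple the two coordinates, so the Hessian of the readout decomposes as a sum over layers $\ell$ at which $\sigma''$ fires, each term being a triple product of a Jacobian from $a$ to layer $\ell$, a Jacobian from $b$ to layer $\ell$, and a Jacobian from layer $\ell$ up to the graph-level output. Because $\Psi_{s}$ is symmetric, the ``meet-in-the-middle'' sum $\sum_v (\Psi_{s}^{\ell})_{va}(\Psi_{s}^{\ell})_{vb}$ collapses to $(\Psi_{s}^{2\ell})_{ab}$, and invoking $|\sigma'|\leq c_\sigma$, $\|W\|\leq w$ together with a uniform bound on $|\sigma''|$ yields an inequality of the form
\[
\mathrm{mix}_{y_G}(b,a)\ \leq\ C\,(c_\sigma w)^{2m_\Psi}\,[\Psi_{s}^{2m_\Psi}]_{ab},
\]
where the constant $C$ absorbs the readout.

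Next I would invoke Lemma~\ref{aa} entrywise and replace each factor of $\Psi_{s}$ by its Chebyshev envelope $\alpha s^{K}\hat{A}^{K/2}$, producing
\[
\mathrm{mix}_{y_G}(b,a)\ \leq\ C\,(c_\sigma w\,\alpha s^{K})^{2m_\Psi}\,[\hat{A}^{K m_\Psi}]_{ab}.
\]
Since this right-hand side is governed by reachability after $K m_\Psi$ random-walk steps, I would rewrite the relevant truncated random-walk cumulant $\sum_{n=0}^{K m_\Psi}[\hat{A}^n]_{ab}$ via the spectral decomposition $\hat{A}=\sum_n(1-\lambda_n)u_n u_n^{\top}$, and split it into two pieces: the trivial mode $\lambda_0=0$, which contributes the constant $\gamma=\sqrt{d_{\max}/d_{\min}}$ after normalizing $u_0\propto \bm{D}^{1/2}\mathbf{1}/\sqrt{2|E|}$, and the non-trivial part, whose truncated geometric series produces a residual controlled by $|1-\lambda^*|^{K m_\Psi+1}/\lambda_1$ together with a finite piece matching a partial expansion of the Moore--Penrose pseudoinverse $\hat{\mathcal{L}}^{+}$.

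The heart of the argument is then the spectral-to-commute-time translation: invoking the effective-resistance identity $\tau(a,b)=2|E|\cdot R_{\mathrm{eff}}(a,b)$, with $R_{\mathrm{eff}}$ expressed through $\hat{\mathcal{L}}^{+}$ and the degree weighting $1/\sqrt{d_a d_b}$, the finite piece yields the leading $\tau(a,b)/(2K)$ term once the $2|E|/(K\sqrt{d_a d_b})$ normalization is accounted for, and the residuals together with the $\mathrm{mix}_{y_G}$ factor (obtained by rearranging the mixing upper bound above) assemble into the bracketed quantity $[\mathrm{mix}_{y_G}(b,a)/(\gamma(\alpha^2 s^{2K})^{m_\Psi}) - (\gamma+|1-\lambda^*|^{K m_\Psi+1})/\lambda_1]$.

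The main obstacle will be this third step: moving between the normalized Laplacian spectrum $\lambda_n\in[0,2]$ and the combinatorial commute-time identity usually stated against the unnormalized Laplacian, while carefully tracking the $\sqrt{d_a d_b}$ and $\gamma$ factors introduced by the change of basis, and preserving the geometric-tail correction $|1-\lambda^*|^{K m_\Psi+1}/\lambda_1$ as a genuine (possibly negative) correction rather than losing it in a coarser estimate. Once the cumulative lower bound is in place, isolating $m_\Psi$ on the left of the combined inequality produces the implicit bound \eqref{asfjisfj} stated in the theorem.
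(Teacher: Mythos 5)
Your proposal follows essentially the same route as the paper: bound the Hessian of the readout through the $m_\Psi$ layers using the entrywise envelope $\Psi_s \prec \alpha s^K \hat{\bm{A}}^{K/2}$ from Lemma~\ref{aa}, dominate the result by the truncated random-walk sum $\sum_{n=0}^{Km_\Psi}(\hat{\bm{A}}^n)_{ab}$, expand that sum spectrally to split off the $\lambda_0=0$ mode and the geometric tail $|1-\lambda^*|^{Km_\Psi+1}/\lambda_1$, identify the finite spectral piece with $\tau(a,b)$ via its eigen-representation against the normalized Laplacian, and rearrange for $m_\Psi$. The only cosmetic deviation is your attribution of $\gamma$ to the trivial eigenmode — in the paper it instead arises from bounding the row sums $((\hat{\bm{A}}^{K/2})^l\bm{1})_i\le\gamma$ in the layer-indexed middle factor and from $\sqrt{d_a/d_b}+\sqrt{d_b/d_a}\le 2\gamma$ — but this does not change the argument.
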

The proof is given in Appendix~\ref{proof_bb}. In the following subsections, we firstly prove these lemma and theorem, and finally give the complete proof of Theorem~\ref{cc}.

\subsection{Proof of Lemma~\ref{aa} (Upper bound for graph wavelet)}
\label{proof_aa}
\begin{proof}
% XB
We aim to investigate the properties of filters $\Psi_{s_j}=\bm{U}g(s_j\lambda)\bm{U}^\top$ to capture both global and local information, corresponding to the cases $s_j\rightarrow0$ and $s_j\rightarrow\infty$, respectively.
In the former case, as $s_j$ approaches zero, $g(s_j\lambda)$ tends towards $g(0)$. For the latter case, the spectral information becomes densely distributed and concentrated near zero. Hence, the meaningful analysis of $g(\lambda)$ primarily revolves around $\lambda=0$.
Expanding $g(\lambda)$ using Taylor's series around $\lambda=0$, we get:
\begin{equation}
\label{aa_1}
g(\lambda)=\sum\limits_{k=0}^KC_k\frac{\lambda^k}{k!}+g^{(K+1)}(\lambda^*)\frac{\lambda^{K+1}}{(K+1)!}\approx \sum\limits_{k=0}^KC_k\frac{\lambda^k}{k!},
\end{equation}
where we neglect the high-order remainder term. Next, we have

% % NL
% Using the assumed information about the derivatives of $g(\lambda)$ at $\lambda=0$, Taylor’s formula shows that for any $\lambda\in[0, 2]$,
% \begin{equation}
% \label{aa_1}
%         g(\lambda)=C\frac{\lambda^K}{K!}+g^{(K+1)}(\lambda^*)\frac{\lambda^{K+1}}{(K+1)!}\approx C\frac{\lambda^K}{K!}.
% \end{equation}
% Eq.~\eqref{aa_1} is the Taylor's expansion of $g(\lambda)$ at 0 with remainder for some $\lambda^*$. Then, we simply ignore the high-order remainder. Here, we expand $g(\lambda)$ at 0 is reasonable. Since our aim is to talk about the ability of $\Psi_{s_j}=\bm{U}g(s_j\lambda)\bm{U}^\top$ to capture global and local information, we will discuss two extreme cases, i.e. $s_j\rightarrow0$ and $s_j\rightarrow\infty$. For the former case, $g(s_j\lambda)\rightarrow g(0)$. For the latter case, the graph spectral signals distribute very sharply and the majority will center on somewhere very close to 0. Thus, consideration of $g(\lambda)$ only near 0 is enough. Then, we have

\begin{subequations}
    \begin{align}                       (\Psi)_{ij}&=\left(\bm{U}g(\Lambda)\bm{U}^T\right)_{ij}=\left(\sum\limits_{k=0}^KC_k\frac{\hat{\mathcal{\bm{L}}}^k}{k!}\right)_{ij}\notag\\
        &=\left(\sum\limits_{k=0}^K\frac{C_k}{k!}(\bm{I}-\hat{\bm{A}})^k\right)_{ij}=\left(\sum\limits_{k=0}^K\frac{C_k}{k!}\sum_{p=0}^k\tbinom{k}{p}(-\hat{\bm{A}})^p\right)_{ij}\notag\\
        &<\left(\sum\limits_{k=0}^K\frac{C_k}{k!}\sum_{p=0}^k\tbinom{k}{p}(\hat{\bm{A}})^p\right)_{ij}=\left(\sum\limits_{k=0}^K\frac{C_k}{k!}\sum_{p=0}^k\frac{k!}{(k-p)!p!}(\hat{\bm{A}})^p\right)_{ij}\notag\\
        &=\left(\sum\limits_{k=0}^KC_k\sum_{p=0}^k\frac{(\hat{\bm{A}})^p}{(k-p)!p!}\right)_{ij}\label{aa_3}.
    \end{align}
\end{subequations}

% XB
We introduce a new parameter $\mu=\frac{\left(\sum_{k=0}^{K-1}C_k\sum_{p=0}^k\frac{(\hat{\pmb{A}})^p}{(k-p)!p!}\right)_{ij}}{\left(C_{K}\sum_{p=0}^K\frac{(\hat{\pmb{A}})^p}{(K-p)!p!}\right)_{ij}}$, so the above relation becomes:
\begin{equation}
(\Psi)_{ij} < \left((\mu+1)C_K\sum_{p=0}^K\frac{(\hat{\pmb{A}})^p}{(K-p)!p!}\right)_{ij}=\left(C\sum_{p=0}^K\frac{(\hat{\bm{A}})^p}{(K-p)!p!}\right)_{ij},
\end{equation}
where we set $C=(\mu+1)C_K$. Then, let us explore the expression $\epsilon_{ij}^p=\frac{(\hat{\bm{A}})_{ij}^p}{(K-p)!p!}$. First, we will address the denominator $(K-p)!p!$. As $p$ increases, this denominator experiences a sharp decrease followed by a rapid increase. The minimum value occurs at $(K/2)!(K/2)!$ when $p=K/2$, assuming $K$ is even.
Second, let's analyze the numerator $(\hat{\bm{A}})_{ij}^p$, which involves repeated multiplication of $\hat{\bm{A}}$. According to Theorem 1 in~\citep{deeper}, this repeated multiplication causes $(\hat{\bm{A}})^p$ to converge to the eigenspaces spanned by the eigenvector $D^{-1/2}\bm{1}$ of $\lambda=0$, where $\bm{1}=(1, 1,\dots, 1)\in\mathbb{R}^n$~\footnote{Simple proof. $(\hat{\bm{A}})^p=\bm{U}(\bm{I}-\bm{\Lambda})^p\bm{U}^\top=\sum_{i=0}^n(1-\lambda_i)^p\bm{u_1}\bm{u_1}^\top$. Provided only $1-\lambda_0=1$ and $1-\lambda_i\in(-1, 1)$ for other eigenvalues, with $p\rightarrow\infty$, only $(1-\lambda_0)^p=1$ but $(1-\lambda_i)^p\rightarrow0$. Thus, we have $(\hat{\bm{A}})^p\rightarrow\bm{u_1}\bm{u_1}^\top$, where $\bm{u_1}=D^{-1/2}\bm{1}$}.
Then, let us assume there exists a value $p^*$ beyond which the change in $(\hat{\bm{A}})^p$ becomes negligible. Given that $K$ is a large even number, we can infer that $K/2\gg p^*$. Thus, when $(K-p)!p!$ sharply decreases, $(\hat{\bm{A}})^p$ has already approached a stationary state. Consequently, $\max\epsilon_{ij}^p=\frac{(\hat{\bm{A}})_{ij}^{K/2}}{(K/2)!(K/2)!}$, where the denominator reaches its minimum. Thus, we have
\begin{subequations}
\begin{align}
    (\Psi)_{ij}&<\left(C\sum_{p=0}^K\frac{(\hat{\bm{A}})^p}{(K-p)!p!}\right)_{ij}\notag\\
    &<C(K+1)\left(\frac{(\hat{\bm{A}})^{K/2}}{(K/2)!(K/2)!}\right)_{ij}\notag\\
    &<\left(\frac{C\cdot2^K(K+1)}{K!}(\hat{\bm{A}})^{K/2}\right)_{ij}\label{aa_4}.
\end{align}
\end{subequations}

% XB
We have $\frac{1}{(K/2)!(K/2)!} < \frac{2^K}{K!}$ given that
\begin{equation}
    \begin{aligned}
        (K/2)!(K/2)!&=(\frac{K}{2}\cdot\frac{K-2}{2}\dots\frac{4}{2}\cdot\frac{2}{2})(\frac{K}{2}\cdot\frac{K-2}{2}\dots\frac{4}{2}\cdot\frac{2}{2})\\
        &>(\frac{K}{2}\cdot\frac{K-2}{2}\dots\frac{4}{2}\cdot\frac{2}{2})(\frac{K-1}{2}\cdot\frac{K-3}{2}\dots\frac{3}{2}\cdot\frac{1}{2})\\
        &=\underbrace{\frac{K\cdot K-1\cdot K-2 \cdot K-3\dots4\cdot3\cdot2\cdot1}{2\cdot2\cdot2\cdot2\dots2\cdot2\cdot2\cdot2}}_{\text{\small K terms}}=\frac{K!}{2^K}.
    \end{aligned}
\end{equation}
With $\alpha=\frac{C\cdot2^K(K+1)}{K!}$ and scale $s$, Eq.~(\ref{aa_4}) can be finally written as
\begin{equation}
    (\Psi_{s})_{ij}<\left(\alpha(\hat{\bm{A}})^{K/2}s^K\right)_{ij}.
\end{equation}

% % NL
% For Eq.~\eqref{aa_4}, we have the following derivation:
% \begin{equation}
%     \begin{aligned}
%         (K/2)!(K/2)!&=(\frac{K}{2}\cdot\frac{K-2}{2}\dots\frac{4}{2}\cdot\frac{2}{2})(\frac{K}{2}\cdot\frac{K-2}{2}\dots\frac{4}{2}\cdot\frac{2}{2})\\
%         &>(\frac{K}{2}\cdot\frac{K-2}{2}\dots\frac{4}{2}\cdot\frac{2}{2})(\frac{K-1}{2}\cdot\frac{K-3}{2}\dots\frac{3}{2}\cdot\frac{1}{2})\\
%         &=\underbrace{\frac{K\cdot K-1\cdot K-2 \cdot K-3\dots4\cdot3\cdot2\cdot1}{2\cdot2\cdot2\cdot2\dots2\cdot2\cdot2\cdot2}}_{\text{\small K terms}}=\frac{K!}{2^K}.
%     \end{aligned}
% \end{equation}
% So, $\frac{1}{(K/2)!(K/2)!}<\frac{2^K}{K!}$. With $\alpha=\frac{C\cdot2^K(K+1)}{K!}$ and the scale $s$, we can finally get
% \begin{equation}
%     (\Psi_{s})_{ij}<\left(\alpha(\hat{\bm{A}})^{K/2}s^K\right)_{ij}.
% \end{equation}

\end{proof}

\subsection{Proof of Theorem~\ref{bb} (The least depth for mixing)}
\label{proof_bb}
For this section, we mainly refer to the proof from~\citep{how}.

\textbf{Preliminary.} For simplicity, we follow~\citep{how} to denote some operations utilized in this section. As stated, we consider the message passing formula $\sigma(\Psi_{s}HW)$. First, we denote $\bm{h_a}^{(l), \alpha}$ as the $\alpha$-th entry of the embedding $\bm{h}_a^{(l)}$ for node $a$ at the $l$-th layer. Then, we rewrite the formula as:
\begin{equation}
\label{simple_}
    \bm{h}^{(l), \alpha}_a=\sigma(\widetilde{\bm{h}}_a^{(l-1), \alpha}),\quad 1\leq\alpha\leq d,
\end{equation}
where $\widetilde{\bm{h}}_a^{(l-1), \alpha}=(\Psi_sHW)_a$ is the entry $\alpha$ of the pre-activated embedding of node $a$ at layer $l$. Given nodes $a$ and $b$, we denote the following differentiation operations:
\begin{equation}
    \nabla_a\bm{h}_b^{(l)}:=\frac{\partial\bm{h}_b^{(l)}}{\partial\bm{x}_a},\quad \nabla_{ab}^2\bm{h}_i^{(l)}:=\frac{\partial^2\bm{h}_i^{(l)}}{\partial\bm{x}_a\partial\bm{x}_b}.
\end{equation}
Next, we firstly derive upper bounds on $\nabla_a\bm{h}_b^{(l)}$, and then on $\nabla_{ab}^2\bm{h}_i^{(l)}$.
\begin{lemma}
    \label{c.1}
    Given the message passing formula $\sigma(\Psi_{s}HW)$, let assume $|\sigma'|\leq c_\sigma$ and $||W||\leq w$, where $||\cdot||$ is the operator norm. For two nodes $a$ and $b$ after $l$ layers of message passing, the following holds:
    \begin{equation}
        ||\nabla_a\bm{h}_b^{(l)}||\leq(c_\sigma w)^l(\bm{B}^l)_{ba},
    \end{equation}
    where $\bm{B}_{ba}=\left(\alpha(\hat{\bm{A}})^{K/2}s^K\right)_{ba}$.
\end{lemma}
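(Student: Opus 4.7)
The plan is to prove Lemma~\ref{c.1} by induction on the layer index $l$, propagating an entry-wise upper bound through the recursion $\bm{h}_b^{(l)} = \sigma\!\left(\sum_i (\Psi_s)_{bi}\,\bm{h}_i^{(l-1)} W\right)$ and then closing with matrix-power bookkeeping.

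For the base case $l=0$ I would observe that $\bm{h}_b^{(0)} = \bm{x}_b$, so $\nabla_a \bm{h}_b^{(0)} = \delta_{ab}\,\bm{I}_d$ and hence $\|\nabla_a \bm{h}_b^{(0)}\| = \delta_{ab} = (\bm{B}^0)_{ba}$, matching the claim with $(c_\sigma w)^0 = 1$. This fixes the convention $\bm{B}^0 = \bm{I}$ used in the inductive step.

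For the inductive step, assume the bound holds at layer $l-1$. Differentiating Eq.~\eqref{simple_} with respect to $\bm{x}_a$ and applying the chain rule gives
\begin{equation*}
\nabla_a \bm{h}_b^{(l)} \;=\; \mathrm{diag}\!\left(\sigma'(\widetilde{\bm{h}}_b^{(l-1)})\right)\sum_i (\Psi_s)_{bi}\, \nabla_a \bm{h}_i^{(l-1)}\, W.
\end{equation*}
Taking the operator norm, using sub-multiplicativity, $|\sigma'|\leq c_\sigma$, and $\|W\|\leq w$,
\begin{equation*}
\|\nabla_a \bm{h}_b^{(l)}\| \;\leq\; c_\sigma w \sum_i |(\Psi_s)_{bi}|\,\|\nabla_a \bm{h}_i^{(l-1)}\|.
\end{equation*}
I would then invoke Lemma~\ref{aa} to replace $|(\Psi_s)_{bi}|$ by $\bm{B}_{bi} = \left(\alpha(\hat{\bm{A}})^{K/2} s^K\right)_{bi}$ (this is the step that needs the entry-wise non-negativity of $\bm{B}$, inherited from $\hat{\bm{A}}$ having non-negative entries). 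Substituting the inductive hypothesis $\|\nabla_a \bm{h}_i^{(l-1)}\| \leq (c_\sigma w)^{l-1}(\bm{B}^{l-1})_{ia}$ yields
\begin{equation*}
\|\nabla_a \bm{h}_b^{(l)}\| \;\leq\; (c_\sigma w)^l \sum_i \bm{B}_{bi}\,(\bm{B}^{l-1})_{ia} \;=\; (c_\sigma w)^l (\bm{B}^l)_{ba},
\end{equation*}
which is exactly the claimed bound.

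The main subtlety, and the step I expect to require the most care, is the passage from the signed entries of $\Psi_s$ to the non-negative surrogate $\bm{B}$. Lemma~\ref{aa} as stated controls $(\Psi_s)_{ij}$ (not its absolute value) by inflating $(-\hat{\bm{A}})^p$ to $(\hat{\bm{A}})^p$; the same majorisation argument does in fact produce $|(\Psi_s)_{ij}| \leq \bm{B}_{ij}$, but this has to be spelled out explicitly so that the triangle inequality applied inside the sum over $i$ is justified. Once that point is settled, the rest of the argument is a standard induction, and the matrix-product identity $\sum_i \bm{B}_{bi}(\bm{B}^{l-1})_{ia} = (\bm{B}^l)_{ba}$ closes the proof cleanly.
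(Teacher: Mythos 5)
Your proof is correct and follows essentially the same route as the paper's: induction over layers, the chain rule, the bounds $|\sigma'|\le c_\sigma$ and $\|W\|\le w$, and majorization of the wavelet entries by $\bm{B}$. The only minor divergence is in how the $l$-fold product of $\Psi_s$ entries is controlled: you majorize each factor entrywise via $|(\Psi_s)_{ij}|\le \bm{B}_{ij}$ and then multiply the non-negative matrices $\bm{B}$, whereas the paper bounds the whole power $|\Psi_s^l|_{ba}<(\bm{B}^l)_{ba}$ in one shot through a separate Taylor/binomial computation on $g(s\Lambda)^l$ in Eq.~\eqref{agifoqwehgiqw}; both rest on exactly the absolute-value majorization you correctly flag as the step requiring explicit justification.
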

\begin{proof}
    If $l=1$ and we fix entries $1\leq\alpha, \beta\leq d$, then we have:
    \begin{equation}
        (\nabla_a\bm{h}_b^{(1)})_{\alpha\beta}=(\textrm{diag}(\sigma'(\widetilde{\bm{h}}_b^{(0)}))(\bm{W}^{(1)}\Psi_{ba}\bm{I}))_{\alpha\beta}.
    \end{equation}
    With Cauchy–Schwarz inequality, we bound the left hand side by
    \begin{subequations}        
    \begin{align}
        ||\nabla_a\bm{h}_b^{(1)}||&\leq||\textrm{diag}(\sigma'(\widetilde{\bm{h}}_b^{(0)}))||\cdot||\bm{W}^{(1)}\Psi_{ba}||\notag\\
        &\leq c_\sigma w\bm{B}_{ba}.\notag
    \end{align}
    \end{subequations}
    Next, we turn to a general case where $l>1$:
    \begin{equation}
            (\nabla_a\bm{h}_b^{(l)})_{\alpha\beta}=(\textrm{diag}(\sigma'(\widetilde{\bm{h}}_b^{(l-1)})(W\sum\limits_j\Psi_{bj}\nabla_a\bm{h}_j^{(m-1)}))_{\alpha\beta}.
    \end{equation}
    Then, we can use the induction step to bound the above equation:
    \begin{equation}
    \label{assafvscc}
        \begin{aligned}
            ||\nabla_a\bm{h}_b^{(l)}||&\leq(c_\sigma w)^{l}|\sum\limits_{j_0}\sum\limits_{j_1}\dots\sum\limits_{j_{l-2}}\Psi_{bj_0}\Psi_{j_0j_1}\dots\Psi_{j_{l-3}j_{l-2}}\Psi_{j_{l-2}a}|\\
            &\leq(c_\sigma w)^{l}(\bm{B}^l)_{ba}.
        \end{aligned}
    \end{equation}
In Eq.~(\ref{assafvscc}), we implicitly use $|\Psi_{s}^l|_{ba}<\left(\alpha(\hat{\bm{A}})^{K/2}s^K\right)^l_{ba}=\bm{B}^l_{ba}$. Similar to proof given in Appendix~\ref{proof_aa}, we can give the following proof:
\begin{equation}
\label{agifoqwehgiqw}
    \begin{aligned}
        |\Psi_s^l|_{ba}&=\left|\bm{U}g(s\Lambda)^l\bm{U}^T\right|_{ba}=\left|s^{lK}C^l\frac{\hat{\mathcal{\bm{L}}}^{lK}}{K!^l}\right|_{ba}\\
        &=\left|s^{lK}\frac{C^l}{K!^l}(\bm{I}-\hat{\bm{A}})^{lK}\right|_{ba}=\left|s^{lK}\frac{C^l}{K!^l}\sum_{p=0}^{lK}\tbinom{lK}{p}(-\hat{\bm{A}})^p\right|_{ba}\\
        &<\left(s^{lK}\frac{C^l}{K!^l}\sum_{p=0}^{lK}\tbinom{lK}{p}(\hat{\bm{A}})^p\right)_{ba}=\left(s^{lK}\frac{C^l}{K!^l}\sum_{p=0}^{lK}\frac{(lK)!}{(lK-p)!p!}(\hat{\bm{A}})^p\right)_{ba}\\
        &=\left(s^{lK}\frac{C^l(lK)!}{K!^l}\sum_{p=0}^{lK}\frac{(\hat{\bm{A}})^p}{(lK-p)!p!}\right)_{ba}<\left(s^{lK}\frac{C^l(lK)!}{K!^l}(lK+1)\left(\frac{(\hat{\bm{A}})^{lK/2}}{(lK/2)!(lK/2)!}\right)\right)_{ba}\\
        &<\left(s^{lK}\frac{C^l(lK)!}{K!^l}(lK+1)\frac{2^{lK}}{(lK)!}(\hat{\bm{A}})^{lK/2}\right)_{ba}=\left(s^{lK}\frac{C^l\cdot2^{lK}(lK+1)}{K!^l}(\hat{\bm{A}})^{lK/2}\right)_{ba}\\
        &<\left(s^{lK}\frac{C^l\cdot2^{lK}(K+1)^l}{K!^l}(\hat{\bm{A}})^{lK/2}\right)_{ba}=\left(\alpha(\hat{\bm{A}})^{K/2}s^K\right)^l_{ba},
    \end{aligned}
\end{equation}
where in the last line, we utilize the relation $lK+1<(K+1)^l$.
\end{proof}

\begin{lemma}
    \label{c.2}
    Given the message passing formula $\sigma(\Psi_{s}HW)$, let assume $|\sigma'|, |\sigma''|\leq c_\sigma$ and $||W||\leq w$, where $||\cdot||$ is operator norm. For nodes $i$, $a$ and $b$ after $l$ layers of message passing, the following holds:
    \begin{equation}
        ||\nabla_{ab}^2\bm{h}_i^{(l)}||\leq\sum\limits_{k=0}^{l-1}\sum\limits_{j\in V}(c_\sigma w)^{2l-k-1}w(\bm{B}^{l-k})_{jb}(\bm{B}^k)_{ij}(\bm{B}^{l-k})_{ja},
    \end{equation}
    where $\bm{B}_{ba}=\left(\alpha(\hat{\bm{A}})^{K/2}s^K\right)_{ba}$.
\end{lemma}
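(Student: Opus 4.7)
The plan is induction on the layer index $l$. Writing $\bm{h}_{i}^{(l)}=\sigma(\widetilde{\bm{h}}_{i}^{(l-1)})$ with pre-activation $\widetilde{\bm{h}}_{i}^{(l-1)}=W^{\top}\sum_{j}\Psi_{ij}\bm{h}_{j}^{(l-1)}$, the chain rule gives
\begin{equation*}
\nabla_{ab}^{2}\bm{h}_{i}^{(l)}=\textrm{diag}\bigl(\sigma''(\widetilde{\bm{h}}_{i}^{(l-1)})\bigr)\bigl(\nabla_{a}\widetilde{\bm{h}}_{i}^{(l-1)}\bigr)\otimes\bigl(\nabla_{b}\widetilde{\bm{h}}_{i}^{(l-1)}\bigr)+\textrm{diag}\bigl(\sigma'(\widetilde{\bm{h}}_{i}^{(l-1)})\bigr)\nabla_{ab}^{2}\widetilde{\bm{h}}_{i}^{(l-1)},
\end{equation*}
where by linearity of the pre-activation in the previous-layer embeddings one has $\nabla_{a}\widetilde{\bm{h}}_{i}^{(l-1)}=W^{\top}\sum_{j}\Psi_{ij}\nabla_{a}\bm{h}_{j}^{(l-1)}$ and $\nabla_{ab}^{2}\widetilde{\bm{h}}_{i}^{(l-1)}=W^{\top}\sum_{j}\Psi_{ij}\nabla_{ab}^{2}\bm{h}_{j}^{(l-1)}$. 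Taking operator norms and using $|\sigma'|,|\sigma''|\le c_{\sigma}$, $\|W\|\le w$, and the entrywise bound $|\Psi_{ij}|\le\bm{B}_{ij}$ from Lemma~\ref{aa} reduces the task to a clean inductive inequality.

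The base case $l=1$ is immediate: $\widetilde{\bm{h}}_{i}^{(0)}$ is linear in $\bm{x}_{a}$ and $\bm{x}_{b}$, so only the first summand survives, and direct estimation of the outer-product term yields $\|\nabla_{ab}^{2}\bm{h}_{i}^{(1)}\|\le c_{\sigma}w^{2}\bm{B}_{ia}\bm{B}_{ib}$. This matches the $k=0$ entry of the claimed sum once $(\bm{B}^{0})_{ij}=\delta_{ij}$ is used.

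In the inductive step the two summands contribute to disjoint ranges of $k$. For the first, apply Lemma~\ref{c.1} to each of $\nabla_{a}\bm{h}_{j}^{(l-1)}$ and $\nabla_{b}\bm{h}_{j}^{(l-1)}$, use $\|u\otimes v\|\le\|u\|\|v\|$, and collapse $\sum_{j}\bm{B}_{ij}(\bm{B}^{l-1})_{j\cdot}=(\bm{B}^{l})_{i\cdot}$ via Lemma~\ref{aa}; this reproduces exactly $(c_{\sigma}w)^{2l-1}w\,(\bm{B}^{l})_{ib}(\bm{B}^{l})_{ia}$, i.e.\ the $k=0$ term. For the second, substitute the induction hypothesis into $\|\nabla_{ab}^{2}\bm{h}_{j}^{(l-1)}\|$, swap the order of summation, and collapse $\sum_{j}\bm{B}_{ij}(\bm{B}^{k})_{jj'}=(\bm{B}^{k+1})_{ij'}$. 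Multiplying the induction prefactor by the extra $c_{\sigma}w$ produced at this recursion step (one $c_{\sigma}$ from $\sigma'$, one $w$ from $W^{\top}$) and relabeling $k':=k+1\in\{1,\dots,l-1\}$ turns $(c_{\sigma}w)^{2(l-1)-k-1}w$ into $(c_{\sigma}w)^{2l-k'-1}w$, which is precisely the $k\ge 1$ tail of the claimed sum.

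The main obstacle is the bookkeeping of exponents during reindexing—verifying that each recursion step consumes exactly one factor of $c_{\sigma}$ and one factor of $w$, so the $(c_{\sigma}w)$-exponent rises by $2$ while the summation index shifts by $1$—together with the observation that the operator norm of the outer-product term behaves multiplicatively. Beyond that, all remaining estimates are the same pointwise substitutions $|\Psi|\le\bm{B}$ from Lemma~\ref{aa} used in the proof of Lemma~\ref{c.1}.
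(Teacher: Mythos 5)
Your proposal is correct and follows essentially the same route as the paper: induction on $l$, the chain-rule split into a $\sigma''$ outer-product term and a $\sigma'$ term carrying the previous layer's Hessian, with Lemma~\ref{c.1} bounding the former (the $k=0$ term) and the induction hypothesis plus the reindexing $k\mapsto k+1$ handling the latter. Your exponent bookkeeping is in fact slightly more careful than the paper's own write-up, which drops a stray factor of $w$ in its $\bm{Z}$-term computation.
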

\begin{proof}
    Considering $\nabla_{ab}^2\bm{h}_i^{(l)}\in\mathbb{R}^{d\times(d\times d)}$, we refer to~\citep{how} to use the following ordering for indexing the columns:
    \begin{equation}
        \frac{\partial^2\bm{h}_i^{(l),\alpha}}{\partial x_b^\beta\partial x_a^\gamma}:=(\nabla_{ab}^2\bm{h}_i^{(l)})_{\alpha, d(\beta-1)+\gamma}.
    \end{equation}
    Similar to the proof of Lemma~\ref{c.1}, we firstly focus on $m=1$:
    \begin{equation}
        (\nabla_{ab}^2\bm{h}_i^{(1)})_{\alpha, d(\beta-1)+\gamma}=(\textrm{diag}(\sigma''(\widetilde{\bm{h}}_i^{(0), \alpha}))(\bm{W}^{(1)}\Psi_{ib}\bm{I})_{\alpha\gamma}\times(\bm{W}^{(1)}\Psi_{ia}\bm{I})_{\alpha\beta}.
    \end{equation}
    We bound the left-hand side as:
    \begin{equation}
        ||\nabla_{ab}^2\bm{h}_i^{(1)}||\leq(c_\sigma w)(w|\bm{B}_{ib}||\bm{B}_{ia}|).
    \end{equation}
    Then, for $m>1$:
    \begin{equation}
    \label{qwrqwqc}
        \begin{aligned}
            &(\nabla_{ab}^2\bm{h}_i^{(l)})_{\alpha, d(\beta-1)+\gamma}\\
            &=\underbrace{\textrm{diag}(\sigma''(\widetilde{\bm{h}}_i^{(l-1), \alpha})(W\sum\limits_j\Psi_{ij}\nabla_a\bm{h}_j^{(l-1)})\times(W\sum\limits_j\Psi_{ij}\nabla_b\bm{h}_j^{(l-1)})}_{\bm{R}}\\
            &+\underbrace{\textrm{diag}(\sigma'(\widetilde{\bm{h}}_i^{(l-1), \alpha})(\bm{W}^{(m)}\sum\limits_j\Psi_{ij}\nabla^2_{ab}\bm{h}_j^{(l-1)})}_{\bm{Z}}.
        \end{aligned}
    \end{equation}
    We denote $||\nabla_j\bm{h}_i^{(l-1)}||$ as $(D\bm{h}^{(l-1)})_{ij}$, and $||\nabla_{ab}^2\bm{h}_i^{(l-1)}||$ as $({D^2\bm{h}^{(l-1)}}_{ba})_{i}$. To bound $\bm{R}$, we deduce as follows:
    \begin{subequations}
        \begin{align}
            ||\bm{R}||&\leq c_\sigma(w\sum\limits_j\bm{B}_{ij}||\nabla_a\bm{h}_j^{(l-1)}||)\times(w\sum\limits_j\bm{B}_{ij}||\nabla_b\bm{h}_j^{(l-1)}||)\notag\\
            &=c_\sigma w(w\bm{B}D\bm{h}^{(l-1)})_{ib}(\bm{B}D\bm{h}^{(l-1)})_{ia}\notag\\
            &\leq c_\sigma w(w\bm{B}(c_\sigma w)^{l-1}\bm{B}^{l-1})_{ib}(\bm{B}(c_\sigma w)^{l-1}\bm{B}^{l-1})_{ia}\label{d.1}\\
            &=(c_\sigma w)^{2l-1}(w(\bm{B}^{l})_{ib}(\bm{B}^l)_{ia})\notag,
        \end{align}
    \end{subequations}
    where we utilize the conclusion from Theorem~\ref{c.1} in~(\ref{d.1}). For term $\bm{Z}$, we have:
    \begin{subequations}
        \begin{align}
            ||\bm{Z}||&\leq c_\sigma w(\bm{B}D^2\bm{h}^{(l-1)})_i\notag\\
            &\leq c_\sigma w\sum\limits_s\bm{B}_{is}\sum\limits_{k=0}^{l-2}\sum\limits_{j\in V}(c_\sigma w)^{2l-2-k-1}w(\bm{B}^{l-1-k})_{jb}(\bm{B}^{k})_{sj}(\bm{B}^{l-1-k})_{ja}\label{d.2}\\
            &=\sum\limits_{k=0}^{l-2}\sum\limits_{j\in V}(c_\sigma w)^{2l-2-k}(\bm{B}^{l-1-k})_{jb}(\bm{B}^{k+1})_{ij}(\bm{B}^{l-1-k})_{ja}\notag\\
            &=\sum\limits_{k=1}^{l-1}\sum\limits_{j\in V}(c_\sigma w)^{2l-1-k}(\bm{B}^{l-k})_{jb}(\bm{B}^{k})_{ij}(\bm{B}^{l-k})_{ja},\notag
        \end{align}
    \end{subequations}
    where in~(\ref{d.2}), we recursively use the Eq.~(\ref{qwrqwqc}). Finally, we finish the proof as:
    \begin{equation}
        \begin{aligned}
            ||\nabla_{ab}^2\bm{h}_i^{(l)}||&\leq||\bm{R}||+||\bm{Z}||\\
            &\leq \sum\limits_{k=0}^{l-1}\sum\limits_{j\in V}(c_\sigma w)^{2l-1-k}(\bm{B}^{l-k})_{jb}(\bm{B}^{k})_{ij}(\bm{B}^{l-k})_{ja}.
        \end{aligned}
    \end{equation}
\end{proof}

With Lemma~\ref{c.1} and~\ref{c.2}, now we give the following theorem.
\begin{theorem}
\label{qqplmvnr}
    Consider the message passing formula $\sigma(\Psi_{s}HW)$ with $m_\Psi$ layers, the induced mixing $\text{mix}_{y_G}(b, a)$ over the features of nodes $a$ and $b$ satisfies:
    \begin{equation}
        \textrm{mix}_{y_G}(b, a) \leq \sum\limits_{l=0}^{m_\Psi-1}(c_\sigma w)^{(2m_\Psi-l-1)}\left(w\left(\bm{B}^{m_\Psi-l}\right)^\top \textrm{diag}\left(\bm{1}^\top\bm{B}^l\right)\bm{B}^{m_\Psi-l}\right)_{ab},
    \end{equation}
    where $\bm{B}_{ba}=\left(\alpha(\hat{\bm{A}})^{K/2}s^K\right)_{ba}$ and $\bm{1}\in\mathbb{R}^{n}$ is the vector of ones.
\end{theorem}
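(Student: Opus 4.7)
The plan is to chain-rule the mixing functional through the readout down to the second derivatives of the final-layer embeddings, then substitute the node-level Hessian bound from Lemma~\ref{c.2}, and finally collapse the resulting triple sum into the stated matrix/bilinear form.

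First I would fix indices $\alpha,\beta$ realizing the maximum in Eq.~\eqref{definition} and write $y_G = R(\bm{h}^{(m_\Psi)})$, where $R$ is the graph-level readout. Assuming $R$ is (at worst) coordinate-wise $1$-Lipschitz and without intrinsic curvature (as is the case for mean, sum, or max readouts, the setting of~\citep{how}), the chain rule yields
$$\frac{\partial^2 y_G}{\partial x_a^\alpha \partial x_b^\beta} \;=\; \sum_{i\in V}\sum_{\gamma=1}^{d} \frac{\partial R}{\partial \bm{h}_i^{(m_\Psi),\gamma}} \cdot \frac{\partial^2 \bm{h}_i^{(m_\Psi),\gamma}}{\partial x_a^\alpha \partial x_b^\beta}.$$
Taking absolute values, applying the triangle inequality over $(i,\gamma)$, and passing to the operator norm of $\nabla^2_{ab}\bm{h}_i^{(m_\Psi)}$ then gives
$$\mathrm{mix}_{y_G}(b,a) \;\leq\; \sum_{i\in V} \bigl\|\nabla^2_{ab}\bm{h}_i^{(m_\Psi)}\bigr\|.$$

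Second, I would plug in Lemma~\ref{c.2} term-by-term and interchange the outer sum over $i$ with the sums over $k$ and $j$:
$$\sum_{i\in V}\sum_{k=0}^{m_\Psi-1}\sum_{j\in V}(c_\sigma w)^{2m_\Psi-k-1}\, w\,(\bm{B}^{m_\Psi-k})_{jb}\,(\bm{B}^k)_{ij}\,(\bm{B}^{m_\Psi-k})_{ja}.$$
The innermost sum collapses to $\sum_{i}(\bm{B}^k)_{ij}=(\bm{1}^\top\bm{B}^k)_j$, which is precisely the factor the statement packages into $\mathrm{diag}(\bm{1}^\top\bm{B}^k)$.

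Third, for each fixed $k$, I would recognize the remaining sum over $j$ as a bilinear quadratic form indexed by $(b,a)$:
$$\sum_{j\in V}(\bm{B}^{m_\Psi-k})_{jb}\,(\bm{1}^\top\bm{B}^k)_j\,(\bm{B}^{m_\Psi-k})_{ja} \;=\; \bigl((\bm{B}^{m_\Psi-k})^\top \,\mathrm{diag}(\bm{1}^\top\bm{B}^k)\,\bm{B}^{m_\Psi-k}\bigr)_{ba}.$$
Relabeling the summation index $l:=k$ (and noting $\bm{B}^\top=\bm{B}$, since $\hat{\bm{A}}$ is symmetric, so $(ba)$ and $(ab)$ entries coincide) yields exactly the bound in the theorem.

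The main obstacle I anticipate is the bookkeeping of the chain-rule expansion: one must verify that only the linear piece of $R$ contributes (or that its curvature is subsumed into $c_\sigma$), and one must be careful to keep track of which power of $\bm{B}$ comes from which node ($a$, $b$, or the intermediate contraction) when aligning the sum with the compact $(\bm{B}^{m_\Psi-l})^\top\mathrm{diag}(\bm{1}^\top\bm{B}^l)\bm{B}^{m_\Psi-l}$ form. Once Lemma~\ref{c.2} is in hand, everything else is linear algebra.
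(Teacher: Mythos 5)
Your proposal is correct and follows essentially the same route as the paper: reduce $\mathrm{mix}_{y_G}(b,a)$ to $\sum_{i\in V}\|\nabla^2_{ab}\bm{h}_i^{(m_\Psi)}\|$ via the (sum/mean/max) readout with bounded output weights, then substitute Lemma~\ref{c.2} and contract the sum over $i$ into the $\mathrm{diag}(\bm{1}^\top\bm{B}^l)$ factor. Your explicit sum-interchange and the observation that the quadratic form $(\bm{B}^{m_\Psi-l})^\top\mathrm{diag}(\cdot)\bm{B}^{m_\Psi-l}$ is symmetric (so the $(ab)$ and $(ba)$ entries agree) merely spell out steps the paper leaves implicit.
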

\begin{proof}
    Here, we define the prediction function $y_G: N\times d\rightarrow d$ on $G$ as $y_G^{(m_\Psi)}=\texttt{Readout}(\bm{H}^{(m_\Psi)}\bm{\theta})$, where $\texttt{Readout}$ is to gather all nodes embeddings to get the final graph embedding, $\bm{H}^{(m_\Psi)}$ is the node embedding matrix after $m_\Psi$ layers and $\bm{\theta}$ is the learnable weight for graph-level task. If we set $\texttt{Readout}=\texttt{sum}$, we derive:
    \begin{subequations}
        \begin{align}
            \text{mix}_{y_G}(b, a)&=\max\limits_x\max\limits_{1\leq\beta, \gamma\leq d}\left|\frac{\partial^2{y_G}^{(m_\Psi)}(\bm{X})}{\partial\bm{x}_a^\beta\partial\bm{x}_b^\gamma}\right|\notag\\
            &\leq \sum_{i\in V}\left|\sum\limits_{\alpha=1}^d\theta_\alpha\frac{\partial^2h_i^{(m_\Psi), \alpha}}{\partial\bm{x}_a^\beta\partial\bm{x}_b^\gamma}\right|\notag\\
            &=\sum_{i\in V}||(\nabla_{ab}^2\bm{h}_i^{(m_\Psi)})^\top\bm{\theta}||\notag\\
            &\leq \sum_{i\in V}||\nabla_{ab}^2\bm{h}_i^{(m_\Psi)}||\label{e.1}\\
            &\leq\sum\limits_{k=0}^{m_\Psi-1}(c_\sigma w)^{(2m_\Psi-k-1)}\left(w\left(\bm{B}^{m_\Psi-k}\right)^\top diag\left(\bm{1}^\top\bm{B}^k\right)\bm{B}^{m_\Psi-k}\right)_{ab}\label{e.2},
        \end{align}
    \end{subequations}
\end{proof}
where in~(\ref{e.1}), we assume the norm $||\bm{\theta}||\leq 1$. In~(\ref{e.2}), we use the results from Lemma~\ref{c.2}. This upper bound still holds if $\texttt{Readout}$ is chosen as $\texttt{MEAN}$ or $\texttt{MAX}$~\citep{how}.

In theorem~\ref{qqplmvnr}, we can assume that $c_\sigma$ to be smaller or equal than one, which is satisfied by the majority of current active functions. Furthermore, considering the normalization (e.g., $L_2$ norm) on $W$, we assume $w<1$. With these two assumptions, the conclusion of theorem~\ref{qqplmvnr} is rewritten as:
\begin{equation}
\label{asfasghie}
    \text{mix}_{y_G}(b, a) \leq\sum\limits_{l=0}^{m_\Psi-1}\left(\left(\bm{B}^{m_\Psi-l}\right)^\top \textrm{diag}\left(\bm{1}^\top\bm{B}^l\right)\bm{B}^{m_\Psi-l}\right)_{ab}.
\end{equation}
With this new conclusion, we now turn to the proof of Theorem~\ref{bb}:
\begin{proof}
Firstly, $\textrm{diag}\left(\bm{1}^\top\bm{B}^l\right)_i=(\alpha s^K)^l(((\hat{\bm{A}})^{K/2})^l\bm{1})_i\leq\gamma(\alpha s^K)^l$ by using $(((\hat{\bm{A}})^{K/2})^l\bm{1})_i\leq\gamma$~\citep{how}. Then, we find
\begin{equation}
    \begin{aligned}
        \sum\limits_{l=0}^{m_\Psi-1}\left(\left(\bm{B}^{m_\Psi-l}\right)^\top \textrm{diag}\left(\bm{1}^\top\bm{B}^l\right)\bm{B}^{m_\Psi-l}\right)_{ab}&\leq\gamma\left(\sum\limits_{l=0}^{m_\Psi-1}\bm{B}^{2(m_\Psi-l)}\cdot(\alpha s^K)^l\right)_{ab}\\
        &<\gamma\left(\sum\limits_{l=0}^{m_\Psi-1}(\alpha(\hat{\bm{A}})^{K/2}s^K)^{2(m_\Psi-l)}\cdot(\alpha s^K)^l\right)_{ab}\\
        &<\gamma(\alpha s^K)^{2m_\Psi}\left(\sum\limits_{l=0}^{m_\Psi-1}\hat{\bm{A}}^{K(m_\Psi-l)}\right)_{ab}\\
        &=\gamma(\alpha s^K)^{2m_\Psi}\left(\sum\limits_{l=1}^{m_\Psi}\hat{\bm{A}}^{Kl}\right)_{ab}.
    \end{aligned}
\end{equation}
The following proof depends on \textit{commute time} $\tau(a,b)$~\citep{commute}, whose the definition is as follows using the spectral representation of the graph Laplacian~\citep{how}:
\begin{equation}
    \label{commute}
    \tau(a,b)=2|E|\sum_{n=0}^{N-1}\frac{1}{\lambda_n}\left(\frac{u_n(a)}{\sqrt{d_a}}-\frac{u_n(b)}{\sqrt{d_b}}\right)^2.
\end{equation}
Then, we have:
\begin{subequations}
\label{ajgvifdn}
    \begin{align}
        \left(\sum\limits_{l=1}^{m_\Psi}\hat{\bm{A}}^{Kl}\right)_{ab}&\leq\sum\limits_{l=0}^{Km_\Psi}\left(\hat{\bm{A}}^{l}\right)_{ab}\notag\\
        &=\sum\limits_{l=0}^{Km_\Psi}\sum_{n\geq0}(1-\lambda_n)^lu_n(a)u_n(b)\notag\\
        &=(Km_\Psi+1)\frac{\sqrt{d_ad_b}}{2|E|}+\sum_{n>0}\frac{1-(1-\lambda)^{Km_\Psi+1}}{\lambda_n}u_n(a)u_n(b)\label{bb_1}\\
        &=(Km_\Psi+1)\frac{\sqrt{d_ad_b}}{2|E|}+\sum_{n>0}\frac{1}{\lambda_n}u_n(a)u_n(b)-\sum_{n>0}\frac{(1-\lambda)^{Km_\Psi+1}}{\lambda_n}u_n(a)u_n(b)\notag.
    \end{align}
\end{subequations}
In Eq.~(\ref{bb_1}), we use $u_0(a)=\sqrt{\frac{d_a}{2|E|}}$. Then, from the definition of commute time, we can get:
\begin{equation}
\label{gnvjdf}
\begin{aligned}
    \sum_{n=1}^{N-1}\frac{1}{\lambda_n}u_n(a)u_n(b)&=\frac{-\tau(a,b)}{4|E|}\sqrt{d_ad_b}+\frac{1}{2}\sum_{n>0}\frac{1}{\lambda_n}(u_n^2(a)\sqrt{\frac{d_b}{d_a}}+u_n^2(b)\sqrt{\frac{d_a}{d_b}})\\
    &\leq\frac{-\tau(a,b)}{4|E|}\sqrt{d_ad_b}+\frac{1}{2{\lambda_1}}\left(\sqrt{\frac{d_a}{d_b}}+\sqrt{\frac{d_b}{d_a}}-\frac{\sqrt{d_ad_b}}{|E|}\right),
\end{aligned}
\end{equation}
where in the last inequation, we utilize the fact that $\sum_{n>0}u_n^2(a)=1-u_0^2(a)$ because $\{u_n\}$ is a set of orthonormal basis. Besides, we use $\lambda_n>\lambda_1, \forall n>1$. Next, we derive
\begin{equation}
\label{jsginbjf}
    \begin{aligned}
        -\sum_{n>0}\frac{(1-\lambda)^{Km_\Psi+1}}{\lambda_n}u_n(a)u_n(b)&\leq\sum_{n>0}\frac{|1-\lambda^*|^{Km_\Psi+1}}{\lambda_n}|u_n(a)u_n(b)||\\
        &\leq\frac{|1-\lambda^*|^{Km_\Psi+1}}{2\lambda_1}\sum_{n>0}(u_n^2(a)+u_n^2(b))\\
        &\leq\frac{|1-\lambda^*|^{Km_\Psi+1}}{2\lambda_1}\left(2-\frac{d_a+d_b}{2|E|}\right),
    \end{aligned}
\end{equation}
where $|1-\lambda^*|=\max_{0<n\leq N-1}|1-\lambda_n|<1$. Insert derivations~(\ref{gnvjdf}) and~(\ref{jsginbjf}) into~(\ref{ajgvifdn}), then
gather all above derivations:
\begin{equation}
    \begin{aligned}
        &\text{mix}_{y_G}(b, a)\leq\gamma(\alpha s^K)^{2m_\Psi}\left\{(Km_\Psi+1)\frac{\sqrt{d_ad_b}}{2|E|}-\frac{\tau(a,b)}{4|E|}\sqrt{d_ad_b}\right.\\
        &\left.+\frac{1}{2{\lambda_1}}\left(\sqrt{\frac{d_a}{d_b}}+\sqrt{\frac{d_b}{d_a}}-\frac{\sqrt{d_ad_b}}{|E|}\right)+\frac{|1-\lambda^*|^{Km_\Psi+1}}{2\lambda_1}\left(2-\frac{d_a+d_b}{2|E|}\right)\right\}\\
        &\leq\gamma(\alpha s^K)^{2m_\Psi}\sqrt{d_ad_b}\left(\frac{Km_\Psi}{2|E|}-\frac{\tau(a,b)}{4|E|}\right)+\frac{\gamma(\alpha s^K)^{2m_\Psi}}{2{\lambda_1}}\left(\sqrt{\frac{d_a}{d_b}}+\sqrt{\frac{d_b}{d_a}}\right)+\frac{\gamma(\alpha s^K)^{2m_\Psi}}{\lambda_1}|1-\lambda^*|^{Km_\Psi+1}.
    \end{aligned}
\end{equation}
In last inequation, we discard $\frac{\sqrt{d_ad_b}}{2|E|}\left[1-\frac{1}{\lambda_1}\left(1+\frac{|1-\lambda^*|^{Km_\Psi+1}}{2}\left(\sqrt{\frac{d_a}{d_b}}+\sqrt{\frac{d_b}{d_a}}\right)\right)\right]<0$ because $\lambda_1<1$. Then,
\begin{equation}
\label{gbrhdjskldf}
    \frac{\text{mix}_{y_G}(b, a)}{\gamma(\alpha s^K)^{2m_\Psi}\sqrt{d_ad_b}}\leq\frac{Km_\Psi}{2|E|}-\frac{\tau(a,b)}{4|E|}+\frac{1}{2{\lambda_1}\sqrt{d_ad_b}}\left(\sqrt{\frac{d_a}{d_b}}+\sqrt{\frac{d_b}{d_a}}+2|1-\lambda^*|^{Km_\Psi+1}\right).
\end{equation}
From~(\ref{gbrhdjskldf}), we can finally give the lower bound of $m_\Psi$ as:
\begin{equation}
    \begin{aligned}
        m_\Psi&\geq\frac{2|E|}{K}\left\{\frac{\tau(a,b)}{4|E|}+\frac{\text{mix}_{y_G}(b, a)}{\gamma(\alpha s^K)^{2m_\Psi}\sqrt{d_ad_b}}-\frac{1}{2{\lambda_1}\sqrt{d_ad_b}}\left(\sqrt{\frac{d_a}{d_b}}+\sqrt{\frac{d_b}{d_a}}+2|1-\lambda^*|^{Km_\Psi+1}\right)\right\}\\
        &>\frac{2|E|}{K}\left\{\frac{\tau(a,b)}{4|E|}+\frac{1}{\sqrt{d_ad_b}}\left[\frac{\text{mix}_{y_G}(b, a)}{\gamma(\alpha s^K)^{2m_\Psi}}-\frac{1}{2{\lambda_1}}\left(2\gamma+2|1-\lambda^*|^{Km_\Psi+1}\right)\right]\right\}\\
        &=\frac{2|E|}{K}\left\{\frac{\tau(a,b)}{4|E|}+\frac{1}{\sqrt{d_ad_b}}\left[\frac{\text{mix}_{y_G}(b, a)}{\gamma(\alpha^2 s^{2K})^{m_\Psi}}-\frac{1}{{\lambda_1}}\left(\gamma+|1-\lambda^*|^{Km_\Psi+1}\right)\right]\right\}\\
        &=\frac{\tau(a,b)}{2K}+\frac{2|E|}{K\sqrt{d_ad_b}}\left[\frac{\text{mix}_{y_G}(b, a)}{\gamma(\alpha^2 s^{2K})^{m_\Psi}}-\frac{1}{{\lambda_1}}\left(\gamma+|1-\lambda^*|^{Km_\Psi+1}\right)\right]
    \end{aligned}
\end{equation}
\end{proof}

\subsection{Proof of Theorem~\ref{cc} (Short-range and long-range receptive fields)}
\label{proof_cc}
\begin{proof}    
From theorem~\ref{bb}, we denote $L_{m_\Psi}=\frac{\tau(a,b)}{2K}+\frac{2|E|}{K\sqrt{d_ad_b}}\left[\frac{\text{mix}_{y_G}(b, a)}{\gamma(\alpha^2 s^{2K})^{m_\Psi}}-\frac{1}{{\lambda_1}}\left(\gamma+|1-\lambda^*|^{Km_\Psi+1}\right)\right]$. For K-order message passing $\sigma(\sum_{j=0}^K\tau_jA^jHW)$, $\tau_j \in[0, 1]$, we assume that $(\tau_PA^P)_{ba}$ is the maximum among $\{(\tau_0A^0)_{ba},\dots,(\tau_KA^K)_{ba}\}$. According to theorem~\ref{qqplmvnr}, we can get the similar conclusion, replacing $\bm{B}$ with $\bm{C}=(K+1)\tau_PA^P$. Then, we have the following proof:
\begin{proof}
Again, $\textrm{diag}\left(\bm{1}^\top\bm{C}^l\right)_i=((K+1)\tau_P)^l(A^{Pl})\bm{1})_i\leq\gamma((K+1)\tau_P)^l$. Then, we have
\begin{equation}
    \begin{aligned}
        \sum\limits_{l=0}^{m_A-1}\left(\left(\bm{C}^{m_A-l}\right)^\top \textrm{diag}\left(\bm{1}^\top\bm{C}^l\right)\bm{C}^{m_A-l}\right)_{ab}&\leq\gamma\left(\sum\limits_{l=0}^{m_A-1}\bm{C}^{2(m_A-l)}\cdot((K+1)\tau_P)^l\right)_{ab}\\
        &<\gamma\left(\sum\limits_{l=0}^{m_A-1}((K+1)\tau_PA^P)^{2(m_A-l)}\cdot((K+1)\tau_P)^l\right)_{ab}\\
        &<\gamma((K+1)\tau_P)^{2m_A}\left(\sum\limits_{l=0}^{m_A-1}\hat{\bm{A}}^{2P(m_A-l)}\right)_{ab}\\
        &=\gamma((K+1)\tau_P)^{2m_A}\left(\sum\limits_{l=1}^{m_A}\hat{\bm{A}}^{2Pl}\right)_{ab}\\
        &<\gamma(\sqrt{(K+1)\tau_P})^{4m_A}\left(\sum\limits_{l=1}^{2m_A}\hat{\bm{A}}^{Pl}\right)_{ab}.
    \end{aligned}
\end{equation}
\end{proof}
Following the rest proof of $L_{m_\Psi}$, replace $\{\alpha s^K, m_\Psi, K\}$ with $\{\sqrt{(K+1)\tau_P}, 2m_A, P\}$, and get the expression of $L_{m_A}$:
\begin{equation}
    L_{m_A}=\frac{\tau(a,b)}{2P}+\frac{2|E|}{P\sqrt{d_ad_b}}\left[\frac{\text{mix}_{y_G}(b, a)}{\gamma((K+1)^2 {\tau_P}^2)^{m_A}}-\frac{1}{{\lambda_1}}\left(\gamma+|1-\lambda^*|^{2Pm_A+1}\right)\right].
\end{equation}
Therefore, we have
\begin{equation}
\label{agvfds}
    L_{m_\Psi}\approx\frac{P}{K}L_{m_A}+\frac{2|E|}{K\sqrt{d_ad_b}}\left[\frac{\text{mix}_{y_G}(b, a)}{\gamma}\left(\frac{1}{(\alpha^2 s^{2K})^{m_\Psi}}-\frac{1}{((K+1)^2 {\tau_P}^2)^{m_A}}\right)\right],
\end{equation}
where we ignore $|1-\lambda^*|^{Km_\Psi+1}$ and $|1-\lambda^*|^{2Pm_A+1}$. Since $|1-\lambda^*|<1$ as shown in theorem~\ref{bb}, therefore $|1-\lambda^*|^{Km_\Psi+1}-|1-\lambda^*|^{2Pm_A+1}$ will be very small, especially when $m_\Psi$ and $m_A$ are large. From Eq.~(\ref{agvfds}), when $s\rightarrow\infty$, the relation becomes:
\begin{equation}
    L_{m_\Psi} \approx \frac{P}{K}L_{m_A}-\frac{2|E|}{K(K+1)^{2m_A} {\tau_P}^{2m_A}\sqrt{d_ad_b}}\frac{\text{mix}_{y_G}(b, a)}{\gamma}.
\end{equation}
Or, when $s\rightarrow0$, the relation becomes:
\begin{equation}
    L_{m_\Psi} \approx \frac{P}{K}L_{m_A}+\frac{2|E|}{K\sqrt{d_ad_b}}\frac{\text{mix}_{y_G}(b, a)}{\gamma}\cdot\frac{1}{(\alpha^2s^{2K})^{m_\Psi}}.
\end{equation}
\end{proof}

\section{Details of Encoding Eigenvalues}
\label{ee_}
In this paper, we adopt Eigenvalue Encoding (EE) Module~\citep{specformer} to encode eigenvalues. EE functions as a set-to-set spectral filter, enabling interactions between eigenvalues. In EE, both magnitudes and relative differences of all eigenvalues are leveraged. Specifically, the authors use an eigenvalue encoding function to transform each $\lambda$ from scalar $\mathbb{R}^1$ to a vector $\mathbb{R}^d$:
\begin{equation}
        \rho(\lambda, 2i)=\sin{(\epsilon\lambda/10000^{2i/d})},\quad \rho(\lambda, 2i+1)=\cos{(\epsilon\lambda/10000^{2i/d})},
\end{equation}
where $i$ is the dimension of the representations and $\epsilon$ is a hyper parameter. By encoding in this way, relative frequency shifts between eigenvalues are captured. Then, the raw representations of eigenvalues are the concatenation between eigenvalues and corresponding representation vectors:
\begin{equation}
    \bm{Z}_\lambda=[\lambda_1||\rho(\lambda_1),\dots, \lambda_{N-1}||\rho(\lambda_{N-1})]^\top\in\mathbb{R}^{N\times d}.
\end{equation}
To capture the dependencies between eigenvalues, a standard Transformer is used followed by skip-connection and feed forward network (FFN):
\begin{equation}
    \hat{\bm{Z}}_\lambda=\textrm{Transformer}(\textrm{LN}(\bm{Z}_\lambda))+\bm{Z}_\lambda\in\mathbb{R}^{N\times d},\quad \bm{Z}=\textrm{FFN}(\textrm{LN}(\bm{\hat{\bm{Z}}}_\lambda))+\hat{\bm{Z}}_\lambda\in\mathbb{R}^{N\times d},
\end{equation}
where $\textrm{LN}$ is the layer normalization. Then, $\bm{Z}$ is the embedding matrix for eigenvalues, which is injected into the learning of combination coefficients $\tilde{\bm{a}}$ and $\tilde{\bm{b}}$, and scales $\tilde{\bm{s}}$.
\section{Experimental Details}
\label{ajsoifbfnd}
\subsection{Implementation Details}
\label{Implementation details}
% XB
% For the short-range task, we present the results of all baselines from NAGphormer~\citep{nagphormer} and Exphormer~\citep{exphormer}, excluding three base models (Transformer, SAN, and GraphGPS). This omission is due to the necessity of knowing the parameters of these base models for parameter freezing, while such information is unavailable. Consequently, we provide the results of these three models based on our experiments.
% For the long-range task, we showcase the outcomes of {GCN, GINE, GatedGCN, Transformer, SAN+LapPE, SAN+RWSE} from~\citep{lrgb}, alongside the results of the remaining baselines sourced from their original papers.

% % NL
% For node level task, we report the results of all baselines from NAGphormer~\citep{nagphormer} and Exphormer~\citep{exphormer}, except the results of three base models (Transformer, SAN and GraphGPS). This is because we need to know the parameters of the base models so that these parameters can be frozen and we only tune the our \M. But we do not find the corresponding parameters, so we report the results of these three models based on our experiments. For graph level task, we report the results of \{GCN, GINE, GatedGCN, Transformer, SAN+LapPE, SAN+RWSE\} from~\citep{lrgb}, and the results of the rest baselines from their original papers.

% XB
\begin{wrapfigure}[16]{r}{0.4\textwidth}
    \centering
  \includegraphics[scale=0.3]{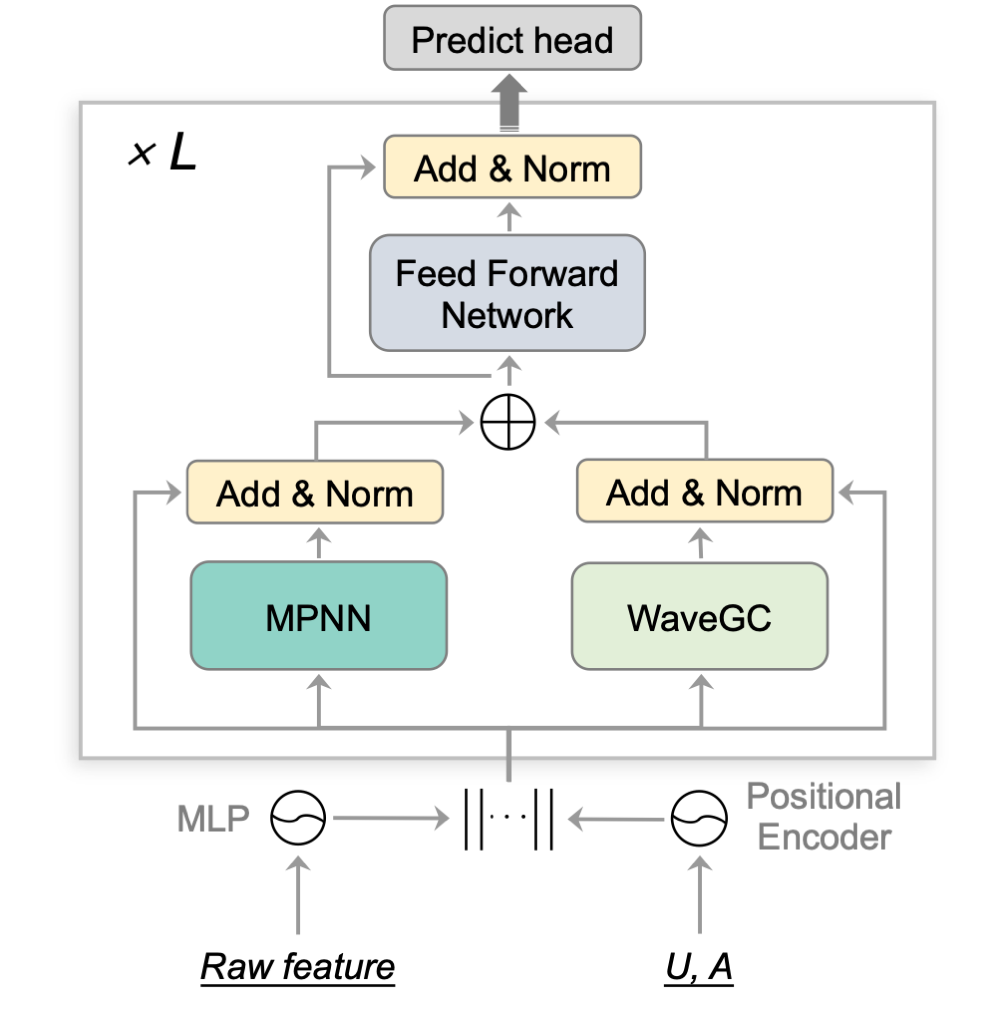}
    \vspace{-0.25cm}
  \caption{Combing MPNN with \M.}
  \label{mpgnn_wave}
  \vskip -0.15in
\end{wrapfigure}
Inspired by~\cite{gps}, we adopt the hybrid network architecture as shown in Fig.~\ref{mpgnn_wave}, where the "\M" block is the process shown in Fig.~\ref{model} (a). This architecture explicitly involves a parallel massage passing neural network (MPNN) (e.g., GCN~\cite{gcn}, GatedGCN~\cite{gated}) to augment the low-frequency modeling. Then, these two branches separately go through skip-connection and normalization, and then sum together followed by a two-layers MLP, eventually skip-connection and normalization.

We explore the number of truncated terms $\rho$ from 1 to 10 and adjust the number of scales $J$ from 1 to 5. Additionally, for the pre-defined vector $\overline{\bm{s}}$ controlling the amplitudes of scales, we test each element in $\overline{\bm{s}}$ from 0.1 to 10.
The usage of the tight frames constraint is also a parameter subject to tuning, contingent on the given dataset. Typically, models iterate through several layers to produce a single result, thus the parameters of \M\ may or may not be shared between different layers.
For short-range datasets, we only retain the first 30\% of eigenvalues and their corresponding eigenvectors for efficient eigendecomposition, and set a threshold $\aleph$ and filter out entries in $\Phi$ and $\Psi_{s_j}$ whose absolute value is lower than $\aleph$.

% % NL
% Considering our \M\ is a replacement of Transformer in each base model, we only tune the parameters newly involved by \M\, while keep the other parameters unchanged. Specifically, we search on the number of truncated terms $\rho$ from 1 to 10, and tune the number of scales $J$ from 1 to 5. For pre-defined vector $\overline{\bm{s}}$ controlling the amplitudes of scales, we test each element in $\overline{\bm{s}}$ from 0.1 to 10. The usage of tight frames constraint is also a tunable parameter that depends on the given dataset. The models usually iterate several layers to get one result, so the parameters of \M\ will share or not between different layers. Finally, because of the large scale of short-range datasets, we take two more tricks to avoid out-of-memory case. First, only first 30\% eigenvalues and corresponding eigenvectors are kept to join in the training for each dataset. Second, for learnt scaling function basis $\Phi$ and multiple wavelet bases $\Psi_{s_j}$, we set a threshold $\aleph$ and filter out the entries in $\Phi$ and $\Psi_{s_j}$ whose absolute value is lower than $\aleph$.

For fair comparisons, we randomly run 4 times on long-range datasets~\citep{lrgb}, and 10 times on short-range datasets~\citep{nagphormer}, and report the average results with their standard deviation for all methods. For the sake of reproducibility, we also report the related parameters in Appendix~\ref{Hyper-parameters settings}.

\subsection{Datasets Description}

\label{Datasets and baselines}
\begin{table}[h]
  \centering
  \caption{The statistics of the short-range datasets.}
  \label{statistics_node}
  \vskip 0.05in
  \begin{tabular}{c|ccccc}
    \toprule[1.2pt]
    Dataset & \# Graphs & \# Nodes & \# Edges & \# Features & \# Classes \\
    \hline
    CS &1& 18,333& 163,788& 6,805& 15\\
    Photo&1& 7,650& 238,163& 745& 8\\
    Computer&1& 13,752& 491,722& 767& 10\\
    CoraFull&1& 19,793& 126,842& 8,710& 70\\
    ogbn-arxiv&1& 169,343& 1,116,243& 128& 40\\
    \toprule[1.2pt]
\end{tabular}
\vskip -0.1in
\end{table}
For short-range datasets, we choose five commonly used \texttt{CS}, \texttt{Photo}, \texttt{Computer}, \texttt{CoraFull}~\citep{pyg} and \texttt{ogbn-arxiv}~\citep{ogb}. \texttt{CS} is a network based on co-authorship, with nodes representing authors and edges symbolizing collaboration between them. In the \texttt{Photo} and \texttt{Computer} networks, nodes stand for items, and edges suggest that the connected items are often purchased together, forming co-purchase networks. \texttt{CoraFull} is a network focused on citations, where nodes are papers and edges indicate citation connections between them. \texttt{ogbn-arxiv} is a citation network among all Computer Science (CS) Arxiv papers, where each node corresponds to an Arxiv paper, and the edges indicate the citations between papers. The details of these five datasets are summarized in Table~\ref{statistics_node}.

\begin{table}[h]
  \centering
  \caption{The statistics of the long-range datasets.}
  \label{statistics_graph}
  \vskip 0.05in
  \resizebox{\textwidth}{!}{
  \begin{tabular}{c|cccccc}
    \toprule[1.2pt]
    Dataset & \# Graphs & Avg. \# nodes & Avg. \# edges & Prediction level & Task & Metric\\
    \hline
    PascalVOC-SP& 11,355& 479.4& 2,710.5& inductive node& 21-class classif.& F1 score\\
    PCQM-Contact& 529,434& 30.1& 61.0& inductive link& link ranking& MRR\\
    COCO-SP& 123,286& 476.9& 2,693.7& inductive node& 81-class classif.& F1 score\\
    Peptides-func& 15,535& 150.9& 307.3& graph& 10-task classif.& Avg. Precision\\
    Peptides-struct& 15,535& 150.9& 307.3& graph& 11-task regression& Mean Abs. Error\\
    \toprule[1.2pt]
\end{tabular}}
\vskip -0.1in
\end{table}
For long-range tasks, we choose five long-range datasets~\citep{lrgb}, including \texttt{PascalVOC-SP (VOC)}, \texttt{PCQM-Contact (PCQM)}, \texttt{COCO-SP(COCO)}, \texttt{Peptides-func (Pf)} and \texttt{Peptides-struct (Ps)}. These five datasets are usually used to test the performance on long-range modeling. \texttt{VOC} and \texttt{COCO} datasets are created through SLIC superpixelization of the Pascal VOC and MS COCO image collections. They are both utilized for node classification, where each super-pixel node is categorized into a specific object class. \texttt{PCQM} is developed from PCQM4Mv2~\citep{hu2021ogb} and its related 3D molecular structures, focusing on binary link prediction. This involves identifying node pairs that are in 3D contact but distant in the 2D graph. Both \texttt{Pf} and \texttt{Ps} datasets consist of atomic graphs of peptides sourced from SATPdb. In the \texttt{Peptides-func} dataset, the task involves multi-label graph classification into 10 distinct peptide functional classes. Conversely, the \texttt{Peptides-struct} dataset is centered on graph regression to predict 11 different 3D structural properties of peptides. The details of these five datasets are summarized in Table~\ref{statistics_graph}.

\subsection{More analyses for section~\ref{The effectiveness of general wavelet bases}}
In this section, we firstly give a further visualization on short-range dataset and then analyze the impact of the learned scales.

\subsubsection{Visualization on CoraFull}
\label{visualization on}
\begin{figure}[h]
\centering
\begin{minipage}[t]{0.48\textwidth}
\centering
\includegraphics[scale=0.23]{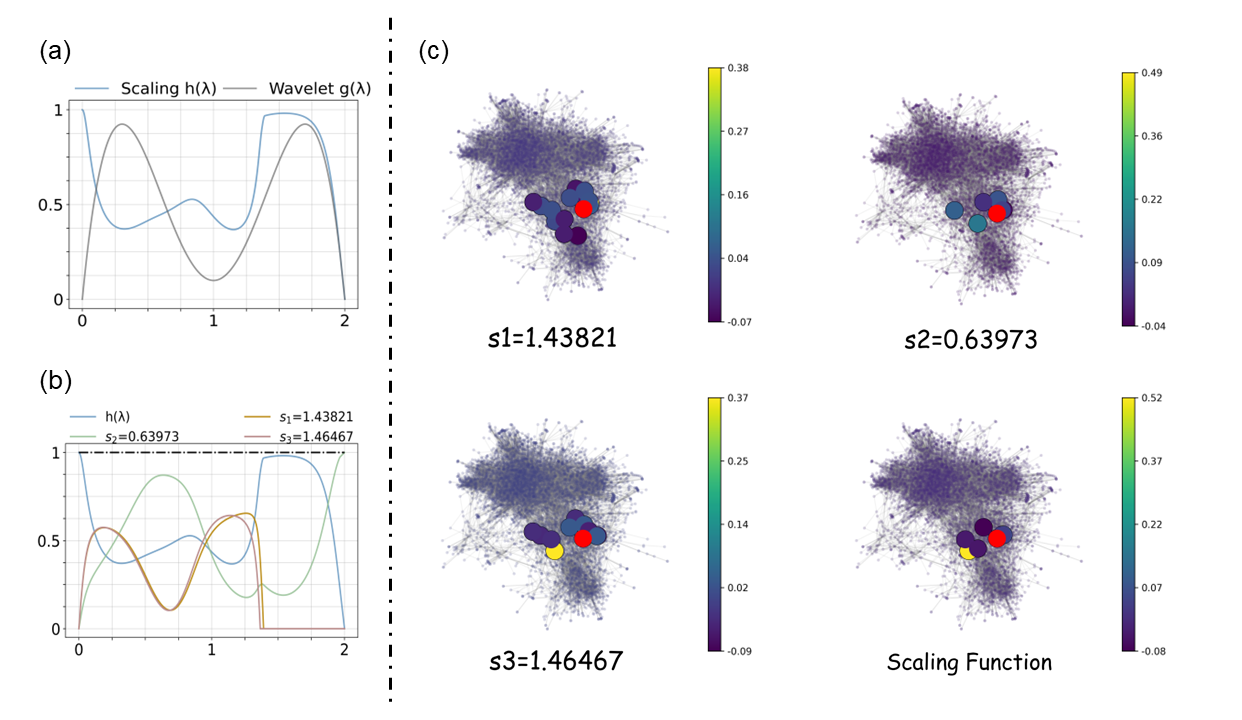}
\caption{Illustration of the spectral and spatial signals of the learned function basis and multiple wavelet bases with full spectrum.}
\label{asfa}
\end{minipage}
\hspace{10pt}
\begin{minipage}[t]{0.48\textwidth}
\centering
\includegraphics[scale=0.23]{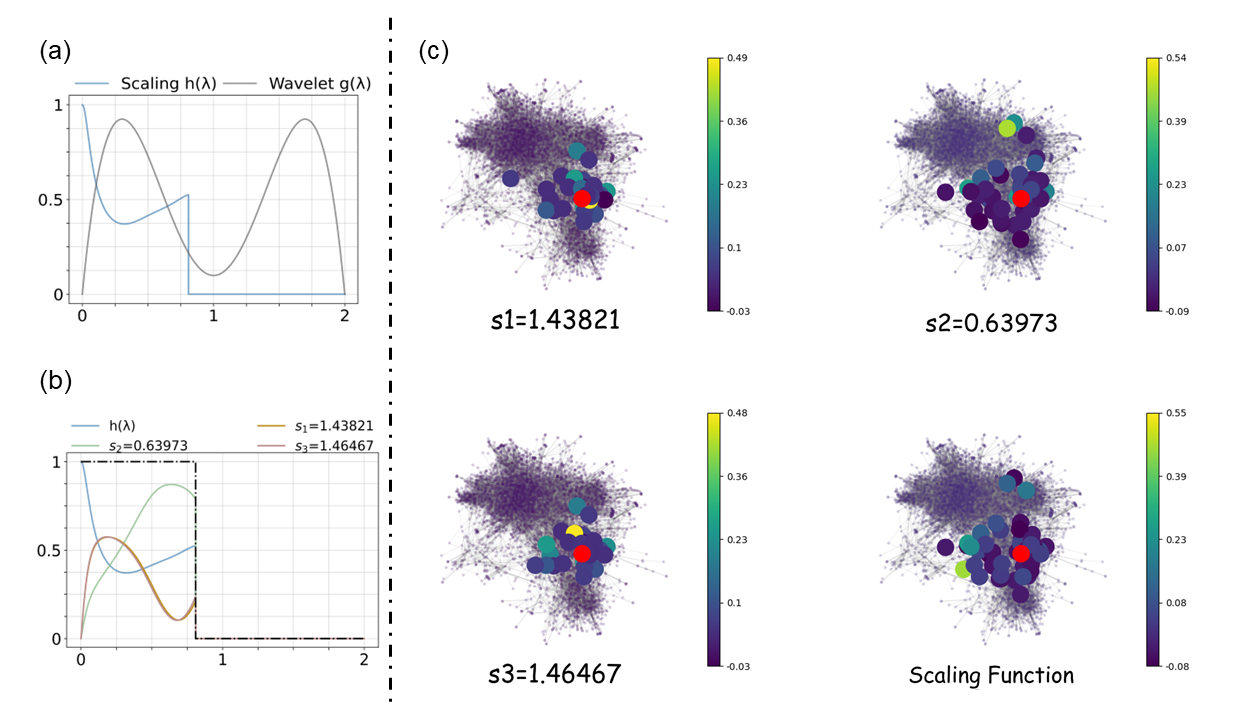}
\caption{Illustration of the spectral and spatial signals of the learned function basis and multiple wavelet bases with partial spectrum.}
\label{adgherhg}
\end{minipage}
\end{figure}
% XB
To give one more example, we provide additional visualization results on the CoraFull dataset. These results are presented in Fig.~\ref{asfa}, where the learned scaling functions $h(\lambda)$ and $g(\lambda)$ meet the specified requirements. The four subfigures in Fig.~\ref{asfa}(c) illustrate that as the scale $s_j$ increases, the receptive field of the center node expands. This highlights \M's capability to capture both short- and long-range information by adjusting different values of $s_j$. However, one of our strategies for CoraFull involves considering only 30\% of eigenvalues as input. Consequently, the full spectrum is truncated, leaving only the remaining 30\% parts, as depicted in Fig.~\ref{adgherhg}. We give a deeper insight in the behavior of this truncation from both spectral and spatial perspectives:\
\begin{itemize}
    \item \textbf{Spectral perspective}. As shown in Fig.~\ref{adgherhg}, the wavelet function $g(s\lambda)$ retains non-trivial amplitudes within the first 30\% domain. While $g(\lambda)\approx0$, the retained spectral range is sufficiently broad to allow the wavelets to operate effectively. Therefore, even in the truncated setting, both the scaling function $h(\lambda)$ and wavelet function $g(\lambda)$ contribute meaningfully to low-frequency modeling.
    \item \textbf{Spatial perspective}. In Fig.~\ref{adgherhg}(b), we observe that truncating the spectrum mimics the effect of using a larger wavelet scale $s>1$, which reduces the effective spectral range and increases the spatial receptive field. This effect is visually confirmed in Fig.~\ref{asfa}(c) and~\ref{adgherhg}(c), where the receptive fields become noticeably larger after truncation. Thus, even on short-range datasets, the wavelet branch captures valuable higher-order information that complements local aggregation from MPNN. This complementary role is further validated by the performance drop observed in Table~\ref{abla} when wavelets are removed.
\end{itemize}
Overall, spectral truncation does not impair wavelet behavior; instead, it supports effective low-frequency modeling while also enhancing spatial coverage.

\subsubsection{Impact of the learned scales}
\begin{figure}[htbp]
\centering
\begin{minipage}{0.48\textwidth}
    \centering
    \includegraphics[scale=0.22]{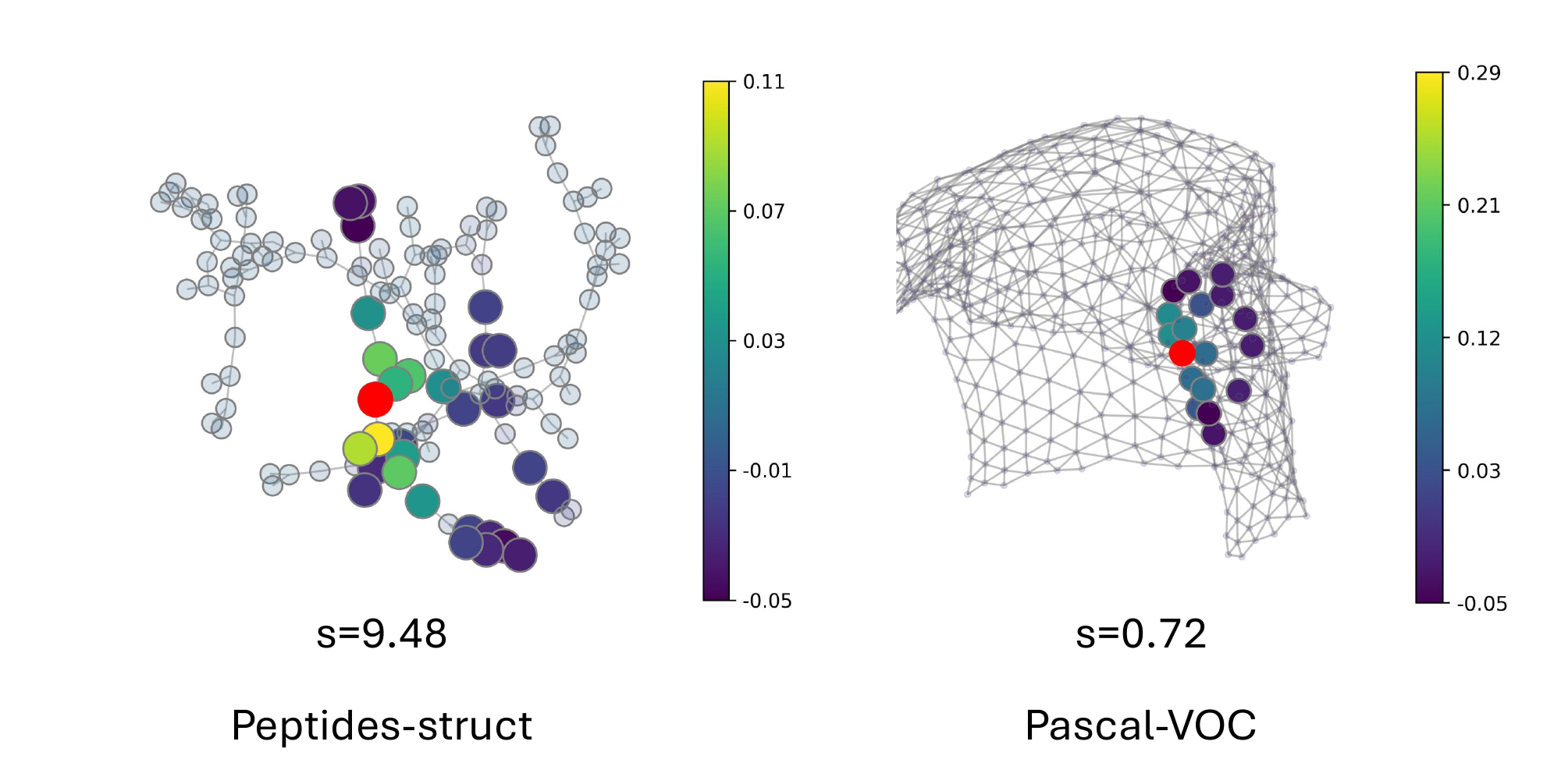} 
    % \vspace{-15pt}
    \caption{Visualizations of receptive fields for Peptides-struct (Ps) and Pascal-VOC (VOC) at their largest scale $s$.}
    \label{re_v}
\end{minipage}%
\hfill
\begin{minipage}{0.48\textwidth}
    \centering
    \captionof{table}{Comparison of average and max receptive fields of Ps and VOC.}
  \vskip 0.15in
    \begin{tabular}{ccc}
        \toprule[1pt]
    & Peptides-struct & Pascal-VOC \\
    \hline
    Avg. Receptive Field&	3.02 & 0.74\\
    Max Receptive Field&	9 & 3\\
    Avg. Shortest Path & 20.89 & 10.74\\
    \toprule[1pt]
    \end{tabular}
    \label{re_t}
\end{minipage}
\end{figure}
We empirically analyze the learned scale values and their impact on receptive fields in Fig.~\ref{re_v}. Specifically, we illustrate the largest learned scales for the Ps and VOC datasets, along with their corresponding receptive field visualizations. The receptive field is heuristically defined as follows:
\begin{Definition}(Receptive field.)
    Given a wavelet $\Psi(s\lambda)$, node $j$ lies in the receptive field of node $i$ if $|\Psi(s\lambda)[i, j]| > 0.1 \times \max(|\Psi(s\lambda)|)$.
\end{Definition}
Under this criterion, we observe that Ps exhibits larger receptive fields, corresponding to a larger learned scale of 9.48. We further report the average and maximum receptive field sizes across all nodes in Table~\ref{re_t}. The larger receptive fields in Ps align with its inherently longer average shortest-path distances, thus validating the model’s ability to adaptively adjust to long-range dependencies.
\begin{figure*}[t]
  \centering
  \includegraphics[scale=0.3]{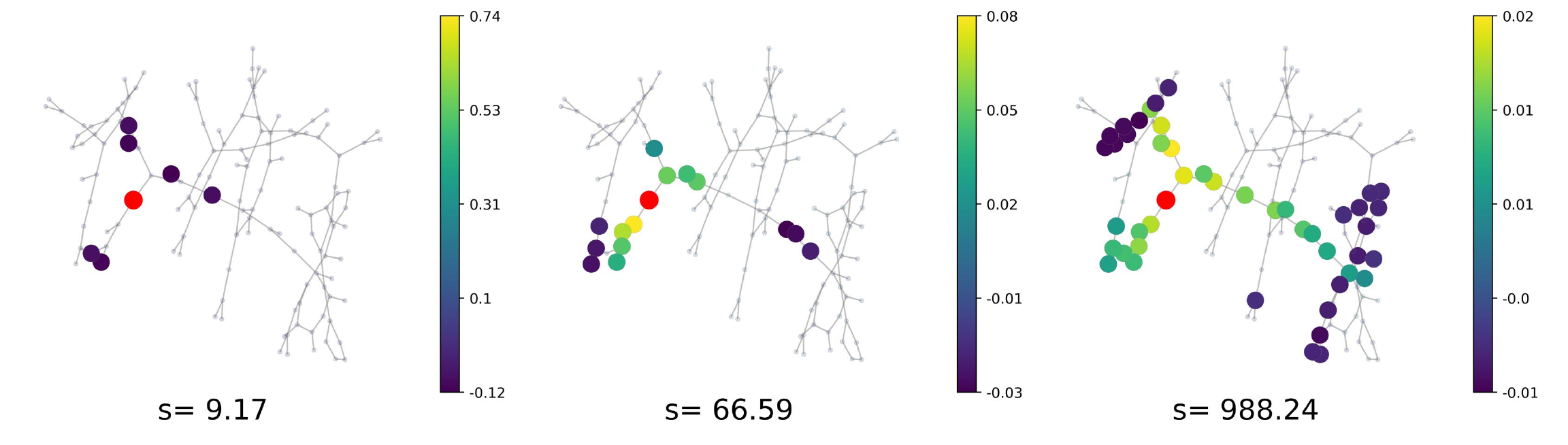}
    \vspace{-0.25cm}
  \caption{Visualizations of receptive fields for Peptides-func (Pf) at extreme scales.}
  \label{extreme}
  \vskip -0.15in
\end{figure*}
To examine the extreme case of large-scale values, we increase the predefined scale vector $\bar{s}$ in Eq.~\eqref{zxvzxv} for Peptides-func (Pf) to (10, 100, 1000). This vector determines the upper bound of the learnable scale range. The resulting learned scales and receptive fields are depicted in Figure 3. When s = 9.17, the red node primarily aggregates local information; in contrast, at s = 988.24, the same node gathers information from a much broader range. This confirms our theoretical assertion that WaveGC exhibits long-range behavior as s approaches infinity.

\subsection{More comparisons between WaveGC and ChebNet}
\label{more comparisons}
% This work involves three frameworks, including MPGNN (graph convolution), Transformer and \M, and exploring the benefits of combination between these frameworks is also an interesting topic. The related results are given in Table~\ref{mixb}. Upon comparing the first two combinations in the table, ‘Transformer’ primarily focuses on capturing global information, while ‘MPGNN’ or ‘WaveGC’ are expected to focus on local information. Given that MPGNN is proficient in depicting local structure, the improvement from WaveGC is somewhat limited. However, in the third combination ‘MPGNN+WaveGC’, WaveGC is designed to capture both local and global information. The noticeable improvement compared to ‘MPGNN+Transformer’ can be attributed to the flexibility and multi-resolution capabilities of WaveGC. In summary, both MPGNN and WaveGC are effective at capturing local structure, while WaveGC excels in encoding long-range information. For practical applications, it is advisable to select the specific encoder based on the given graph.
\begin{table*}[h]
  \caption{More ablations for differences between WaveGC and ChebNet.}
  \label{1}
  \vskip 0.05in
  \centering
  \setlength{\tabcolsep}{3mm}{
  \begin{tabular}{cccccc}
    \toprule[1pt]
    & Free $\tilde{\alpha}$ & Free $\tilde{\beta}$ & Fix s=1 & Free $\tilde{s}$&Original \\
    \hline
    Computer&	91.28±0.15&	91.19±0.09& 91.73±0.02&	91.51±0.02&	\textbf{92.26±0.18}\\
    Ps&	25.08±0.01&	25.09±0.12&	25.28±0.00 &	25.15±0.25&	\textbf{24.83±0.11}\\
    \toprule[1pt]
  \end{tabular}}
\end{table*}

Obviously, both \M\ and ChebNet attempt weighted combination of Chebyshev polynomials in different ways. On one hand, ChebNet learns term coefficients independently, while \M\ map eigenvectors into coefficients $\tilde{\alpha}$ and $\tilde{\beta}$. On the other hand, \M\ further involve multiple and learnable scales $\tilde{s}$. Finally, we test importance of these differences on the Computer and Ps. The results are summarized in Table~\ref{1}, showcasing different variants such as free learning coefficients (i.e., $\tilde{\pmb \alpha}$, $\tilde{\pmb \beta}$), adopting single scale s=1, and free learning $\tilde{\pmb s}$ to avoid joint parameterization. Each of these modifications resulted in degraded performance compared to the original model, demonstrating the improvements our new model offers over ChebNet.

\subsection{Complexity and Running time}
\label{Time and space complexity analysis}
\begin{table*}[h]
  \caption{Comparison on running time per epoch (s).}
  \label{3}
  \vskip 0.05in
  \centering
  \resizebox{0.9\textwidth}{!}{
  \begin{tabular}{cccccccc|c}
    \toprule[1.3pt]
     & SGWT & GWNN & WaveShrink & WaveNet & DEFT & UFGConvS & UFGConvR& \M \\
    \hline
    Computer &3.6 &0.8 &2.7 &2.0 &1.1 &3.1 &3.2 &1.5\\
    Ps &21.0 &30.5 &52.0 &27.3 &23.6 &47.3 &43.5 &23.9\\
    \toprule[1.3pt]
  \end{tabular}}
\end{table*}
The main contribution of \M\ is to address long-range interactions in graph convolution, so it inevitably establishes spatial connections between distant nodes. This results in the same $O(N^2)$ complexity as Transformer~\citep{va_trans}. This is the same for all spectral graph wavelets, including SGWT, GWNN, WaveShrink, WaveNet and UFGConvS/R. A possible solution is to decrease the number of considered frequency modes from $N$ to $\nu$. In this way, the complexity is reduced to $O(\nu\cdot N)$. We report the running time consumption of \M\ and other spectral graph wavelets (that is, SGWT, GWNN, WaveShrink, WaveNet, DEFT, UFGConvS and UFGConvR). The time consumptions for \texttt{Computer} and \texttt{Ps} are presented in Table~\ref{3}. According to the table, the running time of WaveGC is in the first level among spectral graph wavelets.

\subsection{Hyper-Parameter Sensitivity Analysis}\label{hperpara}
In \M, two key hyper-parameters, namely $\rho$ and $J$, play important roles. The parameter $\rho$ governs the number of truncated terms for both ${T_i^o}$ and ${T_i^e}$, while $J$ determines the number of scales $s_j$ in Eq.~\eqref{zxvzxv}. In this section, we explore the sensitivity of $\rho$ and $J$ on the Peptides-struct (Ps) and Computer datasets. The results are visually presented in Fig.\ref{hyper}, where the color depth of each point reflects the corresponding performance (the lighter the color, the better the performance), and the best points are identified with a red star.
Observing the results, we note that the optimal value for $\rho$ is 2 for Ps and 7 for Computer. This discrepancy can be attributed to the substantial difference in the graph sizes between the two datasets, with Computer exhibiting a significantly larger graph size (refer to Appendix~\ref{Datasets and baselines}). Consequently, a more intricate filter design is necessary for the larger dataset. Concerning $J$, the optimal value is determined to be 3 for both Ps and Computer. A too small $J$ leads to inadequate coverage of ranges, while an excessively large $J$ results in redundant scales with overlapping ranges.
\begin{figure}[h]
\centering
\subfigure[Ps: $\rho$-analysis]{
\label{f_ps}
\includegraphics[scale=0.23]{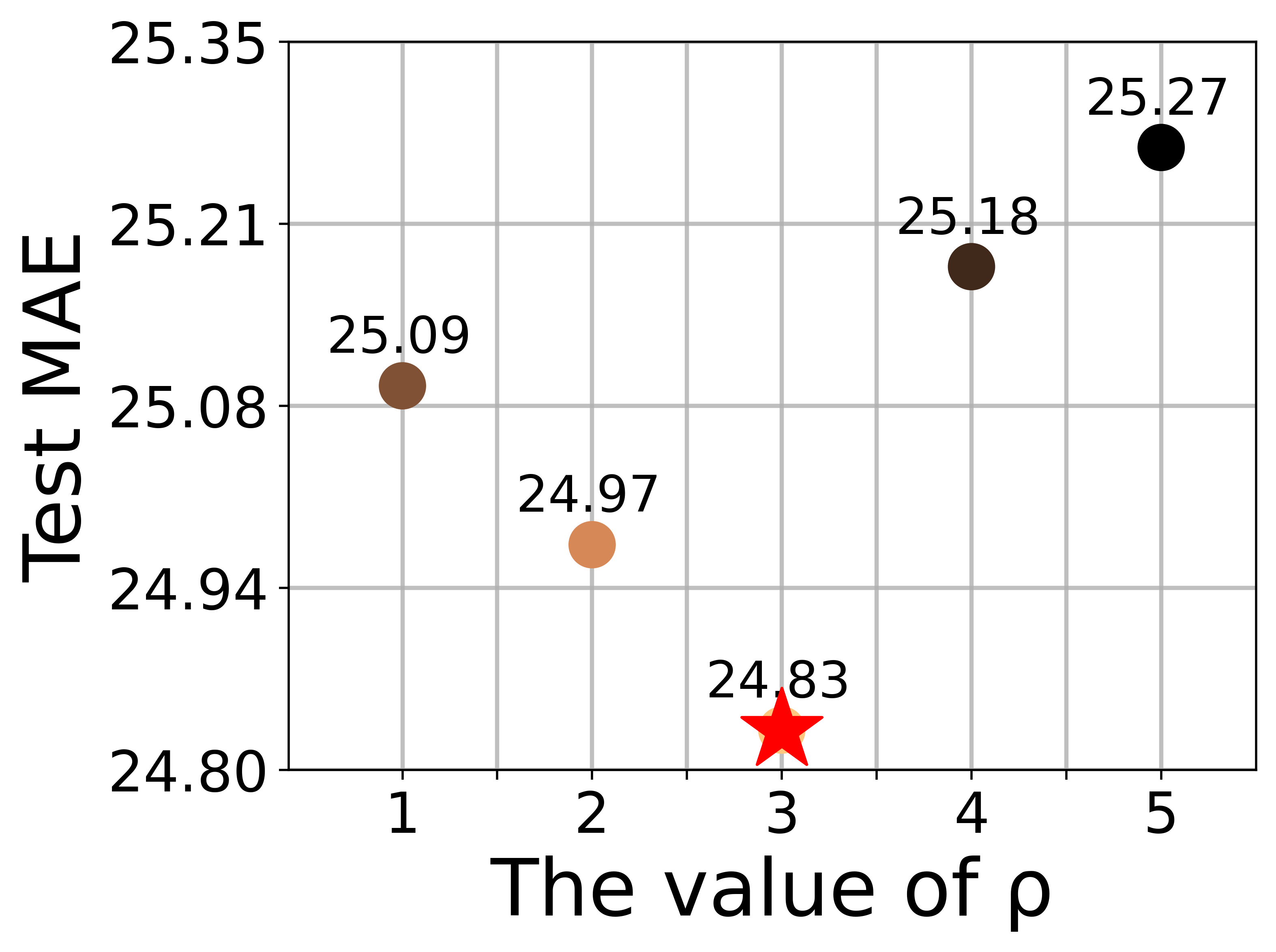}
}
\subfigure[Computer: $\rho$-analysis]{
\label{f_com}
\includegraphics[scale=0.23]{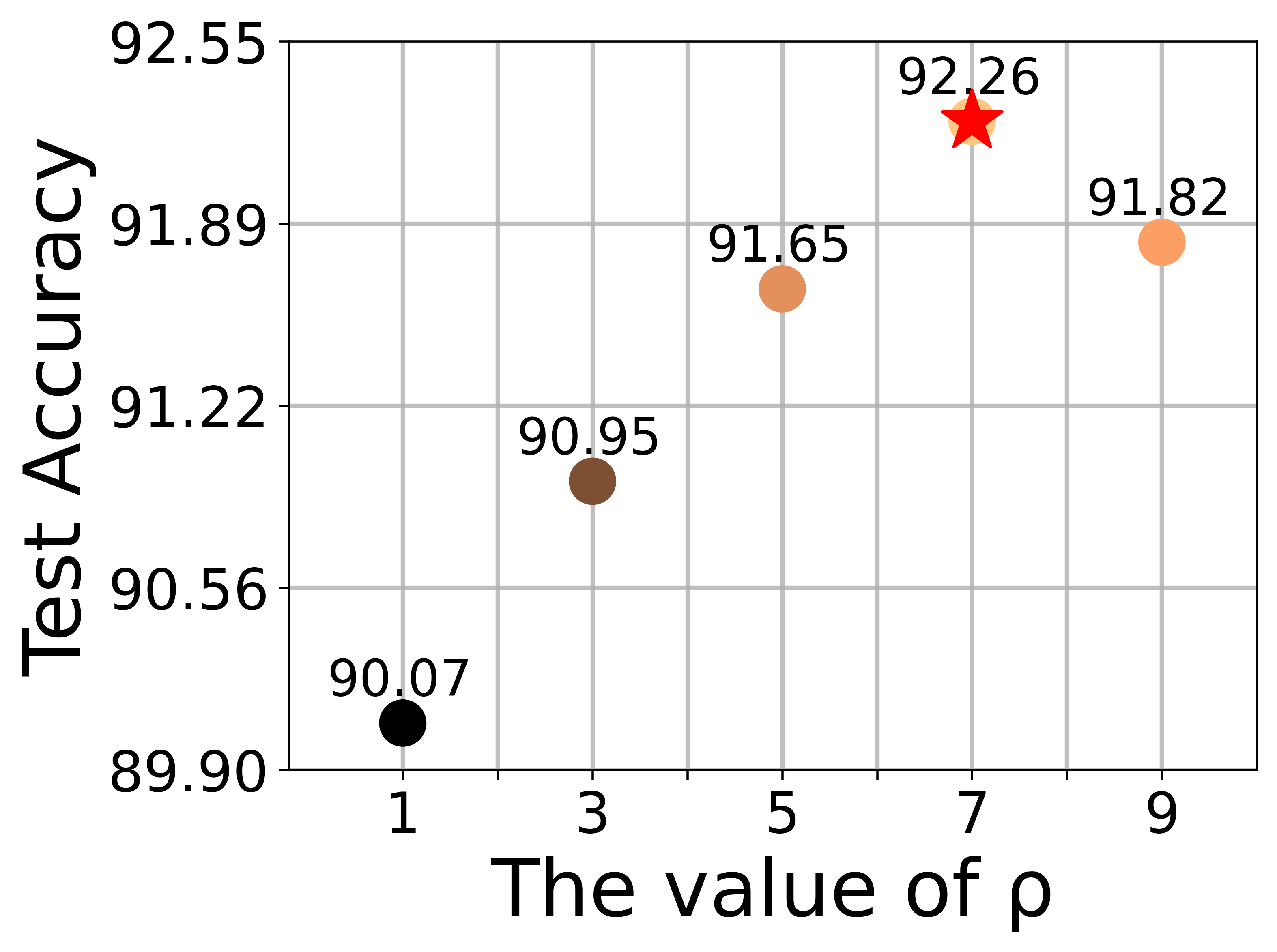}
}
\subfigure[Ps: $J$-analysis]{
\label{j_ps}
\includegraphics[scale=0.23]{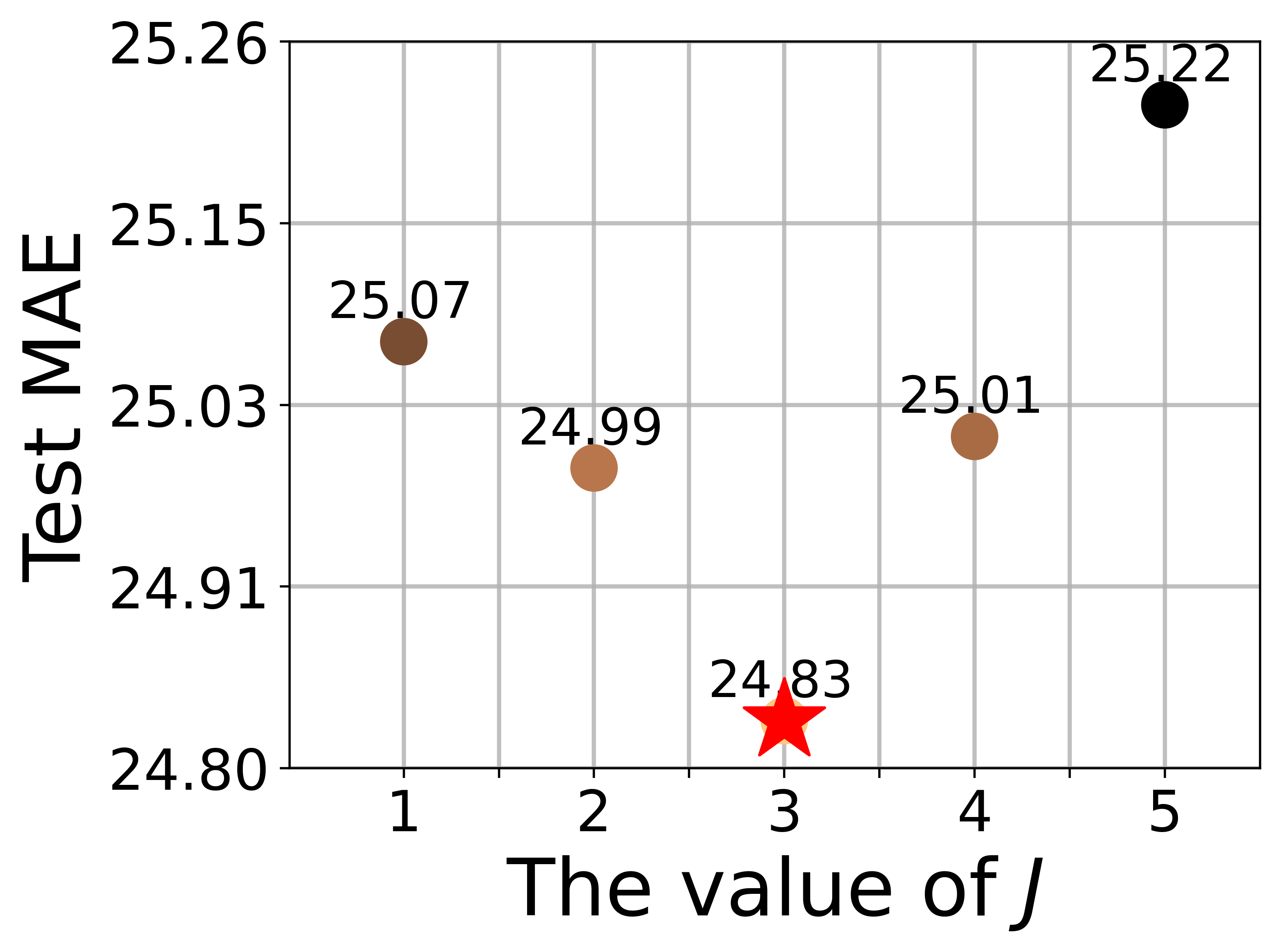}
}
\subfigure[Computer: $J$-analysis]{
\label{j_com}
\includegraphics[scale=0.23]{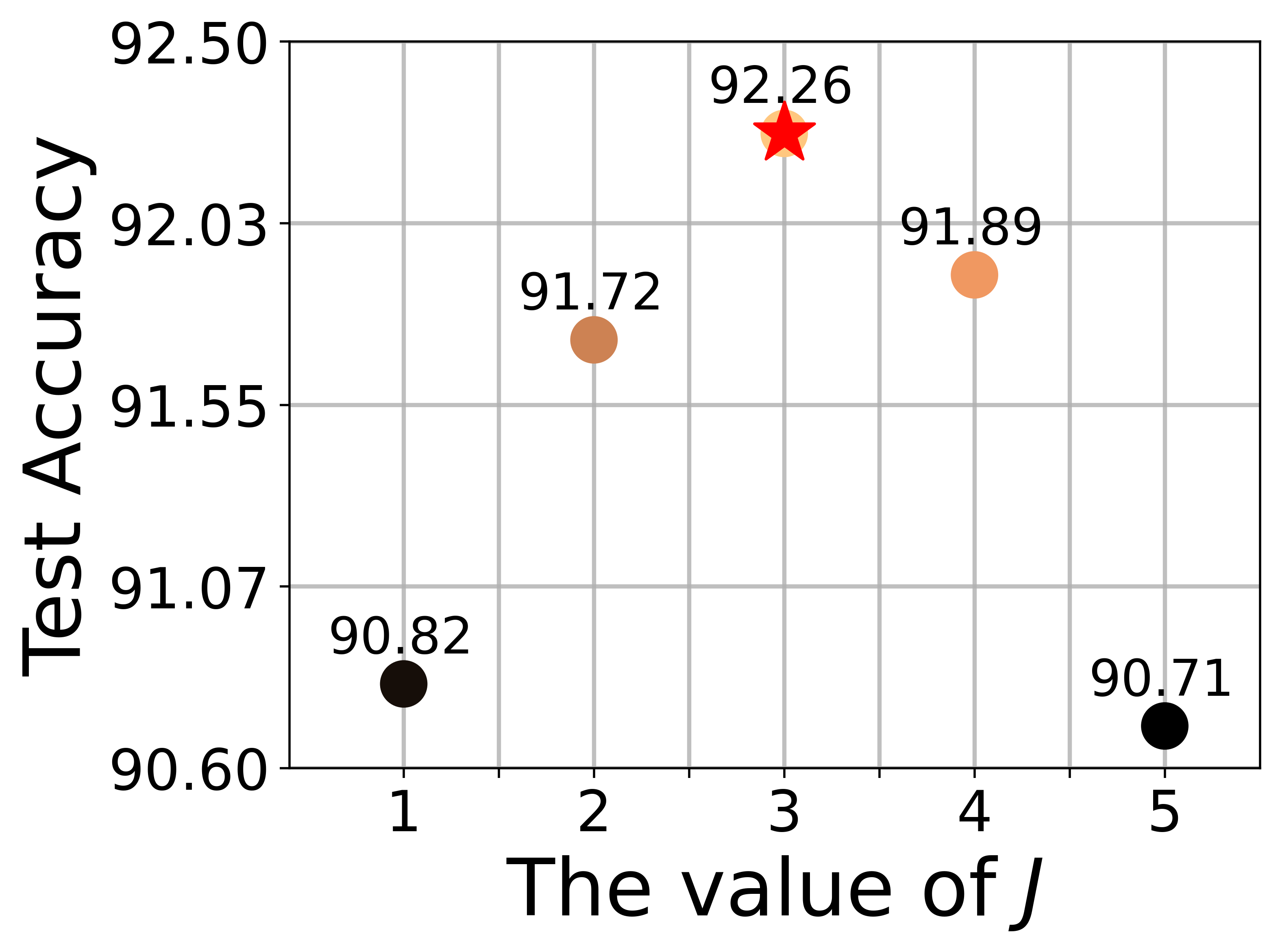}
}
\vspace{-0.25cm}
\caption{Analysis of the sensitivities of $\rho$ and $J$.}
\label{hyper}
\end{figure}

\subsection{Hyper-parameters Settings}
\label{Hyper-parameters settings}
We implement our \M\ in PyTorch, and list some important parameter values in our model in Table~\ref{para_1}. Please note that for the five long-range datasets, we follow the parameter budget $\sim$500k~\citep{lrgb}.
\begin{table}[h]
  \caption{The values of parameters used in \M\ (T: True; F: False).}
  \label{para_1}
  \centering
  \vskip 0.05in
  \setlength{\tabcolsep}{3mm}{
  \begin{tabular}{c|cccccc}
        \bottomrule
       Dataset & \# parameters &$\rho$ & $J$ & $\overline{\bm{s}}$ & Tight frames & $\aleph$ \\
        \bottomrule
       % \multirow{9}{*}{\makecell[c]{Transformer\\+\M}}
       % & CS & 437k&3 & 3 & \{0.5, 0.5, 0.5\} & T & T & 0.1 \\
       % & Photo & 122k&3 & 3 & \{1.0, 1.0, 1.0\} & T & T & 0.1 \\
       % & Computer & 150k&7 & 3 & \{1.0, 1.0, 1.0\} & T & T & 0.1\\
       % & CoraFull & 547k&3 & 3 & \{2.0, 2.0, 2.0\} & T & T & 0.1\\
       % & ogbn-arxiv & 2,091k&3 & 3 & \{0.1, 0.8, 5.0\} & F & T & 0.03\\
       % \cline{2-9}
       % & PascalVOC-SP & 477k&3 & 3 & \{0.5, 1.0, 10.0\} & T & F & / \\
       % & PCQM-Contact & 480k&5 & 3 & \{0.5, 1.0, 5.0\} & T & F & / \\
       % & COCO-SP & 553k&3 & 3 & \{10.0, 10.0, 10.0\} & T & T & /\\
       % & Peptides-func & 467k&5 & 3 & \{10.0, 10.0, 10.0\} & T & T & /\\
       % & Peptides-struct & 534k&2 & 3 & \{0.3, 1.0, 10.0\} & F & F & /\\
       %  \hline
       % \multirow{9}{*}{\makecell[c]{SAN\\+\M}}
       % & CS & 524k&3 & 3 & \{0.5, 0.5, 0.5\} & T & T & 0.1 \\
       % & Photo & 262k&3 & 3 & \{1.0, 1.0, 1.0\} & T & T & 0.01 \\
       % & Computer & 292k&3 & 3 & \{2.0, 2.0, 2.0\} & T & T & 0.1\\
       % & CoraFull & 619k&3 & 2 & \{2.0, 2.0\} & T & T & 0.1\\
       % & ogbn-arxiv & 2,352k&3 & 3 & \{0.1, 0.8, 5.0\} & F & T & 0.03\\
       % \cline{2-9}
       % & PascalVOC-SP & 464k&3 & 3 & \{0.5, 1.0, 10.0\} & T & F & / \\
       % & PCQM-Contact & 411k&3 & 3 & \{0.5, 1.0, 5.0\} & T & F & / \\
       % & COCO-SP & 469k&3 & 3 & \{10.0, 10.0, 10.0\} & T & T & /\\
       % & Peptides-func & 405k&5 & 3 & \{10.0, 10.0, 10.0\} & T & T & /\\
       % & Peptides-struct & 406k&3 & 3 & \{0.3, 1.0, 10.0\} & T & F & /\\
       %  \hline
       % \multirow{9}{*}{\makecell[c]{GraphGPS\\+\M}}
       CS & 495k&3 & 3 & \{0.5, 0.5, 0.5\} & T & 0.1 \\
       Photo & 136k&3 & 3 & \{1.0, 1.0, 1.0\} & T & 0.1 \\
       Computer & 167k&7 & 3 & \{10.0, 10.0, 10.0\} & T & 0.1\\
       CoraFull & 621k&3 & 3 & \{2.0, 2.0, 2.0\} & T & 0.1\\
       ogbn-arxiv & 2,354k&3 & 3 & \{5.0, 5.0, 5.0\} & F & /\\
       \hline
       PascalVOC-SP & 506k&5 & 3 & \{0.5, 1.0, 10.0\} & T & / \\
       PCQM-Contact & 508k&5 & 3 & \{0.5, 1.0, 5.0\} & T & / \\
       COCO-SP & 546k&3  & 3 & \{0.5, 1.0, 10.0\} & T & /\\
       Peptides-func & 496k&5 & 3 & \{10.0, 10.0, 10.0\} & T & /\\
       Peptides-struct & 534k&3 & 3 & \{10.0, 10.0, 10.0\} & F & /\\
        \bottomrule
        
  \end{tabular}
  }
  \vskip -0.1in
\end{table}

\subsection{Operating Environment}
\label{Operating environment}
The environment where our code runs is shown as follows:
\begin{itemize}
    \item Operating system: Linux version 5.11.0-43-generic
    \item CPU information: Intel(R) Xeon(R) Gold 6226R CPU @ 2.90GHz
    \item GPU information: NVIDIA RTX A5000
\end{itemize}

\section{Approximation Strategy for O(N) Complexity}
\label{ssssssss}
\begin{figure}[h]
  \centering
  \begin{minipage}{0.48\textwidth}
    \centering
    \includegraphics[scale=0.26]{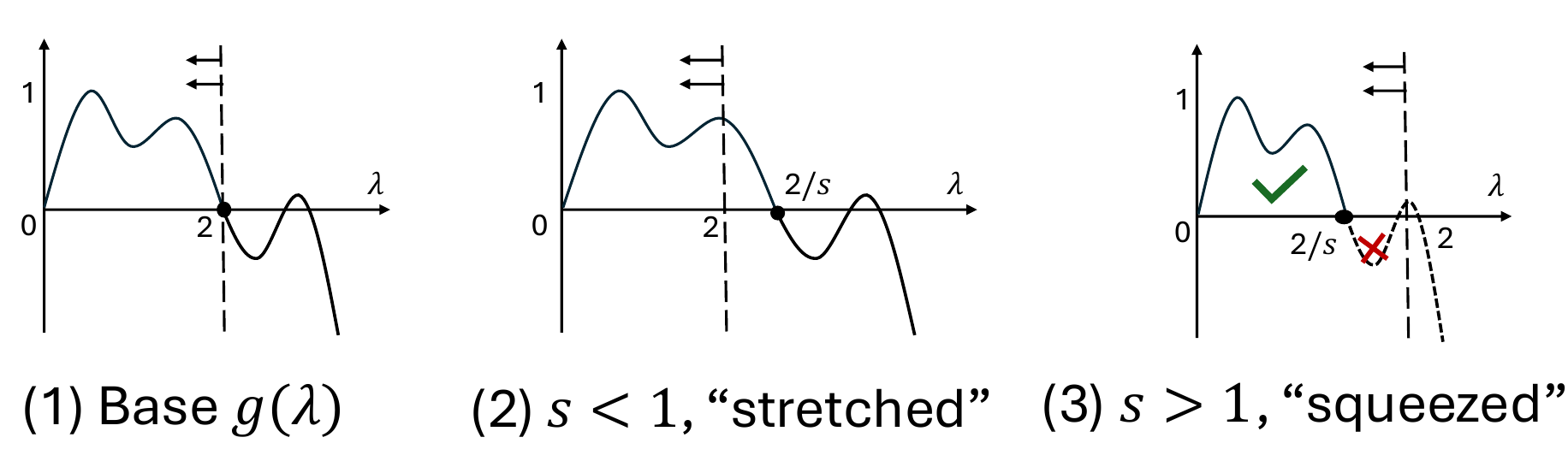}
    \caption{The scale $s$ can “stretch” or “squeeze” the shape of $g(\lambda)$ as $g(s\lambda)$.}
    \label{scalesssss}
  \end{minipage}
  \hfill
  \begin{minipage}{0.48\textwidth}
    \centering
    \includegraphics[scale=0.26]{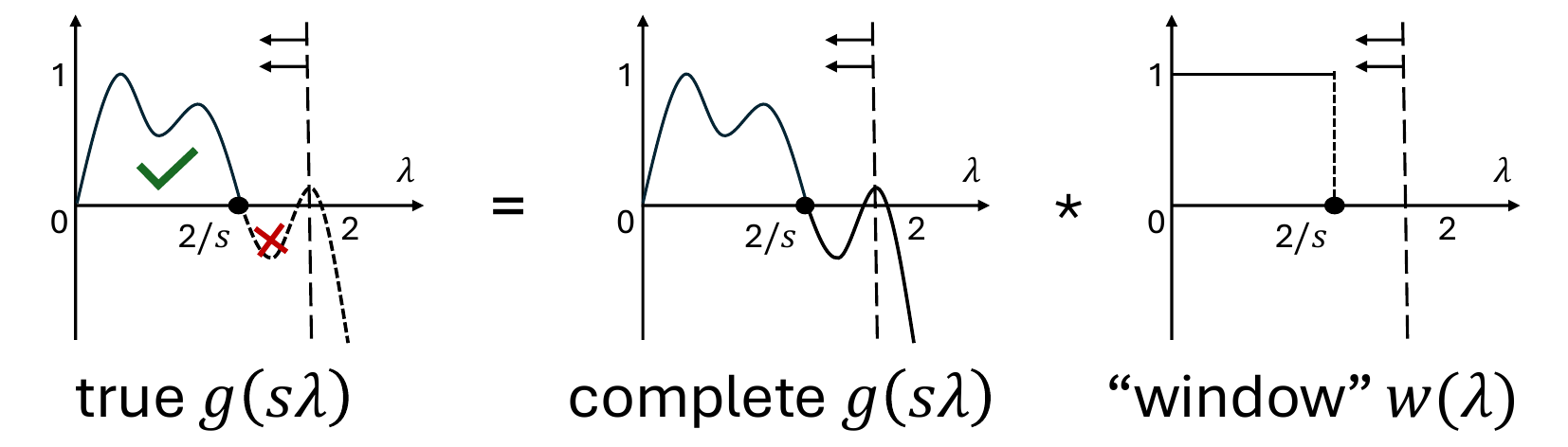}
    \caption{The illustration of applying “window” over $g(s\lambda)$.}
    \label{window}
  \end{minipage}
\end{figure}
To further reduce complexity, we propose a fully polynomial-based approximation that removes the need for eigendecomposition, achieving total complexity of O(N), on par with graph Fourier-basis-based methods. This is achieved via polynomial approximation of the wavelet transform using Chebyshev polynomials:
\begin{itemize}
    \item \textbf{Scaling function $h(\Lambda)$.} Since $h(\bm{\Lambda})=\sum b_iT^o_i(\bm{\Lambda})$, where $T^o_i$ are odd-degree Chebyshev polynomials, we can compute
    \begin{equation}
        \bm{\Phi} \bm{f}=\bm{U}h(\bm{\Lambda})\bm{U}^\top \bm{f}=\sum b_iT_i^o(\bm{L})\bm{f}.
    \end{equation}
    This is equivalent to a polynomial operation over the graph Laplacian $\bm{L}$, which has $O(N)$ complexity.
    \item \textbf{Wavelet Function $g(\Lambda)$.} Similarly, $g(\bm{\Lambda})=\sum a_iT^e_i(\bm{\Lambda})$, where $T^e_i$ are even-degree Chebyshev polynomials, gives
    \begin{equation}
        \bm{\Psi} \bm{f}=\bm{U}g(\bm{\Lambda})\bm{U}^\top \bm{f}=\sum a_iT_i^e(\bm{L})\bm{f},
    \end{equation}
    which is also polynomial in $\bm{L}$ with $O(N)$ cost.
    \item \textbf{Incorporating scale $s$.} The domain $\lambda\in[0,2]$ for $g(\lambda)$ transforms to $\lambda\in[0,2/s]$ in $g(s\lambda)$. This raises two scenarios:
    \begin{itemize}
        \item If $s<1$: The full spectrum [0,2] is covered, and $g(s\lambda)$ remains valid as a polynomial (Fig.~\ref{scalesssss} (2)).
        \item If $s>1$: Only the interval [0,2/s] is valid. The rest of the spectrum [2/s,2] should be suppressed (Fig.~\ref{scalesssss} (3)). To handle this, we apply a \textit{window function} $w(\lambda)$, where:
        \[
w(\lambda) =
\begin{cases}
1 &\quad\lambda\in[0,2/s]\\
0 &\quad\lambda\in[2/s, 2]\\
\end{cases}
\]
The true scaled wavelet becomes $g(s\lambda)=g(s\lambda)\cdot w(\lambda)$. Both $g(s\lambda)$ and $w(\lambda)$ can be approximated using Chebyshev polynomials, so the entire operation remains within $O(N)$ complexity.
    \end{itemize}
\end{itemize}
Using this approach, plus without eigenvalue encoding (EE) and tight frame constraint, we no longer require EVD with the maximum simplification. The resulting model maintains the theoretical structure of WaveGC while gaining substantial computational benefits.

\begin{table}[h]
  \centering
  \begin{minipage}{0.48\textwidth}
    \centering
    \caption{Running time (s) per epoch.}
    \label{ttttt}
    \begin{tabular}{cccc}
    \toprule[1pt]
       & CS& Photo& Computer \\
      \hline
      GPRGNN& 1.1& 0.2& 0.4\\
    BernNet& 1.5& 0.5& 1.3\\
    UniFilter& 5.7& 0.8& 1.5\\
    \hline
    \M\_simplified& 1.4& 0.6& 1.8\\
    \toprule[1pt]
    \end{tabular}
  \end{minipage}
  \hfill
  \begin{minipage}{0.48\textwidth}
    \centering
    \caption{Qualified results on three short-range datasets.}
    \label{rrrrr}
    \begin{tabular}{cccc}
    \toprule[1pt]
       Accuracy $\uparrow$& CS & Photo& Computer \\
      \hline
      GPRGNN        & 95.13& 94.49& 90.82 \\
      BernNet       & 95.42& 94.67& 90.98 \\
      UniFilter     & 95.68& 94.34& 90.07 \\
      \hline
      \M\_simplified& 95.63& 94.90& 91.22\\
      \M &95.89 &95.37 &92.26\\
    \toprule[1pt]
    \end{tabular}
  \end{minipage}
\end{table}

To validate this simplified version, we compared its runtime and accuracy with GPRGNN~\citep{gprgnn}, BernNet~\citep{bern}, and UniFilter~\citep{uni} on three short-range datasets. As shown in Table~\ref{ttttt}, the \textit{\M\_simplified} achieves comparable training time to Fourier-based methods. According to Table 2, it incurs only a small drop in performance compared with \M, confirming that polynomial approximation remains effective even without EVD.
\section{Related Work}
\label{related work}
\textbf{Graph Wavelet Transform.} Graph wavelet transform is a generalization of classical wavelet transform~\citep{cw} into graph domain. SGWT~\citep{gw} defines the computing paradigm on weighted graph via spectral graph theory. Specifically, it defines scaling operation in time field as the scaling on eigenvalues. The authors also prove the localization properties of SGWT in the spatial domain in the limit of fine scales. To accelerate the computation on transform, they additionally present a fast Chebyshev polynomial approximation algorithm. GWNN~\citep{gwnn} chooses heat kernel as the filter to construct the bases. The graph wavelet bases learnt from these methods are not guaranteed as band-pass filters in $\lambda\in[0, 2]$ and thus violate admissibility condition~\citep{cw}. 
UFGCONV~\citep{ufg} defines a framelet-based graph convolution with Haar-type filters. WaveNet~\citep{yang2024wavenet} relies on Haar wavelets as bases, and uses the highest-order scaling function to approximate all the other wavelets and scaling functions. WGGP~\citep{opolka2022adaptive} integrates Gaussian processes with Mexican Hat to represent varying levels of smoothness on the graph.
The above four methods fix the form of the constructed wavelets, extremely limiting the adaptivity to different datasets. 
In this paper, our \M\ constructs band-pass filter and low-pass filter purely depending on the even terms and odd terms of Chebyshev polynomials. In this case, the admissibility condition is strictly guaranteed, and the constructed graph wavelets can be arbitrarily complex and flexible with the number of truncated terms increasing. In addition, SEA-GWNN~\citep{deb2024sea} focuses on the second generation of wavelets, or lifting schemes, which is a different topic from ours.

\textbf{Graph Scattering Transform.} The Scattering Transform constructs a hierarchical, tree-like structure by combining a cascading filter bank (or wavelets), point-wise non-linearity, and a low-pass operator. As introduced by Mallat~\cite{mallat2012group}, this approach guarantees translation invariance and stability to deformations. On one hand, researchers have explored the application of this technique to graph data. Early efforts, such as those by~\cite{zou2020graph},~\cite{gama2019stability}, and GS-SVM~\cite{gao2019geometric}, extended the scattering transform into the graph spectral domain. ST-GST~\cite{pan2020spatio} defined filtering and wavelets for spatio-temporal graphs, deriving the corresponding scattering process. Meanwhile,\cite{scattering} employed lazy diffusion\cite{coifman2006diffusion} as wavelets to construct graph diffusion scattering, demonstrating its stability against deformations based on diffusion distance. HDS-GNN~\cite{asdsd} enhanced GNNs by integrating scattering features from a diffusion scattering network layer by layer, while GGSN~\cite{koke2022graph} introduced further flexibility to each operation. On the other hand, the computational efficiency of this transform, which involves a total of $\sum_{l=1}^L J^l$ filtering operations, poses a significant challenge. To address this, pGST~\cite{ioannidis2020pruned} proposed a pruning strategy, retaining only the higher-energy child signals for each parent node. Scattering GCN~\cite{Scattering_gcn} further optimized the process by selectively using more beneficial wavelets, simplifying the scattering computation.

\textbf{Spectral graph convolution.} Traditional studies on spectral graph convolution mainly concentrate on the design of filter with fixed Fourier bases. One way is to design low-pass filters that smooth signals within neighboring regions. GCN~\cite{gcn} keeps the first two ChebNet~\cite{chebnet} terms with extra tricks, and averages signals between neighbors. PPNP~\cite{appnp} smooths signals in a broader range following PageRank based diffusion. Another way is to design adaptive filters so work in both homophily and heterophily scenarios. ChebNet~\cite{chebnet} approximates universe filter functions with learnable coefficients before each Chebyshev term. FAGCN~\cite{fagcn} proposes self-gating mechanism to adaptively learn more information beyond low-frequency information in GNNs. 

\textbf{Graph Transformer.} Graph Transformer (GT) has attracted considerable attentions on long-range interaction. GT~\cite{gt} proposes to employ Laplacian eigenvectors as PE with randomly flipping their signs. Graphormer~\cite{graphormer} takes the distance of the shortest path between two nodes as spatial encoding, which is involved in attention calculation as a bias. GraphGPS~\cite{gps} provides different choices for PE, consisting of LapPE, RWSE, SignNet and EquivStableLapPE. SGFormer~\cite{SGFormer} is empowered by a simple attention model that can efficiently propagate information among arbitrary nodes. Recently, ~\citet{xing2024less} are the first to reveal the over-globalizing problem in graph transformer, and propose CoBFormer to improve the GT capacity on local modeling with a theoretical guarantee. 

\end{document}